\documentclass{article} 
\usepackage{nips14submit_e,times}
\usepackage{hyperref}
\usepackage{url}
\usepackage[T1]{fontenc}
\usepackage{amsfonts,amsmath,amssymb,amsthm}

\usepackage{algorithm,algorithmic}
\usepackage{paralist}
\usepackage{enumitem}

\usepackage{epsfig}
\usepackage{wrapfig}
\usepackage{graphicx}
\usepackage{subcaption}

\usepackage{newclude}

\usepackage[noadjust]{cite} 
\usepackage{xfrac}

\usepackage{appendix}

\newtheorem{definition}{Definition}
\newtheorem{theorem}{Theorem}
\newtheorem{lemma}{Lemma}
\newtheorem{corollary}{Corollary}

\newtheorem{remark}{Remark}

\newcommand{\ham}{\mathcal{H}}
\newcommand{\expc}{\mathbb{E}}
\newcommand{\xvec}{\mathbf{x}}

\newcommand{\kldist}[2]{D \left(#1 \lVert #2\right)}

\title{On the Information Theoretic Limits\\ of Learning Ising Models}

\author{
Karthikeyan Shanmugam$^{1 \ast}$, Rashish Tandon$^{2\dagger}$, Alexandros G. Dimakis$^{1\ddagger}$, Pradeep Ravikumar$^{2\star}$\\
$^{1}$Department of Electrical and Computer Engineering, $^{2}$Department of Computer Science\\
The University of Texas at Austin, USA\\
$^{\ast}$\texttt{karthiksh@utexas.edu}, $^{\dagger}$\texttt{rashish@cs.utexas.edu}\\
$^{\ddagger}$\texttt{dimakis@austin.utexas.edu}, $^{\star}$\texttt{pradeepr@cs.utexas.edu}
}

\nipsfinalcopy 

\begin{document}

\maketitle

\begin{abstract}
We provide a general framework for computing lower-bounds on the sample complexity of recovering the underlying graphs of Ising models, given $i.i.d.$ samples. While there have been recent results for specific graph classes, these involve fairly extensive technical arguments that are specialized to each specific graph class. In contrast, we isolate two key graph-structural ingredients that can then be used to specify sample complexity lower-bounds. Presence of these structural properties makes the graph class hard to learn. We derive corollaries of our main result that not only recover existing recent results, but also provide lower bounds for novel graph classes not considered previously. We also extend our framework to the random graph setting and derive corollaries for Erd\H{o}s-R\'{e}nyi graphs in a certain dense setting.

\end{abstract}
\vspace*{-0.45cm}
\section{Introduction}
Graphical models provide compact representations of multivariate distributions using graphs that represent Markov conditional independencies in the distribution. They are thus widely used in a number of machine learning domains where there are a large number of random variables, including natural language processing~\cite{Manning:99}, image processing~\cite{Woods78,Hassner78,Cross83}, statistical physics~\cite{Ising25}, and spatial statistics~\cite{Ripley81}. In many of these domains, a key problem of interest is to recover the underlying dependencies, represented by the graph, given samples \emph{i.e.} to estimate the graph of dependencies given instances drawn from the distribution. A common regime where this graph selection problem is of interest is the \emph{high-dimensional} setting, where the number of samples $n$ is potentially smaller than the number of variables $p$. Given the importance of this problem, it is instructive to have \emph{lower bounds} on the sample complexity of any estimator: it clarifies the statistical difficulty of the underlying problem, and moreover it could serve as a certificate of optimality in terms of sample complexity for any estimator that actually achieves this lower bound. 
We are particularly interested in such lower bounds under the structural constraint that the graph lies within a given class of graphs (such as degree-bounded graphs, bounded-girth graphs, and so on).

The simplest approach to obtaining such bounds involves graph counting arguments, and an application of Fano's lemma.  \cite{Bresler08,TR13} for instance derive such bounds for the case of degree-bounded and power-law graph classes respectively. This approach however is purely graph-theoretic, and thus fails to capture the interaction of the graphical model parameters with the graph structural constraints, and thus typically provides suboptimal lower bounds (as also observed in \cite{santhanam2012information}). The other standard approach requires a more complicated argument through Fano's lemma that requires finding a subset of graphs such that (a) the subset is large enough in number, and (b) the graphs in the subset are close enough in a suitable metric, typically the KL-divergence of the corresponding distributions. This approach is however much more technically intensive, and even for the simple classes of bounded degree and bounded edge graphs for Ising models, \cite{santhanam2012information} required fairly extensive arguments in using the above approach to provide lower bounds.

In modern high-dimensional settings, it is becoming increasingly important to incorporate  structural constraints in statistical estimation, and graph classes are a key interpretable structural constraint. But a new graph class would entail an entirely new (and technically intensive) derivation of the corresponding sample complexity lower bounds. In this paper, we are thus interested in isolating the key ingredients required in computing such lower bounds. This key ingredient involves one the following structural characterizations: (1) connectivity by short paths between pairs of nodes, or (2) existence of many graphs that only differ by an edge. As corollaries of this framework, we not only recover the results in \cite{santhanam2012information} for the simple cases of degree and edge bounded graphs, but to several more classes of graphs, for which achievability results have already been proposed\cite{anandkumar2012high}. Moreover, using structural arguments allows us to bring out the dependence of the edge-weights, $\lambda$, on the sample complexity. We are able to show same sample complexity requirements for $d$-regular graphs, as is for degree $d$-bounded graphs, whilst the former class is much smaller. We also extend our framework to the random graph setting, and as a corollary, establish lower bound requirements for the class of Erd\H{o}s-R\'{e}nyi graphs in a dense setting. Here, we show that under a certain scaling of the edge-weights $\lambda$, ${\cal G}_{p,c/p}$ requires exponentially many samples, as opposed to a polynomial requirement suggested from earlier bounds\cite{anandkumar2012high}.
\vspace*{-0.3cm}
\section{Preliminaries and Definitions}
\label{prelim}
\textbf{Notation:} $\mathbb{R}$ represents the real line. $[p]$ denotes the set of integers from $1$ to $p$. Let $\mathbf{1}_{S}$ denote the vector of ones and zeros where $S$ is the set of coordinates containing $1$. Let $A-B$ denote $A \bigcap B^c$ and $A \Delta B$ denote the symmetric difference for two sets $A$ and $B$.

In this work, we consider the problem of learning the graph structure of an Ising model. Ising models are a class of graphical model distributions over binary vectors, characterized by the pair $(G(V,E),\bar{\theta})$, where $G(V,E)$ is an undirected graph on $p$ vertices and $\bar{\theta} \in \mathbb{R}^{{p \choose 2}}: \bar{\theta}_{i,j}=0 ~~\forall (i,j) \notin E,~ \bar{\theta}_{i,j} \neq 0 ~~\forall~ (i,j) \in E$. Let $\mathcal{X} = \{+1,-1\}$. Then, for the pair $(G,\bar{\theta})$, the distribution on $\mathcal{X}^{p}$ is given as:  $f_{G,\bar{\theta}} (\mathbf{x}) =  \frac{1}{Z} \exp \left( \sum \limits_{i,j} \bar{\theta}_{i,j}\, x_i x_j  \right)$ where $\mathbf{x} \in \mathcal{X}^{p}$ and $Z$ is the normalization factor, also known as the \textit{partition function}.
 
  Thus, we obtain a family of distributions by considering a set of edge-weighted graphs ${\cal G}_{\theta}$, where each element of ${\cal G}_{\theta}$ is a pair $(G,\bar{\theta})$. In other words, every member of the class ${\cal G}_{\theta}$ is a weighted undirected graph. Let ${\cal G}$ denote the set of distinct unweighted graphs in the class ${\cal G}_{\theta}$.
  
  A learning algorithm that learns the graph $G$ (and not the weights $\bar{\theta}$) from $n$ independent samples (each sample is a $p$-dimensional binary vector) drawn from the distribution $f_{G,\bar{\theta}}(\cdot)$, is an efficiently computable map $\phi: \chi^{np} \rightarrow {\cal G}$ which maps the input samples $\{\mathbf{x}_1, \ldots \mathbf{x}_n\}$ to an undirected graph $\hat{G} \in {\cal G}$ \emph{i.e.} $\hat{G} = \phi(\xvec_1,\ldots,\xvec_n)$.

   We now discuss two metrics of reliability for such an estimator $\phi$. For a given $(G,\bar{\theta})$, the probability of error (over the samples drawn) is given by $p (G, \bar{\theta}) = \mathrm{Pr} \left( \hat{G} \neq G \right)$. Given a graph class ${\cal G}_{\theta}$, one may consider the maximum probability of error for the map $\phi$, given as:
   \begin{equation}\label{eq:pmaxdef}
    p_{\mathrm{max}}= \max \limits_{(G,\theta) \in {\cal G}_{\theta} }\mathrm{Pr} \left( \hat{G} \neq G \right).
    \end{equation}
    
	The goal of any estimator $\phi$ would be to achieve as low a $p_{\max}$ as possible. Alternatively, there are random graph classes that come naturally endowed with a probability measure $\mu(G,\theta)$ of choosing the graphical model. In this case, the quantity we would want to minimize would be the average probability of error of the map $\phi$, given as:     
     \begin{equation}\label{eq:pavgdef}
       p_{\mathrm{avg}} =  \mathbb{E}_{\mu} \left[ \mathrm{Pr} \left( \hat{G} \neq G \right) \right]
     \end{equation}
     
     In this work, we are interested in answering the following question: For any estimator $\phi$, what is the minimum number of samples $n$, needed to guarantee an asymptotically small $p_{max}$ or $p_{avg}$ ? The answer depends on ${\cal G}_{\theta}$ and $\mu$(when applicable).
     
For the sake of simplicity, we impose the following restrictions\footnote{Note that a lower bound for a restricted subset of a class of Ising models will also serve as a lower bound for the class without that restriction.}: We restrict to the set of \emph{zero-field} \textit{ferromagnetic}  Ising models, where \emph{zero-field} refers to a lack of node weights, and \emph{ferromagnetic} refers to all positive edge weights. Further, we will restrict all the non-zero edge weights ($\bar{\theta}_{i,j}$) in the graph classes to be the same, set equal to $\lambda>0$. Therefore, for a given $G(V,E)$, we have $\bar{\theta} = \lambda \mathbf{1}_{E} $ for some $\lambda>0$. A deterministic graph class is described by a scalar $\lambda>0$ and the family of graphs ${\cal G}$. In the case of a random graph class, we describe it by a scalar $\lambda>0$ and a probability measure $\mu$, the measure being solely on the structure of the graph $G$ (on ${\cal G}$).
  
  	Since we have the same weight $\lambda (>0)$ on all edges, henceforth we will skip the reference to it, i.e. the graph class will simply be denoted ${\cal G}$ and for a given $G \in {\cal G}$, the distribution will be denoted by $f_{G}(\cdot)$, with the dependence on $\lambda$ being implicit. Before proceeding further, we summarize the following additional notation. For any two distributions $f_{G}$ and $f_{G'}$, corresponding to the graphs $G$ and $G'$ respectively, we denote the Kullback-Liebler divergence (KL-divergence) between them as $\kldist{f_G}{f_{G'}} = \sum_{x\in\mathcal{X}^p} f_G(x)\log\left(\frac{f_G(x)}{f_{G'}(x)}\right)$. For any subset ${\cal T}\subseteq {\cal G}$, we let $C_{{\cal T}}(\epsilon)$ denote an $\epsilon$-covering w.r.t. the KL-divergence (of the corresponding distributions) \emph{i.e.} $C_{{\cal T}}(\epsilon) (\subseteq {\cal G})$ is a set of graphs such that for any $G\in {\cal T}$, there exists a $G'\in C_{\cal T}(\epsilon)$ satisfying $\kldist{f_G}{f_{G'}}\leq \epsilon$. We denote the entropy of any r.v. $X$ by $H(X)$, and the mutual information between any two r.v.s $X$ and $Y$, by $I(X;Y)$. The rest of the paper is organized as follows. Section \ref{main_res} describes Fano's lemma, a basic tool employed in computing information-theoretic lower bounds. Section \ref{struct_corr} identifies key structural properties that lead to large sample requirements. Section \ref{app_graph} applies the results of Sections \ref{main_res} and \ref{struct_corr} on a number of different deterministic graph classes to obtain lower bound estimates. Section \ref{sec:lowerbound_random} obtains lower bound estimates for Erd\H{o}s-R\'{e}nyi random graphs in a \emph{dense} regime. All proofs can be found in the Appendix (see supplementary material).
\vspace*{-0.3cm}
\section{Fano's Lemma and Variants}
\label{main_res}
Fano's lemma \cite{thomascover2006} is a primary tool for obtaining bounds on the average probability of error, $p_{avg}$. It provides a lower bound on the probability of error of any estimator $\phi$ in terms of the entropy $H(\cdot)$ of the output space, the cardinality of the output space, and the mutual information $I(\cdot\, ,\,\cdot)$ between the input and the output. The case of $p_{max}$ is interesting only when we have a deterministic graph class $\mathcal{G}$, and can be handled through Fano's lemma again by considering a uniform distribution on the graph class.

\begin{lemma}[Fano's Lemma]\label{lem:fano_orig}
	Consider a graph class $\mathcal{G}$ with measure $\mu$. Let, $G\sim \mu$, and let $X^n = \{\xvec_1,\ldots,\xvec_n\}$ be $n$ independent samples such that $\xvec_i\sim f_G,\,i\in [n]$. Then, for $p_{max}$ and $p_{avg}$ as defined in \eqref{eq:pmaxdef} and \eqref{eq:pavgdef} respectively,
	\begin{equation}
		p_{max}\geq p_{avg} \geq \frac{H(G) - I(G;X^n) - \log 2}{\log \lvert \mathcal{G}\rvert}
	\end{equation}
\end{lemma}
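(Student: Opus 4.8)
The first inequality, $p_{max}\geq p_{avg}$, is immediate: $p_{avg}=\mathbb{E}_{\mu}\left[\Pr(\hat G\neq G)\right]$ is an average of the per-graph error probabilities $p(G,\bar\theta)=\Pr(\hat G\neq G)$, each of which is at most the maximum $p_{max}$. So the content is entirely in the second inequality, which is the classical Fano bound adapted to our setting, and I would prove it by the standard entropy-expansion argument.

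The plan is to introduce the error indicator $E=\mathbf{1}[\hat G\neq G]$, a binary random variable with $\Pr(E=1)=p_{avg}$, and to expand the conditional entropy $H(E,G\mid \hat G)$ two ways via the chain rule. First, $E$ is a deterministic function of the pair $(G,\hat G)$, so $H(E,G\mid \hat G)=H(G\mid \hat G)+H(E\mid G,\hat G)=H(G\mid \hat G)$. Second, $H(E,G\mid \hat G)=H(E\mid \hat G)+H(G\mid E,\hat G)\leq H(E)+H(G\mid E,\hat G)\leq \log 2 + H(G\mid E,\hat G)$, the last step because $E$ is binary. For the remaining term, condition on the value of $E$: when $E=0$ we have $G=\hat G$, so the conditional entropy vanishes, while when $E=1$ the variable $G$ ranges over at most $\lvert\mathcal{G}\rvert-1$ values, so $H(G\mid E,\hat G)\leq p_{avg}\log(\lvert\mathcal{G}\rvert-1)\leq p_{avg}\log\lvert\mathcal{G}\rvert$. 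Combining these gives $H(G\mid \hat G)\leq \log 2 + p_{avg}\log\lvert\mathcal{G}\rvert$.

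Next I would rewrite the left-hand side in terms of the quantities appearing in the statement. Using $H(G\mid \hat G)=H(G)-I(G;\hat G)$, and noting that $\hat G=\phi(X^n)$ is a deterministic function of the samples so that $G\to X^n\to \hat G$ forms a Markov chain, the data processing inequality gives $I(G;\hat G)\leq I(G;X^n)$, hence $H(G\mid \hat G)\geq H(G)-I(G;X^n)$. Chaining with the previous bound yields $H(G)-I(G;X^n)\leq \log 2 + p_{avg}\log\lvert\mathcal{G}\rvert$, and rearranging produces exactly $p_{avg}\geq \frac{H(G)-I(G;X^n)-\log 2}{\log\lvert\mathcal{G}\rvert}$.

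I do not expect a genuine obstacle here, since this is the textbook Fano inequality; the only points needing care are (i) observing that it is $\hat G$ being a function of $X^n$ (rather than of $G$) that licenses the data-processing step, and (ii) the bookkeeping that $G$ has at most $\lvert\mathcal{G}\rvert-1$ alternatives on the error event. The substantive work associated with this lemma appears later, when one must lower bound $H(G)$ and, more delicately, upper bound $I(G;X^n)$ for concrete graph classes; that is where the structural arguments of the subsequent sections come in.
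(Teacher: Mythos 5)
Your proof is correct and follows essentially the same route as the paper: the paper simply cites the textbook Fano inequality (Cover--Thomas, Theorem 2.10.1) for the step you derive from scratch via the error-indicator/chain-rule argument, then applies the data-processing inequality to pass from $I(G;\hat G)$ to $I(G;X^n)$ and notes $p_{avg}\leq p_{max}$, exactly as you do. The only difference is that you unpack the standard citation; there is no gap.
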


Thus in order to use this Lemma, we need to bound two quantities: the entropy $H(G)$, and the mutual information $I(G;X^n)$. The entropy can typically be obtained or bounded very simply; for instance, with a uniform distribution over the set of graphs $\mathcal{G}$, $H(G) = \log |\mathcal{G}|$. The mutual information is a much trickier object to bound however, and is where the technical complexity largely arises. We can however simply obtain the following loose bound: $I(G;X^n)\leq H(X^n)\leq np$. We thus arrive at the following corollary:

\begin{corollary}\label{corr:pmax_all}
	Consider a graph class $\mathcal{G}$. Then, $p_{max} \geq 1 - \frac{np+ \log 2}{\log \lvert \mathcal{G}\rvert}$.
\end{corollary}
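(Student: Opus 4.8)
The plan is to obtain this as an immediate specialization of Fano's Lemma (Lemma~\ref{lem:fano_orig}) to the uniform measure, combined with the crude information bound already mentioned in the text. First I would instantiate $\mu$ to be the uniform distribution over the (finite) graph class $\mathcal{G}$. Under this choice the graph $G$ is a uniform random variable on a set of size $\lvert \mathcal{G}\rvert$, so its entropy is exactly $H(G) = \log\lvert \mathcal{G}\rvert$. Plugging this into the bound of Lemma~\ref{lem:fano_orig} gives
\begin{equation}
	p_{max} \geq p_{avg} \geq \frac{\log\lvert\mathcal{G}\rvert - I(G;X^n) - \log 2}{\log\lvert\mathcal{G}\rvert} = 1 - \frac{I(G;X^n) + \log 2}{\log\lvert\mathcal{G}\rvert}.
\end{equation}

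Next I would control the mutual information term. Since mutual information is bounded by the entropy of either argument, $I(G;X^n) \leq H(X^n)$, and since $X^n = \{\xvec_1,\ldots,\xvec_n\}$ consists of $np$ binary coordinates, subadditivity of entropy gives $H(X^n) \leq np\log 2 \leq np$ (the last inequality holding for either natural or binary logarithms). Substituting $I(G;X^n)\leq np$ into the displayed inequality yields $p_{max} \geq 1 - \frac{np + \log 2}{\log\lvert\mathcal{G}\rvert}$, which is exactly the claim.

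There is no real obstacle here: the statement is a direct corollary, and the only points requiring the slightest care are (i) that the uniform measure is admissible, for which we just need $\mathcal{G}$ to be finite (which it is, being a family of graphs on $p$ vertices), so that $H(G)$ is well defined and equals $\log\lvert\mathcal{G}\rvert$; and (ii) that the entropy bound $H(X^n)\le np$ is stated in a way consistent with the logarithm convention used throughout, which it is. The conceptual content — and the technical difficulty the rest of the paper addresses — lies entirely in replacing the trivial estimate $I(G;X^n)\leq np$ by a sharper, structure-dependent bound; this corollary merely records the weakest version to set the stage.
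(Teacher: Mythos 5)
Your proof is correct and follows exactly the paper's own argument: instantiate Lemma~\ref{lem:fano_orig} with the uniform measure so that $H(G)=\log\lvert\mathcal{G}\rvert$, then apply the crude bound $I(G;X^n)\leq H(X^n)\leq np$. The extra remarks on finiteness of $\mathcal{G}$ and the logarithm convention are harmless elaborations of the same one-line derivation.
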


\begin{remark}
From Corollary \ref{corr:pmax_all}, we get: If $n \leq \frac{\log \lvert\mathcal{G}\rvert}{p}\left((1-\delta) - \frac{\log 2}{\log \lvert{\cal G}\rvert}\right)$, then $p_{max} \geq  \delta$. Note that this bound on $n$ is only in terms of the cardinality of the graph class ${\cal G}$, and therefore, would not involve any dependence on $\lambda$ (and consequently, be very loose).
\end{remark}

To obtain sharper lower bound guarantees that depends on graphical model parameters, it is useful to consider instead a conditional form of Fano's lemma\cite[Lemma 9]{anandkumar2012high}, which allows us to obtain lower bounds on $p_{avg}$ in terms conditional analogs of the quantities in Lemma \ref{lem:fano_orig}. For the case of $p_{max}$, these conditional analogs correspond to uniform measures on subsets of the original class ${\cal G}$. The conditional version allows us to focus on potentially harder to learn subsets of the graph class, leading to sharper lower bound guarantees. Also, for a random graph class, the entropy $H(G)$ may be asymptotically much smaller than the log cardinality of the graph class, $\log \lvert{\cal G}\rvert$ (e.g. Erd\H{o}s-R\'{e}nyi random graphs; see Section \ref{sec:lowerbound_random}), rendering the bound in Lemma \ref{lem:fano_orig} useless. The conditional version allows us to circumvent this issue by focusing on a \emph{high-probability} subset of the graph class.

\begin{lemma}[Conditional Fano's Lemma]\label{lem:fano_con}
	Consider a graph class $\mathcal{G}$ with measure $\mu$. Let, $G\sim \mu$, and let $X^n = \{\xvec_1,\ldots,\xvec_n\}$ be $n$ independent samples such that $\xvec_i\sim f_G,\,i\in [n]$. Consider any ${\cal T}\subseteq {\cal G}$ and let  $\mu\left({\cal T}\right)$ be the measure of this subset \emph{i.e.} $\mu\left({\cal T}\right) = \mathrm{Pr}_{\mu}\left(G\in {\cal T}\right)$. Then, we have
	\begin{align}
	p_{avg} &\geq \mu\left({\cal T}\right)\frac{H(G \vert G\in {\cal T}) - I(G;X^n \vert G\in {\cal T}) - \log 2}{\log \lvert{\cal T}\rvert}\quad \text{and,}\nonumber \\
	p_{max} &\geq \frac{H(G \vert G\in {\cal T}) - I(G;X^n \vert G\in {\cal T}) - \log 2}{\log \lvert{\cal T}\rvert}\nonumber
	\end{align}
\end{lemma}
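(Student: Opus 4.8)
The plan is to reduce the conditional statement to the unconditional Fano's Lemma (Lemma~\ref{lem:fano_orig}) applied to a restricted learning problem supported on $\mathcal{T}$. We may assume $\mu(\mathcal{T})>0$ and $\lvert\mathcal{T}\rvert\geq 2$; otherwise the claimed inequalities are either vacuous ($\mu(\mathcal{T})=0$ makes the first bound trivial, while the $p_{max}$ bound follows from $p_{max}\geq\Pr(\hat G\neq G)$ for any single $G\in\mathcal{T}$) or the right-hand side is degenerate ($\log\lvert\mathcal{T}\rvert=0$). Consider the auxiliary experiment in which $G'$ is drawn from the conditional law $\mu(\cdot\mid G\in\mathcal{T})$, which is supported inside $\mathcal{T}$, and then $X^n$ is drawn i.i.d.\ from $f_{G'}$. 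The estimator $\phi$ remains a valid map into $\mathcal{G}$ for this experiment, and its average error there is precisely $q:=\mathbb{E}_\mu\!\left[\Pr(\hat G\neq G)\mid G\in\mathcal{T}\right]$.

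First I would relate both $p_{avg}$ and $p_{max}$ to $q$. By the law of total expectation and nonnegativity of the complementary term,
\[
p_{avg}=\mu(\mathcal{T})\,q+\mu(\mathcal{T}^c)\,\mathbb{E}_\mu\!\left[\Pr(\hat G\neq G)\mid G\notin\mathcal{T}\right]\ \geq\ \mu(\mathcal{T})\,q .
\]
For the max error, since $p_{max}\geq \Pr(\hat G\neq G)$ for every fixed $G\in\mathcal{T}$, averaging over $G\sim\mu(\cdot\mid G\in\mathcal{T})$ gives $p_{max}\geq q$ as well. It therefore suffices to lower bound $q$.

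Next I would apply Lemma~\ref{lem:fano_orig} to the auxiliary experiment, whose estimand $G'$ takes at most $\lvert\mathcal{T}\rvert$ values. Re-running the standard Fano argument conditioned on the event $\{G\in\mathcal{T}\}$ --- namely $H(G\mid\hat G,G\in\mathcal{T})\leq \log 2 + q\log\lvert\mathcal{T}\rvert$ from the conditional form of Fano's inequality, together with the conditional data-processing bound $I(G;\hat G\mid G\in\mathcal{T})\leq I(G;X^n\mid G\in\mathcal{T})$ --- yields
\[
q\ \geq\ \frac{H(G\mid G\in\mathcal{T}) - I(G;X^n\mid G\in\mathcal{T}) - \log 2}{\log\lvert\mathcal{T}\rvert}.
\]
Plugging this into $p_{avg}\geq\mu(\mathcal{T})\,q$ and $p_{max}\geq q$ gives the two stated inequalities.

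I expect the only delicate point to be the bookkeeping around which cardinality enters the denominator: one must verify that the relevant quantity is $\lvert\mathcal{T}\rvert$, the size of the range of the conditioned estimand $G'$, rather than $\lvert\mathcal{G}\rvert$, the range of $\phi$ --- the standard Fano argument indeed produces $\log\lvert\mathcal{T}\rvert$ even though $\phi$ may occasionally output graphs outside $\mathcal{T}$ (which can only increase the error). Everything else is a direct conditional transcription of the proof of Lemma~\ref{lem:fano_orig}.
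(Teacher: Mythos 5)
Your proposal is correct and follows essentially the same route as the paper: both reduce to the conditional average error $q=\mathbb{E}_\mu[\Pr(\hat G\ne G)\mid G\in\mathcal{T}]$, lower-bound $p_{avg}$ by $\mu(\mathcal{T})\,q$ via total expectation, bound $p_{max}\geq q$ by comparing with the maximum over $\mathcal{T}\subseteq\mathcal{G}$, and control $q$ by the conditional form of Fano's inequality. The only difference is that the paper invokes the conditional Fano inequality as a cited black box while you sketch its derivation (correctly, including the observation that the denominator is $\log\lvert\mathcal{T}\rvert$ despite $\phi$ ranging over $\mathcal{G}$).
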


Given Lemma \ref{lem:fano_con}, or even Lemma \ref{lem:fano_orig}, it is the sharpness of an upper bound on the mutual information that governs the sharpness of lower bounds on the probability of error (and effectively, the number of samples $n$). In contrast to the trivial upper bound used in the corollary above, we next use a tighter bound from \cite{yang1999information}, which relates the mutual information to coverings in terms of the KL-divergence, applied to Lemma \ref{lem:fano_con}. Note that, as stated earlier, we simply impose a uniform distribution on ${\cal G}$ when dealing with $p_{max}$. Analogous bounds can be obtained for $p_{avg}$.

\begin{corollary}\label{corr:pmax_cover}
	Consider a graph class $\mathcal{G}$, and any ${\cal T}\subseteq {\cal G}$. Recall the definition of $C_{{\cal T}}(\epsilon)$ from Section \ref{prelim}. For any $\epsilon >0$, we have $p_{max}\geq \left( 1 - \frac{\log \lvert C_{\mathcal{T}}(\epsilon)\rvert + n\epsilon + \log 2}{\log \lvert\mathcal{T}\rvert}\right)$.
\end{corollary}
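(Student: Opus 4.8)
The plan is to combine the conditional Fano bound of Lemma \ref{lem:fano_con}, instantiated with the uniform measure on $\mathcal{G}$, with a covering-based upper bound on the conditional mutual information in the style of \cite{yang1999information}. Since we impose $\mu$ uniform on $\mathcal{G}$, conditioning on the event $G\in\mathcal{T}$ makes $G$ uniform on $\mathcal{T}$, so $H(G\mid G\in\mathcal{T}) = \log\lvert\mathcal{T}\rvert$. Substituting this into the $p_{max}$ inequality of Lemma \ref{lem:fano_con} gives
$$p_{max}\geq \frac{\log\lvert\mathcal{T}\rvert - I(G;X^n\mid G\in\mathcal{T}) - \log 2}{\log\lvert\mathcal{T}\rvert},$$
so it suffices to establish $I(G;X^n\mid G\in\mathcal{T})\leq \log\lvert C_{\mathcal{T}}(\epsilon)\rvert + n\epsilon$, after which rearranging yields the claimed bound.

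For the mutual information estimate I would use the variational ("golden formula") characterization: writing $f_G^{\otimes n}$ for the law of $n$ i.i.d. samples from $f_G$ and $P_{X^n}$ for the induced marginal of $X^n$ (under $G$ uniform on $\mathcal{T}$), for \emph{any} fixed distribution $Q$ on $\mathcal{X}^{np}$ one has
$$\mathbb{E}_{G}\!\left[\kldist{f_G^{\otimes n}}{Q}\right] = I(G;X^n\mid G\in\mathcal{T}) + \kldist{P_{X^n}}{Q}\;\geq\; I(G;X^n\mid G\in\mathcal{T}),$$
where the expectation is over $G$ uniform on $\mathcal{T}$. I then choose $Q$ to be the uniform mixture over the KL-covering set, $Q = \frac{1}{\lvert C_{\mathcal{T}}(\epsilon)\rvert}\sum_{G'\in C_{\mathcal{T}}(\epsilon)} f_{G'}^{\otimes n}$.

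The key step is to bound $\kldist{f_G^{\otimes n}}{Q}$ for each $G\in\mathcal{T}$. By definition of $C_{\mathcal{T}}(\epsilon)$, pick $G' = G'(G)\in C_{\mathcal{T}}(\epsilon)$ with $\kldist{f_G}{f_{G'}}\leq\epsilon$; tensorization over the $n$ i.i.d. coordinates gives $\kldist{f_G^{\otimes n}}{f_{G'}^{\otimes n}} = n\,\kldist{f_G}{f_{G'}}\leq n\epsilon$. Since $Q(x)\geq \frac{1}{\lvert C_{\mathcal{T}}(\epsilon)\rvert}f_{G'}^{\otimes n}(x)$ pointwise, we get $\log\frac{f_G^{\otimes n}(x)}{Q(x)}\leq \log\lvert C_{\mathcal{T}}(\epsilon)\rvert + \log\frac{f_G^{\otimes n}(x)}{f_{G'}^{\otimes n}(x)}$; taking expectation under $f_G^{\otimes n}$ yields $\kldist{f_G^{\otimes n}}{Q}\leq \log\lvert C_{\mathcal{T}}(\epsilon)\rvert + n\epsilon$. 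Averaging over $G$ and combining with the golden formula gives $I(G;X^n\mid G\in\mathcal{T})\leq \log\lvert C_{\mathcal{T}}(\epsilon)\rvert + n\epsilon$, which plugged into the Fano inequality above completes the proof.

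I do not expect a genuine obstacle here: the statement is essentially a bookkeeping combination of two standard tools. The only points requiring a little care are keeping the conditioning on $\{G\in\mathcal{T}\}$ consistent throughout (the golden formula and the tensorization identity both being applied to the law of $G$ conditioned on $G\in\mathcal{T}$), and noting that although $C_{\mathcal{T}}(\epsilon)$ is a subset of $\mathcal{G}$ rather than of $\mathcal{T}$, it enters the argument only through the auxiliary mixture $Q$, so no containment or measurability issue arises.
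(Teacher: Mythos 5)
Your proposal is correct and follows essentially the same route as the paper: conditional Fano's lemma with the uniform measure (so $H(G\mid G\in\mathcal{T})=\log\lvert\mathcal{T}\rvert$), then the Yang--Barron bound on $I(G;X^n\mid G\in\mathcal{T})$ obtained by replacing the marginal with the uniform mixture over $C_{\mathcal{T}}(\epsilon)$ and lower-bounding that mixture pointwise by the single covering element. Your "golden formula" step is just the explicit justification of the paper's inequality (b), so the two arguments coincide.
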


\begin{remark}\label{rem:pmax_cover}
From Corollary \ref{corr:pmax_cover}, we get: If $n \leq \frac{\log \lvert{\cal T}\rvert}{\epsilon}\left((1-\delta) - \frac{\log 2}{\log \lvert{\cal T}\rvert} - \frac{\log \lvert C_{\cal T}(\epsilon) \rvert}{\log \lvert{\cal T}\rvert}\right)$, then $p_{max}\geq\delta$. $\epsilon$ is an upper bound on the radius of the KL-balls in the covering, and usually varies with $\lambda$.
\end{remark}

But this corollary cannot be immediately used given a graph class: it requires us to specify a subset ${\cal T}$ of the overall graph class, the term $\epsilon$, and the $KL$-covering $C_{\cal T}(\epsilon)$.

We can simplify the bound above by setting $\epsilon$ to be the radius of a single KL-ball w.r.t. some center, covering the whole set ${\cal T}$. Suppose this radius is $\rho$, then the size of the covering set is just 1. In this case, from Remark \ref{rem:pmax_cover}, we get: If $n \leq \frac{\log \lvert{\cal T}\rvert}{\rho}\left((1-\delta) - \frac{\log 2}{\log \lvert{\cal T}\rvert}\right)$, then $p_{max}\geq\delta$. Thus, our goal in the sequel would be to provide a general mechanism to derive such a subset ${\cal T}$: that is large in number and yet has small diameter with respect to KL-divergence.

We note that Fano's lemma and variants described in this section are standard, and have been applied to a number of problems in statistical estimation \cite{yang1999information, anandkumar2012high, santhanam2012information, raskutti2011, zhang2013}.
\vspace*{-0.3cm}
\section{Structural conditions governing Correlation}
\label{struct_corr}
As discussed in the previous section, we want to find subsets ${\cal T}$ that are large in size, and yet have a small KL-diameter. In this section, we summarize certain structural properties that result in small KL-diameter. Thereafter, finding a large set ${\cal T}$ would amount to finding a large number of graphs in the graph class ${\cal G}$ that satisfy these structural properties.

As a first step, we need to get a sense of when two graphs would have corresponding distributions with a small KL-divergence. To do so, we need a general upper bound on the KL-divergence between the corresponding distributions. A simple strategy is to simply bound it by its symmetric divergence\cite{santhanam2012information}. In this case, a little calculation shows :
	\begin{align}\label{KLexp}
              D \left(f_G \lVert f_{G'} \right) & \leq D \left(f_G \lVert f_{G'} \right) + D \left(f_{G'} \Vert f_G \right) \nonumber \\
              \hfill	& = \sum \limits_{(s,t) \in E\setminus E'} \lambda \left( \mathbb{E}_{G} \left[x_s x_t \right] - \mathbb{E}_{G'}\left[x_s x_t \right] \right) + \sum \limits_{(s,t) \in E'\setminus E} \lambda \left( \mathbb{E}_{G'} \left[x_s x_t \right] - \mathbb{E}_{G}\left[x_s x_t \right] \right)
               \end{align}
               where $E$ and $E'$ are the edges in the graphs $G$ and $G'$ respectively, and $\expc_{G}[\cdot]$ denotes the expectation under $f_G$. Also note that the correlation between $x_s$ and $x_t$, $\expc_G[x_s x_t] = 2P_G(x_s x_t = +1) - 1$.
               
               From Eq. \eqref{KLexp}, we observe that the only pairs, $(s,t)$, contributing to the KL-divergence are the ones that lie in the symmetric difference, $E\Delta E'$.  If the number of such pairs is small, and the difference of correlations in $G$ and $G'$ (\emph{i.e.} $\mathbb{E}_{G} \left[x_s x_t \right] - \mathbb{E}_{G'}\left[x_s x_t \right]$) for such pairs is small, then the KL-divergence would be small. 

To summarize the setting so far, to obtain a tight lower bound on sample complexity for a class of graphs, we need to find a subset of graphs ${\cal T}$ with small KL diameter. The key to this is to identify when KL divergence between (distributions corresponding to) two graphs would be small. And the key to this in turn is to identify when there would be only a small difference in the \emph{correlations} between a pair of variables across the two graphs $G$ and $G'$. In the subsequent subsections, we provide two simple and general structural characterizations that achieve such a small difference of correlations across $G$ and $G'$.
\vspace*{-0.3cm}
\subsection{Structural Characterization with Large Correlation}

One scenario when there might be a small difference in correlations is when one of the correlations is very large, specifically arbitrarily close to 1, say $\expc_{G'}[x_s x_t]\geq 1-\epsilon$, for some $\epsilon > 0$. Then, $\expc_G[x_sx_t] - \expc_{G'}[x_s x_t]\leq \epsilon$, since $\expc_{G}[x_s x_t] \leq 1$. Indeed, when $s,t$ are part of a clique\cite{santhanam2012information}, this is achieved since the large number of connections between them force a higher probability of agreement \emph{i.e.} $P_{G}(x_s x_t = +1)$ is large. 

In this work we provide a more general characterization of when this might happen by relying on the following key lemma that connects the presence of ``many'' node disjoint ``short'' paths between a pair of nodes in the graph to high correlation between them. We define the property formally below.
   
   \begin{definition}
             Two nodes $a$ and $b$ in an undirected graph $G$ are said to be $(\ell,d)$ connected if they have $d$ node disjoint paths of length at most $\ell$.
   \end{definition}
   
   \begin{lemma}\label{boundcorrel}
     Consider a graph $G$ and a scalar $\lambda >0$. Consider the distribution $f_{G} (\mathbf{x})$ induced by the graph. If a pair of nodes $a$ and $b$ are $(\ell,d)$ connected, then $\mathbb{E}_{G} \left[ x_a x_b \right] \geq 1 - \frac{2}{1+ \frac{ \left( 1 + (\tanh (\lambda))^{\ell}   \right)^{d} }  { \left( 1-(\tanh(\lambda))^{\ell} \right)^d} }.$
   \end{lemma}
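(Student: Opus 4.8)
The plan is to reduce, via a ferromagnetic correlation inequality, to the subgraph made up only of the $d$ node-disjoint paths, and then compute the two-point function on that subgraph exactly by integrating out the internal spins, followed by a short monotonicity-plus-algebra argument.

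\textbf{Step 1: reduce to the union of the paths.} Since the model is zero-field and ferromagnetic ($\lambda>0$), Griffiths' second inequality applies, and its standard consequence is that $\mathbb{E}_G[x_a x_b]$ is nondecreasing in each edge weight. Hence setting to $0$ every weight outside the union $H$ of the $d$ guaranteed node-disjoint $a$–$b$ paths of length $\le \ell$ can only decrease the correlation, so $\mathbb{E}_G[x_a x_b]\ge \mathbb{E}_H[x_a x_b]$. It therefore suffices to lower bound $\mathbb{E}_H[x_a x_b]$, where $H$ consists of paths $P_1,\dots,P_d$ of lengths $k_1,\dots,k_d\le \ell$ joining $a$ and $b$ and pairwise sharing no vertex other than $a,b$.

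\textbf{Step 2: collapse each path to an effective edge.} For a single open Ising chain of length $k$ with all couplings $\lambda$, summing out the internal spins one endpoint at a time (equivalently, the transfer-matrix identity) shows the chain is equivalent, as far as the joint law of $(x_a,x_b)$ is concerned, to a single edge of weight $\lambda_k$ with $\tanh\lambda_k=(\tanh\lambda)^{k}$; in particular $\mathbb{E}_{\mathrm{chain}}[x_a x_b]=(\tanh\lambda)^{k}$. Because $P_1,\dots,P_d$ are internally vertex-disjoint, the Boltzmann weight of $H$ factorizes over the $P_i$, so summing out all internal spins leaves a two-spin model on $\{x_a,x_b\}$ with total coupling $\sum_{i=1}^{d}\lambda_{k_i}$, whence $\mathbb{E}_H[x_a x_b]=\tanh\left(\sum_{i=1}^{d}\lambda_{k_i}\right)$.

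\textbf{Step 3: monotonicity in path length and final algebra.} Since $0<\tanh\lambda<1$, the map $k\mapsto\lambda_k$ is decreasing, so $k_i\le \ell$ gives $\lambda_{k_i}\ge \lambda_\ell$, and by monotonicity of $\tanh$ we get $\mathbb{E}_H[x_a x_b]\ge \tanh(d\,\lambda_\ell)$. Writing $t:=(\tanh\lambda)^{\ell}=\tanh\lambda_\ell$, we have $e^{2\lambda_\ell}=\frac{1+t}{1-t}$, so
\[
\tanh(d\lambda_\ell)=\frac{e^{2d\lambda_\ell}-1}{e^{2d\lambda_\ell}+1}=1-\frac{2}{\left(\frac{1+t}{1-t}\right)^{d}+1}=1-\frac{2}{1+\frac{\left(1+(\tanh\lambda)^{\ell}\right)^{d}}{\left(1-(\tanh\lambda)^{\ell}\right)^{d}}},
\]
which is exactly the claimed bound.

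\textbf{Main obstacle.} Steps 2–3 are routine; the delicate point is Step 1, namely invoking the correct correlation inequality and verifying its hypotheses (zero external field, nonnegative couplings) so that deleting all edges outside the $d$ disjoint paths provably cannot increase $\mathbb{E}[x_a x_b]$. One could instead try a direct monotonicity/coupling argument on $P_G(x_a=x_b)$, but conditioning on the complement of $H$ induces effective node fields and is messier, so the Griffiths route is cleanest. A secondary point to state carefully is that the effective couplings $\lambda_{k_i}$ combine \emph{additively} precisely because the paths are vertex-disjoint off $\{a,b\}$ — this is where node-disjointness (rather than mere edge-disjointness) is used.
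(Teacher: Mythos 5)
Your proof is correct and follows essentially the same route as the paper's: both reduce to the union of the $d$ node-disjoint paths via Griffiths' inequality and then exploit the factorization of the Boltzmann weight over the internally disjoint paths to compute $\mathbb{E}_H[x_a x_b]$ exactly, arriving at the identical quantity $1 - 2\bigl(1+\prod_i \tfrac{1+(\tanh\lambda)^{\ell_i}}{1-(\tanh\lambda)^{\ell_i}}\bigr)^{-1}$ before bounding each $\ell_i$ by $\ell$. The only cosmetic difference is that you evaluate each path by the series/parallel decimation identity $\tanh\lambda_k=(\tanh\lambda)^k$ with additive effective couplings, whereas the paper counts even versus odd numbers of disagreements along each path and applies the binomial theorem; the two computations are equivalent.
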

   
   From the above lemma, we can observe that as $\ell$ gets smaller and $d$ gets larger, $\mathbb{E}_{G} \left[ x_a x_b \right]$ approaches its maximum value of 1. As an example, in a $k$-clique, any two vertices, $s$ and $t$, are $(2,k-1)$ connected. In this case, the bound from Lemma \ref{boundcorrel} gives us: $\mathbb{E}_{G} \left[ x_a x_b \right] \geq 1 - \frac{2}{1+ (\cosh \lambda)^{k-1}}$. Of course, a clique enjoys a lot more connectivity (\emph{i.e.} also $\left(3,\frac{k-1}{2}\right)$ connected etc., albeit with node overlaps) which allows for a stronger bound of $\sim 1 - \frac{\lambda k e^{\lambda}}{e^{\lambda k}}$ (see \cite{santhanam2012information})\footnote{Both the bound from \cite{santhanam2012information} and the bound from Lemma \ref{boundcorrel} have exponential asymptotic behaviour (\emph{i.e.} as $k$ grows) for constant $\lambda$. For smaller $\lambda$, the bound from \cite{santhanam2012information} is strictly better. However, not all graph classes allow for the presence of a \emph{large enough} clique, for e.g., girth bounded graphs, path restricted graphs, Erd\H{o}s-R\'{e}nyi graphs.}
   
   Now, as discussed earlier, a high correlation between a pair of nodes contributes a small term to the KL-divergence. This is stated in the following corollary.
   
      \begin{corollary}\label{KLbetgraphs}
            Consider two graphs $G(V,E)$ and $G'(V,E')$ and scalar weight $\lambda>0$ such that $E - E'$ and $E' - E$ only contain pairs of nodes that are $(\ell,d)$ connected in graphs $G'$ and $G$ respectively, then the KL-divergence between $f_G$ and $f_{G'}$, $D \left(f_G \lVert f_{G'} \right) \leq  \frac{2 \lambda \lvert E \Delta E' \rvert}{1+ \frac{ \left( 1 + (\tanh (\lambda))^{\ell}   \right)^{d} }  { \left( 1-(\tanh(\lambda))^{\ell} \right)^d} } $.
      \end{corollary}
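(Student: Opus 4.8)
The plan is to chain together the symmetric-divergence bound~\eqref{KLexp} with the correlation lower bound of Lemma~\ref{boundcorrel}, applied term-by-term over the symmetric difference $E\Delta E'$. First I would start from~\eqref{KLexp}, which already gives
\[
D\!\left(f_G \lVert f_{G'}\right) \leq \sum_{(s,t)\in E\setminus E'}\lambda\left(\mathbb{E}_{G}[x_sx_t]-\mathbb{E}_{G'}[x_sx_t]\right) + \sum_{(s,t)\in E'\setminus E}\lambda\left(\mathbb{E}_{G'}[x_sx_t]-\mathbb{E}_{G}[x_sx_t]\right),
\]
and observe that $E\Delta E'$ is the disjoint union of $E\setminus E'$ and $E'\setminus E$, so the right-hand side is a sum of exactly $\lvert E\Delta E'\rvert$ terms, each carrying a factor $\lambda$.

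Next I would bound each individual term. For a pair $(s,t)\in E\setminus E'$, the hypothesis says $s$ and $t$ are $(\ell,d)$ connected in $G'$; since $f_{G'}$ is a zero-field ferromagnetic Ising model with uniform weight $\lambda$, Lemma~\ref{boundcorrel} yields $\mathbb{E}_{G'}[x_sx_t]\geq 1 - \frac{2}{1+ (1+(\tanh\lambda)^{\ell})^{d}/(1-(\tanh\lambda)^{\ell})^{d}}$. Combining this with the trivial bound $\mathbb{E}_{G}[x_sx_t]\leq 1$ gives $\mathbb{E}_{G}[x_sx_t]-\mathbb{E}_{G'}[x_sx_t]\leq \frac{2}{1+ (1+(\tanh\lambda)^{\ell})^{d}/(1-(\tanh\lambda)^{\ell})^{d}}$. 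The symmetric argument — exchanging the roles of $G$ and $G'$ — handles the pairs $(s,t)\in E'\setminus E$, which by assumption are $(\ell,d)$ connected in $G$, and bounds $\mathbb{E}_{G'}[x_sx_t]-\mathbb{E}_{G}[x_sx_t]$ by the same quantity.

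Finally I would sum over the two index sets: since there are $\lvert E\Delta E'\rvert$ terms in total and each is at most $\lambda\cdot\frac{2}{1+ (1+(\tanh\lambda)^{\ell})^{d}/(1-(\tanh\lambda)^{\ell})^{d}}$, the claimed bound follows immediately.

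There is no substantive obstacle here — the statement is a direct specialization of Lemma~\ref{boundcorrel} — but two small points need care. First, one must apply the $(\ell,d)$-connectivity hypothesis to the \emph{correct} graph in each sum (short disjoint paths in $G'$ for edges present only in $G$, and in $G$ for edges present only in $G'$), which is exactly how the hypothesis is stated. Second, an individual difference of correlations could a priori be negative; this only helps, since we merely need an \emph{upper} bound on each term, and $D(f_G\lVert f_{G'})$ is itself nonnegative and dominated by the symmetric divergence in~\eqref{KLexp}.
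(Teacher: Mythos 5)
Your proof is correct and follows essentially the same route as the paper's: bound the KL-divergence by the symmetric divergence as in~\eqref{KLexp}, use $\mathbb{E}[x_sx_t]\leq 1$ for the graph containing the edge and Lemma~\ref{boundcorrel} for the graph in which the pair is $(\ell,d)$ connected, then sum the $\lvert E\Delta E'\rvert$ identical per-edge bounds. No differences worth noting.
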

%
%
\vspace*{-0.3cm}
\subsection{Structural Characterization with Low Correlation}

Another scenario where there might be a small difference in correlations between an edge pair across two graphs is when the graphs themselves are close in Hamming distance \emph{i.e.} they differ by only a few edges. This is formalized below for the situation when they differ by only one edge.

	\begin{definition}[Hamming Distance]
		Consider two graphs $G(V,E)$ and $G'(V,E')$. The hamming distance between the graphs, denoted by $\ham(G,G')$, is the number of edges where the two graphs differ \emph{i.e.}
		\begin{equation}
			\ham(G,G') = \left\lvert\{(s,t)\,\vert\,(s,t)\in E \Delta E'\}\right\rvert
		\end{equation}
	\end{definition}
	
	\begin{lemma}\label{lem:1hamcorr}
		Consider two graphs $G(V,E)$ and $G'(V,E')$ such that $\ham(G,G')=1$, and $(a,b)\in E$ is the single edge in $E\Delta E'$. Then, $\expc_{f_G}\left[x_a x_b \right] - \expc_{f_{G'}}\left[x_a x_b\right]\leq \tanh (\lambda)$. Also, the KL-divergence between the distributions, $\kldist{f_G}{f_G'} \leq \lambda \tanh(\lambda)$.
	\end{lemma}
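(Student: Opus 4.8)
The plan is to reduce everything to a one-parameter computation. Since $\ham(G,G')=1$ and the unique edge of $E\Delta E'$ lies in $E$, write $E'=E\setminus\{(a,b)\}$. Cancelling the common factor $\exp\!\big(\sum_{(s,t)\in E'}\lambda x_sx_t\big)$ in the two densities gives the identity
\[
 f_G(\xvec)\;=\;\frac{e^{\lambda x_a x_b}}{\;\expc_{f_{G'}}\!\big[e^{\lambda x_a x_b}\big]\;}\,f_{G'}(\xvec),
\]
so that $Z_{G'}/Z_G=1/\expc_{f_{G'}}[e^{\lambda x_ax_b}]$, and $f_G$ is merely an exponential tilt of $f_{G'}$ along the single $\{\pm1\}$-valued statistic $x_ax_b$. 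Consequently, setting $p:=\mathrm{Pr}_{f_{G'}}(x_ax_b=+1)$, we get $\mathrm{Pr}_{f_G}(x_ax_b=+1)=\frac{pe^{\lambda}}{pe^{\lambda}+(1-p)e^{-\lambda}}$, and both correlations become explicit functions of $p$ alone.

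Next I would bring in the only non-elementary ingredient, which is also the only place the \emph{zero-field ferromagnetic} restriction is used: by Griffiths' (GKS) inequality, $\expc_{f_{G'}}[x_ax_b]\ge 0$, i.e.\ $p\ge\tfrac12$. This step is essential — the correlation gap computed below equals $\frac{4p(1-p)\sinh\lambda}{pe^{\lambda}+(1-p)e^{-\lambda}}$, which does exceed $\tanh\lambda$ when $p$ is bounded away below $\tfrac12$ — so I expect this (standard but not self-contained) correlation inequality to be the main thing to pin down; everything else is bookkeeping.

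Given $p\ge\tfrac12$, the correlation bound follows by a direct simplification:
\[
 \expc_{f_G}\!\big[x_ax_b\big]-\expc_{f_{G'}}\!\big[x_ax_b\big]\;=\;\frac{4p(1-p)\,\sinh\lambda}{pe^{\lambda}+(1-p)e^{-\lambda}}\;\le\;\frac{\sinh\lambda}{\cosh\lambda}\;=\;\tanh\lambda,
\]
using $4p(1-p)\le1$ and the fact that $p\mapsto pe^{\lambda}+(1-p)e^{-\lambda}$ is increasing (derivative $2\sinh\lambda>0$), hence is at least its value $\cosh\lambda$ at $p=\tfrac12$. (Equivalently, writing $p=\tfrac{1+\tanh\theta}{2}$ with $\theta\ge0$ turns the two correlations into $\tanh(\theta+\lambda)$ and $\tanh\theta$, and subadditivity of $\tanh$ on $[0,\infty)$ — a consequence of its concavity there — gives the same conclusion.)

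For the KL bound I would avoid computing $D(f_G\,\|\,f_{G'})$ directly and instead specialize the symmetric-divergence bound of Eq.~\eqref{KLexp}: since $E\Delta E'=\{(a,b)\}$, that bound reads
\[
 D(f_G\,\|\,f_{G'})\;\le\;D(f_G\,\|\,f_{G'})+D(f_{G'}\,\|\,f_G)\;=\;\lambda\big(\expc_{f_G}[x_ax_b]-\expc_{f_{G'}}[x_ax_b]\big)\;\le\;\lambda\tanh\lambda,
\]
the last inequality being exactly the correlation bound just proved (and the first using only $D(f_{G'}\,\|\,f_G)\ge0$). This completes the argument modulo the one correlation inequality flagged above, which is the sole step I expect to require care to justify.
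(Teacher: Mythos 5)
Your proof is correct and follows essentially the same route as the paper's: both parametrize by $q=P_{G'}(x_ax_b=+1)$, use the single-edge tilt relation between $P_G$ and $P_{G'}$, invoke ferromagnetism (GKS) to get $q\geq 1/2$, show the resulting expression is maximized at $q=1/2$ to obtain $\tanh\lambda$, and then specialize Eq.~\eqref{KLexp} for the KL bound. The only cosmetic difference is that you bound numerator and denominator separately where the paper checks the sign of $h'(q)$, and you name the Griffiths inequality explicitly where the paper asserts $q\in[\tfrac12,1]$ without citation.
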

	
	The above bound is useful in low $\lambda$ settings. In this regime $\lambda\tanh \lambda$ roughly behaves as $\lambda^2$. So, a smaller $\lambda$ would correspond to a smaller KL-divergence.
\vspace*{-0.3cm}	
\subsection{Influence of Structure on Sample Complexity}\label{erdosrenyi}  
          
Now, we provide some high-level intuition behind why the structural characterizations above would be useful for lower bounds that go beyond the technical reasons underlying Fano's Lemma that we have specified so far. Let us assume that $\lambda>0$ is a positive real constant. In a graph even when the edge $(s,t)$ is removed, $(s,t)$ being $(\ell,d)$ connected ensures that the correlation between $s$ and $t$ is still very high (exponentially close to $1$). Therefore, resolving the question of the presence/absence of the edge $(s,t)$ would be difficult -- requiring lots of samples. This is analogous in principle to the argument in \cite{santhanam2012information} used for establishing hardness of learning of a set of graphs each of which is obtained by removing a \textit{single} edge from a clique, still ensuring many short paths between any two vertices. 
Similarly, if the graphs, $G$ and $G'$, are close in Hamming distance, then their corresponding distributions, $f_G$ and $f_{G'}$, also tend to be similar. Again, it becomes difficult to tease apart which distribution the samples observed may have originated from.
\vspace*{-0.3cm}
\section{Application to Deterministic Graph Classes}
\label{app_graph}
In this section, we provide lower bound estimates for a number of deterministic graph families. This is done by explicitly finding a subset ${\cal T}$ of the graph class ${\cal G}$, based on the structural properties of the previous section. See the supplementary material for details of these constructions. A common underlying theme to all is the following: We try to find a graph in ${\cal G}$ containing \emph{many} edge pairs $(u,v)$ such that their end vertices, $u$ and $v$, have \emph{many} paths between them (possibly, node disjoint). Once we have such a graph, we construct a subset ${\cal T}$ by removing one of the edges for these \emph{well-connected} edge pairs. This ensures that the new graphs differ from the original in only the \emph{well-connected} pairs. Alternatively, by removing any edge (and not just \emph{well-connected} pairs) we can get another \emph{larger} family ${\cal T}$ which is $1$-hamming away from the original graph.
\vspace*{-0.3cm}
\subsection{Path Restricted Graphs}\label{subsec:pathres}
Let ${\cal G}_{p,\eta}$ be the class of all graphs on $p$ vertices with have at most $\eta$ paths $\left(\eta = o(p)\right)$ between any two vertices. We have the following theorem :
	\begin{theorem}\label{corr:pathres}
	For the class ${\cal G}_{p,\eta}$, if $n \leq (1-\delta)\max\left\{\frac{\log (p/2)}{\lambda\tanh \lambda},\frac{1 + \cosh(2\lambda)^{\eta-1}}{2\lambda}\log \left(\frac{p}{2(\eta + 1)}\right) \right\}$, then $p_{max}\geq \delta$.
	\end{theorem}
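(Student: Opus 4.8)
The claimed bound is the maximum of two expressions, and I would establish each as a separate sample-complexity lower bound: one from the small-Hamming-distance mechanism (Lemma~\ref{lem:1hamcorr}), the other from the short-disjoint-paths mechanism (Lemma~\ref{boundcorrel} and Corollary~\ref{KLbetgraphs}); the theorem is then just whichever of the two is larger. In both cases the device is the same -- produce a subset ${\cal T}\subseteq{\cal G}_{p,\eta}$ that is contained in a single KL-ball of radius $\epsilon$ about one center (so the $\epsilon$-covering number of ${\cal T}$ is $1$), and plug $|{\cal T}|$ and $\epsilon$ into Corollary~\ref{corr:pmax_cover}/Remark~\ref{rem:pmax_cover}, which forces $p_{max}\ge\delta$ for $n$ up to $\approx(1-\delta)\frac{\log|{\cal T}|}{\epsilon}$, the additive $\log 2$ from Fano being lower order here.

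\textbf{First term.} Let ${\cal T}_1$ consist of the empty graph $G_\emptyset$ together with all single-edge graphs on $[p]$, so $|{\cal T}_1|=1+\binom{p}{2}$; each such graph has at most one path between any pair of vertices, hence ${\cal T}_1\subseteq{\cal G}_{p,\eta}$ for every $\eta\ge 1$. Take $G_\emptyset$ as the center of the ball. For any $G\in{\cal T}_1$ we have $\ham(G,G_\emptyset)\le 1$, so Lemma~\ref{lem:1hamcorr} (whose bound, coming from the symmetric divergence \eqref{KLexp}, controls the divergence in either direction) gives $\kldist{f_G}{f_{G_\emptyset}}\le\lambda\tanh\lambda$. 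Thus $\{G_\emptyset\}$ is an $\epsilon$-cover of ${\cal T}_1$ with $\epsilon=\lambda\tanh\lambda$, and since $\log|{\cal T}_1|$ exceeds $\log(p/2)$ with room to spare, Corollary~\ref{corr:pmax_cover} yields $p_{max}\ge\delta$ for all $n\le(1-\delta)\frac{\log(p/2)}{\lambda\tanh\lambda}$.

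\textbf{Second term.} Here I would build a base graph $G_0$ on $[p]$ as a vertex-disjoint union of $m=\lfloor p/(\eta+1)\rfloor$ identical gadgets, each an $O(\eta)$-vertex graph containing a designated pair $\{a,b\}$ that is $(2,\eta-1)$-connected --- for instance a copy of the complete bipartite graph $K_{2,\eta-1}$ between $\{a,b\}$ and $\eta-1$ auxiliary vertices $c_1,\dots,c_{\eta-1}$, so that the $\eta-1$ length-$2$ paths $a$--$c_i$--$b$ are node-disjoint. Let $e^{(1)},\dots,e^{(m)}$ be the (currently absent) designated edges $(a^{(j)},b^{(j)})$ and set ${\cal T}_2=\{G_0\}\cup\{G_0\cup e^{(j)}:j\in[m]\}$, so $|{\cal T}_2|=m+1\ge p/(\eta+1)$, with $G_0$ as the center. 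For every $j$, the unique pair in the symmetric difference of $G_0$ and $G_0\cup e^{(j)}$ is $e^{(j)}$, which is $(2,\eta-1)$-connected in $G_0$; hence Corollary~\ref{KLbetgraphs} applies with $\ell=2$, $d=\eta-1$, and, substituting the identity $\frac{1+\tanh^2\lambda}{1-\tanh^2\lambda}=\cosh 2\lambda$ into Lemma~\ref{boundcorrel}, it gives $\kldist{f_{G_0\cup e^{(j)}}}{f_{G_0}}\le\frac{2\lambda}{1+\cosh(2\lambda)^{\eta-1}}=:\epsilon$. So $\{G_0\}$ is a size-$1$ $\epsilon$-cover of ${\cal T}_2$, and Corollary~\ref{corr:pmax_cover} gives $p_{max}\ge\delta$ for $n$ up to $(1-\delta)\frac{1+\cosh(2\lambda)^{\eta-1}}{2\lambda}\log\!\big(\tfrac{p}{2(\eta+1)}\big)$; the hypothesis $\eta=o(p)$ makes $|{\cal T}_2|\to\infty$, so the bound is not vacuous.

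\textbf{Main obstacle.} The delicate point is the second construction: one must check that $G_0$ and every $G_0\cup e^{(j)}$ genuinely lie in ${\cal G}_{p,\eta}$, i.e. that \emph{no} pair of vertices --- not only the designated ones --- exceeds the allowed number of paths once the auxiliary vertices are wired to both $a$ and $b$, and one must also dispose of the at most $\eta$ leftover vertices when $(\eta+1)\nmid p$ (e.g.\ leaving them isolated). Vertex-disjointness of the gadgets kills all cross-gadget paths, so this reduces to a bounded, gadget-local count; if the plain $K_{2,\eta-1}$ gadget turns out to be too richly connected among its auxiliary vertices for the precise meaning of ``at most $\eta$ paths'' in ${\cal G}_{p,\eta}$, one replaces it by an equivalent $O(\eta)$-vertex gadget with the same $(2,\eta-1)$-connectivity of the designated pair but controlled internal path counts. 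Everything else --- feeding $(|{\cal T}|,\epsilon)$ into Corollary~\ref{corr:pmax_cover}, the $\cosh 2\lambda$ identity, and the elementary inequality $\log\cosh\lambda\le\lambda\tanh\lambda$ used when comparing KL radii --- is routine bookkeeping.
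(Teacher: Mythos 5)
Your proof follows essentially the same route as the paper's: the paper likewise builds $\lfloor p/(\eta+1)\rfloor$ disjoint $K_{2,\eta-1}$-type blocks with a designated pair, gets the second term by toggling the designated edge (the paper removes it from a base graph that contains it, you add it to one that does not --- the same $(2,\eta-1)$-connectivity and the same bound from Corollary~\ref{KLbetgraphs} either way), and gets the first term from a $1$-Hamming ensemble via Lemma~\ref{lem:1hamcorr} plugged into the single-center form of Corollary~\ref{corr:pmax_cover}. The one caveat you flag --- that pairs of auxiliary vertices in the $K_{2,\eta-1}$ gadget have on the order of $2\eta$ paths between them, so membership in ${\cal G}_{p,\eta}$ is not literally immediate --- applies verbatim to the paper's own construction, which does not address it either.
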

	
	To understand the scaling, it is useful to think of $\cosh(2\lambda)$ to be roughly exponential in $\lambda^2$ \emph{i.e.} $\cosh(2\lambda) \sim e^{\Theta(\lambda^2)}$\footnote{In fact, for $\lambda\leq 3$, we have $e^{\lambda^2/2}\leq\cosh(2\lambda)\leq e^{2\lambda^2}$. For $\lambda > 3$, $\cosh(2\lambda)>200$}. In this case, from the second term, we need $n \sim\Omega\left( \frac{e^{\lambda^2 \eta}}{\lambda} \log \left(\frac{p}{\eta}\right)\right)$ samples. If $\eta$ is scaling with $p$, this can be prohibitively large (exponential in $\lambda^2 \eta$). Thus, to have low sample complexity, we must enforce $\lambda = O(1/\sqrt{\eta})$. In this case, the first term gives $n = \Omega(\eta \log p)$, since $\lambda \tanh(\lambda)\sim \lambda^2$, for small $\lambda$.
	
We may also consider a generalization of ${\cal G}_{p,\eta}$. Let ${\cal G}_{p, \eta, \gamma}$ be the set of all graphs on $p$ vertices such that there are at most $\eta$ paths of length at most $\gamma$ between any two nodes $\left(\text{with }\eta + \gamma = o(p)\right)$. Note that there may be more paths of length $> \gamma$.
	\begin{theorem}\label{corr:pathdegres}
	Consider the graph class ${\cal G}_{p,\eta, \gamma}$. For any $\nu\in(0,1)$, let $t_{\nu} = \frac{p^{1-\nu} - (\eta+1)}{\gamma}$. If $n \leq (1-\delta)\max\left\{\frac{\log (p/2)}{\lambda\tanh \lambda},\frac{1 + \left[\cosh(2\lambda)^{\eta-1}\left(\frac{1+\tanh(\lambda)^{\gamma+1}}{1-\tanh(\lambda)^{\gamma+1}}\right)^{t_{\nu}}\right]}{2\lambda}\nu\log (p)\right\}$, then $p_{max}\geq \delta$.
	\end{theorem}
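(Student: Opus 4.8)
The plan is to instantiate Corollary~\ref{corr:pmax_cover} in the simplified single-KL-ball form of Remark~\ref{rem:pmax_cover} twice, once for each of the two terms inside the maximum; since each such instantiation yields an independent sufficient condition ``$n\le(\cdot)\Rightarrow p_{max}\ge\delta$'', the maximum of the two is also sufficient. Concretely, in both cases the task is to exhibit a subset $\mathcal{T}\subseteq\mathcal{G}_{p,\eta,\gamma}$ and a center $G_0\in\mathcal{G}_{p,\eta,\gamma}$ with $\kldist{f_G}{f_{G_0}}\le\rho$ for every $G\in\mathcal{T}$, so that $|C_{\mathcal T}(\rho)|=1$ and Remark~\ref{rem:pmax_cover} gives $p_{max}\ge\delta$ whenever $n\le\frac{\log|\mathcal T|}{\rho}\big((1-\delta)-\frac{\log 2}{\log|\mathcal T|}\big)$, which we weaken to $n\le(1-\delta)\frac{\log|\mathcal T|}{\rho}$ by charging the lower-order $\tfrac{\log 2}{\log|\mathcal T|}$ term to $\delta$.

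\emph{First term.} Let $G_0$ be a perfect matching $M$ on the $p$ vertices; a matching has at most one path (of length one) between any pair of vertices, so $M\in\mathcal{G}_{p,\eta,\gamma}$, and so is every $M-e$. Take $\mathcal{T}=\{M-e:e\in M\}$, so $|\mathcal T|=p/2$ and each element is at Hamming distance one from $M$. By Lemma~\ref{lem:1hamcorr} (equivalently, substituting $E\Delta E'=\{e\}$ into \eqref{KLexp} and using that removing $e$ decorrelates its endpoints) we get $\kldist{f_{M-e}}{f_M}\le\lambda\tanh\lambda=:\rho$, which plugged into the sample bound yields the first term $\frac{\log(p/2)}{\lambda\tanh\lambda}$.

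\emph{Second term.} Here we use that $\mathcal{G}_{p,\eta,\gamma}$ constrains only \emph{short} (length $\le\gamma$) paths, so we may add many long paths to amplify correlations. Build a gadget $\Gamma$ on $p^{1-\nu}=(\eta+1)+\gamma t_\nu$ vertices consisting of a designated pair $\{a,b\}$, the edge $(a,b)$, $\eta-1$ common neighbours of $a$ and $b$ (yielding $\eta-1$ node-disjoint length-$2$ $a$--$b$ paths), and $t_\nu$ further internally disjoint $a$--$b$ paths of length $\gamma+1$ (each with $\gamma$ fresh internal vertices), all mutually node-disjoint. Let $G^\star$ be the disjoint union of $\lfloor p^\nu\rfloor$ copies $\Gamma_1,\dots,\Gamma_{\lfloor p^\nu\rfloor}$ (which fits in $p$ vertices), with designated edges $e_i=(a_i,b_i)$, and set $\mathcal{T}=\{G^\star-e_i\}$, so $\log|\mathcal T|=(\nu-o(1))\log p$. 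In $G^\star-e_i$ the pair $(a_i,b_i)$ is simultaneously $(2,\eta-1)$-connected and $(\gamma+1,t_\nu)$-connected by node-disjoint paths; extending Lemma~\ref{boundcorrel} to bundles of node-disjoint paths of (possibly different) lengths $\ell_i$ --- its proof shows the ``odds'' $\tfrac{1+\expc[x_ax_b]}{1-\expc[x_ax_b]}$ are lower bounded by $\prod_i\tfrac{1+\tanh(\lambda)^{\ell_i}}{1-\tanh(\lambda)^{\ell_i}}$, and $\tfrac{1+\tanh^2\lambda}{1-\tanh^2\lambda}=\cosh 2\lambda$ --- together with \eqref{KLexp} and $|E\Delta E'|=1$ as in Corollary~\ref{KLbetgraphs}, gives
\[
\kldist{f_{G^\star-e_i}}{f_{G^\star}}\ \le\ \frac{2\lambda}{1+\cosh(2\lambda)^{\eta-1}\left(\frac{1+\tanh(\lambda)^{\gamma+1}}{1-\tanh(\lambda)^{\gamma+1}}\right)^{t_\nu}}\ =:\ \rho .
\]
Plugging $|\mathcal T|$ and this $\rho$ into the sample bound produces the second term.

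\emph{Main obstacle.} The delicate step is verifying $G^\star\in\mathcal{G}_{p,\eta,\gamma}$: that \emph{every} pair of vertices --- including the long-path internal vertices and the common neighbours --- has at most $\eta$ paths of length $\le\gamma$. Between $a_i$ and $b_i$ these are the direct edge plus the $\eta-1$ length-$2$ paths (the length-$(\gamma+1)$ paths are too long), exactly $\eta$; for the other pairs one must rule out short ``wrap-around'' paths that pass through $b_i$ and/or a common neighbour before entering a long path, and this bookkeeping may force mild adjustments to the gadget (e.g.\ spacing, or lengthening the long paths by a constant factor while keeping $\Theta(\gamma)$ internal vertices and the $\tanh(\lambda)^{\gamma+1}$ factor up to constants). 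The second nontrivial ingredient is the product-over-paths strengthening of Lemma~\ref{boundcorrel}/Corollary~\ref{KLbetgraphs}, which should follow the same argument as Lemma~\ref{boundcorrel} since node-disjoint paths contribute multiplicatively to the endpoint odds. The remaining steps --- counting gadgets, ensuring $t_\nu\ge1$ (so the construction is non-degenerate in the regime $\eta+\gamma=o(p)$), and absorbing lower-order terms into $(1-\delta)$ --- are routine.
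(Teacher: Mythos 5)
Your proposal is correct and follows essentially the same route as the paper: the second (and main) term uses the identical gadget --- designated edge plus $\eta-1$ common neighbours plus $t_\nu$ disjoint length-$(\gamma+1)$ paths, replicated $p^\nu$ times --- with the same mixed-length product extension of Lemma~\ref{boundcorrel}, and the first term is the same $1$-Hamming/Lemma~\ref{lem:1hamcorr} argument (the paper just reuses the gadget graph and deletes an arbitrary edge to get $\ge p/2$ graphs, rather than switching to a perfect matching, a cosmetic difference). Your flagged obstacles (membership of the gadget in ${\cal G}_{p,\eta,\gamma}$ and the product-over-paths strengthening) are exactly the points the paper also relies on, and your resolutions match.
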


The parameter $\nu\in (0,1)$ in the bound above may be adjusted based on the scaling of $\eta$ and $\gamma$. Also, an approximate way to think of the scaling of $\left(\frac{1+\tanh(\lambda)^{\gamma+1}}{1-\tanh(\lambda)^{\gamma+1}}\right)$ is $\sim e^{\lambda^{\gamma+1}}$. As an example, for constant $\eta$ and $\gamma$, we may choose $v=\frac{1}{2}$. In this case, for some constant $c$, our bound imposes $n \sim \Omega\left(\frac{\log p}{\lambda\tanh \lambda}, \frac{e^{c\lambda^{\gamma+1} \sqrt{p}}}{\lambda} \log p\right)$. Now, same as earlier, to have low sample complexity, we must have $\lambda = O(1/p^{1/2(\gamma+1)})$, in which case, we get a $n \sim \Omega(p^{1/(\gamma+1)}\log p)$ sample requirement from the first term.

We note that the family ${\cal G}_{p,\eta,\gamma}$ is also studied in \cite{anandkumar2012high}, and for which, an algorithm is proposed. Under certain assumptions in \cite{anandkumar2012high}, and the restrictions: $\eta = O(1)$, and $\gamma$ is \emph{large enough}, the algorithm in \cite{anandkumar2012high} requires $\frac{\log p}{\lambda^2}$ samples, which is matched by the first term in our lower bound. Therefore, the algorithm in \cite{anandkumar2012high} is optimal, for the setting considered.
\vspace*{-0.35cm}
\subsection{Girth Bounded Graphs}
The girth of a graph is defined as the length of its shortest cycle. Let ${\cal G}_{p,g,d}$ be the set of all graphs with girth atleast $g$, and maximum degree $d$. Note that as girth increases the learning problem becomes easier, with the extreme case of $g=\infty$ (\emph{i.e.} trees) being solved by the well known Chow-Liu algorithm\cite{chowl68} in $O(\log p)$ samples. We have the following theorem:

\begin{theorem}\label{corr:girth}
	Consider the graph class ${\cal G}_{p,g, d}$. For any $\nu\in(0,1)$, let $d_{\nu} = \min\left(d, \frac{p^{1-\nu}}{g}\right)$. If $n \leq (1-\delta)\max\left\{\frac{\log (p/2)}{\lambda\tanh \lambda},\frac{1 + \left(\frac{1+\tanh(\lambda)^{g-1}}{1-\tanh(\lambda)^{g-1}}\right)^{d_{\nu}}}{2\lambda}\nu\log (p)\right\}$, then $p_{max}\geq \delta$.
\end{theorem}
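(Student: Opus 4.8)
The plan is to instantiate the generic covering-based lower bound of Corollary~\ref{corr:pmax_cover} (in the simplified single-ball form of Remark~\ref{rem:pmax_cover}) \emph{twice} --- once for each of the two structural characterizations of Section~\ref{struct_corr} --- and then keep whichever of the two resulting sample thresholds is larger. This is legitimate because $p_{max}$ is a maximum over $\mathcal{G}_{p,g,d}$, so a lower bound produced from any subset ${\cal T}\subseteq\mathcal{G}_{p,g,d}$ (equipped with the uniform measure) is a valid lower bound; and if $n$ is below $\max\{T_1,T_2\}$ then it is below whichever term attains the max, so the corresponding argument forces $p_{max}\ge\delta$.

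\textbf{First term (low-correlation / Hamming-$1$ family).} Let $M$ be a perfect matching on $[p]$ (near-perfect if $p$ is odd). Since $M$ has girth $\infty\ge g$ and maximum degree $1\le d$, we have $M\in\mathcal{G}_{p,g,d}$, and the same holds for $M\setminus e$ for every $e\in E(M)$. Take ${\cal T}_1=\{M\setminus e:\,e\in E(M)\}$, a set of $\lfloor p/2\rfloor$ distinct graphs each at Hamming distance $1$ from $M$. By Lemma~\ref{lem:1hamcorr} together with the symmetric-divergence identity \eqref{KLexp} (which, since $E\Delta E'$ is a single edge and both one-sided divergences are nonnegative, also bounds the ``reverse'' direction), $\kldist{f_{M\setminus e}}{f_M}\le\lambda\tanh\lambda$ for all $e$. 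Hence the single KL-ball of radius $\rho_1=\lambda\tanh\lambda$ centred at $M\in\mathcal{G}_{p,g,d}$ covers ${\cal T}_1$, i.e. $\lvert C_{{\cal T}_1}(\rho_1)\rvert=1$. Substituting $\lvert{\cal T}_1\rvert=\lfloor p/2\rfloor$ and $\epsilon=\rho_1$ into Remark~\ref{rem:pmax_cover} gives the first term $\tfrac{\log(p/2)}{\lambda\tanh\lambda}$.

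\textbf{Second term (large-correlation / well-connected family).} Here I would build a base graph $G_0\in\mathcal{G}_{p,g,d}$ as a vertex-disjoint union of $m$ identical ``generalized theta'' gadgets $\Theta_1,\dots,\Theta_m$, where $\Theta_i$ has two hub vertices $a_i,b_i$, the edge $(a_i,b_i)$, and $d_\nu$ internally-disjoint $a_i$--$b_i$ paths each of length exactly $g-1$ (taking $d_\nu$ at most $d$, up to a harmless $\pm1$, so that the hub degree stays $\le d$). Each gadget uses $2+d_\nu(g-2)$ vertices, so set $m=\lfloor p/(2+d_\nu(g-2))\rfloor$ and pad with isolated vertices; a short computation using $d_\nu=\min(d,p^{1-\nu}/g)$ shows $\log m\ge\nu\log p$ in \emph{both} branches of the min. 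Every cycle of $G_0$ sits inside one gadget and has length $g$ (a strand plus the edge) or $2(g-1)\ge g$ (two strands), so $G_0$ has girth $g$ and $G_0\in\mathcal{G}_{p,g,d}$. Now take ${\cal T}_2=\{G_0\setminus(a_i,b_i):\,i\in[m]\}$: these are $m$ distinct graphs, each in $\mathcal{G}_{p,g,d}$ (deleting an edge cannot lower the girth nor raise a degree), and in $G_0\setminus(a_i,b_i)$ the pair $(a_i,b_i)$ is still $(g-1,d_\nu)$-connected through the surviving strands of $\Theta_i$. Applying Corollary~\ref{KLbetgraphs} with $\lvert E\Delta E'\rvert=1$, $\ell=g-1$, $d=d_\nu$ yields $\kldist{f_{G_0\setminus(a_i,b_i)}}{f_{G_0}}\le\rho_2$ with $\rho_2=\tfrac{2\lambda}{1+\left(\frac{1+\tanh(\lambda)^{g-1}}{1-\tanh(\lambda)^{g-1}}\right)^{d_\nu}}$, so a single KL-ball of radius $\rho_2$ centred at $G_0\in\mathcal{G}_{p,g,d}$ covers ${\cal T}_2$. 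Feeding $\lvert{\cal T}_2\rvert=m$, $\epsilon=\rho_2$ into Remark~\ref{rem:pmax_cover} and using $\log m\ge\nu\log p$ produces exactly the second term.

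The only genuinely non-routine ingredient is the gadget-packing step in the second term: one must simultaneously respect the degree cap $d$, keep the girth $\ge g$ (which is precisely what pins each strand to length $g-1$, and hence fixes the exponent $\ell=g-1$ in the correlation bound), keep the $m$ graphs distinct and inside $\mathcal{G}_{p,g,d}$, and still make $m$ as large as $p^{\nu}$. It is exactly the tension between the number of gadgets ($\sim p^{\nu}$) and the vertex cost per gadget ($\sim d_\nu g$) that forces the form $d_\nu=\min(d,\,p^{1-\nu}/g)$. Everything after the construction --- the KL estimates and the Fano/covering step --- is a direct substitution into Lemmas~\ref{lem:1hamcorr} and \ref{boundcorrel}, Corollary~\ref{KLbetgraphs}, and Remark~\ref{rem:pmax_cover}, with only floor/rounding bookkeeping left to check.
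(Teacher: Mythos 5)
Your proposal is correct and follows essentially the same route as the paper: the second term is obtained from the identical construction (a disjoint union of $p^{\nu}$ theta-gadgets, each with a hub edge plus $d_{\nu}$ disjoint paths of length $g-1$, with the hub edge deleted and Corollary~\ref{KLbetgraphs} applied), and the first term from a $1$-Hamming ensemble with Lemma~\ref{lem:1hamcorr}. The only differences are cosmetic --- you base the Hamming ensemble on a matching rather than on edge-deletions from $G_0$ itself, and you are in fact more careful than the paper about the direction of the KL bound and the hub-degree bookkeeping.
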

\vspace*{-0.5cm}
\subsection{Approximate $d$-Regular Graphs}
Let ${\cal G}_{p,d}^{\text{approx}}$ be the set of all graphs whose vertices have degree $d$ or degree $d-1$. Note that this class is subset of the class of graphs with degree at most $d$. We have:
	\begin{theorem}\label{corr:dreg}
	Consider the class ${\cal G}_{p,d}^{\text{approx}}$. If $n \leq (1-\delta)\max\left\{\frac{\log \left(\frac{pd}{4}\right)}{\lambda\tanh \lambda},\frac{e^{\lambda d}}{2\lambda d e^{\lambda}}\log\left(\frac{pd}{4}\right) \right\}$ then $p_{max}\geq \delta$.
	\end{theorem}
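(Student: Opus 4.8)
The plan is to apply the simplified covering bound from Remark \ref{rem:pmax_cover}, namely the single-KL-ball version: if we can exhibit a subset ${\cal T}\subseteq {\cal G}_{p,d}^{\text{approx}}$ together with a center $G^\ast$ such that $\kldist{f_G}{f_{G^\ast}}\le\rho$ for all $G\in{\cal T}$, then $n\le \frac{\log|{\cal T}|}{\rho}\bigl((1-\delta)-\frac{\log 2}{\log|{\cal T}|}\bigr)$ forces $p_{max}\ge\delta$. There will be two separate constructions, one producing each of the two terms inside the $\max$, and we keep whichever is larger.

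For the first (small-$\lambda$) term, I would take $G^\ast$ to be a fixed graph in ${\cal G}_{p,d}^{\text{approx}}$ that contains a large matching, of size roughly $\Theta(pd)$ edges — e.g.\ a union of disjoint $d$-regular (or nearly-regular) components so that deleting any single matching edge keeps every vertex at degree $d$ or $d-1$, hence stays in the class. Let ${\cal T}$ consist of $G^\ast$ together with all graphs obtained by deleting one such edge; then $|{\cal T}| \sim pd/4$ (up to the constant absorbed in the statement) and each $G\in{\cal T}$ is $1$-Hamming away from $G^\ast$, so Lemma \ref{lem:1hamcorr} gives $\kldist{f_G}{f_{G^\ast}}\le\lambda\tanh\lambda =: \rho$. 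Plugging into the simplified bound yields the term $\frac{\log(pd/4)}{\lambda\tanh\lambda}$. The only subtlety is arranging the base graph so that all $\Theta(pd)$ edge deletions individually preserve the degree constraint; a disjoint union of small $d$-regular gadgets (plus possibly one $(d-1)$-regular leftover gadget to handle divisibility) does this cleanly.

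For the second (large-$\lambda$/high-correlation) term, I would instead start from a base graph $G^\ast$ built out of disjoint cliques on $d+1$ vertices (each vertex has degree exactly $d$, so $G^\ast\in{\cal G}_{p,d}^{\text{approx}}$), giving $\sim \frac{pd}{4}$ edges total across the cliques. Form ${\cal T}$ by deleting a single clique edge; inside a $(d+1)$-clique the two endpoints of any edge remain connected by many short paths even after the deletion (as in the clique discussion following Lemma \ref{boundcorrel}, and more sharply via the bound $\sim 1-\frac{\lambda d e^{\lambda}}{e^{\lambda d}}$ cited from \cite{santhanam2012information}), so by Corollary \ref{KLbetgraphs}-type reasoning $\kldist{f_G}{f_{G^\ast}}\le 2\lambda\cdot\frac{\lambda d e^{\lambda}}{e^{\lambda d}} =: \rho$ — note $|E\Delta E'|=1$ here. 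Deleting an edge from a $(d+1)$-clique leaves two vertices at degree $d-1$, still inside ${\cal G}_{p,d}^{\text{approx}}$. Again $|{\cal T}|\sim pd/4$, and the simplified bound gives $n\sim\frac{e^{\lambda d}}{2\lambda d e^{\lambda}}\log(pd/4)$, matching the second term.

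The main obstacle I anticipate is not the Fano/covering machinery — that is now essentially mechanical given Section \ref{main_res} — but getting the correlation bound inside a $(d+1)$-clique with one edge removed to be exactly as strong as the $\sim 1 - \frac{\lambda d e^\lambda}{e^{\lambda d}}$ rate, rather than the weaker $1-\frac{2}{1+(\cosh\lambda)^{d}}$ that Lemma \ref{boundcorrel} alone would give from node-disjoint length-$2$ paths. Obtaining the sharper rate requires the clique-specific argument of \cite{santhanam2012information} (exploiting overlapping paths of all lengths), so I would cite that bound directly rather than re-derive it; the remaining work is just bookkeeping on the counts $|{\cal T}|$ and verifying membership in ${\cal G}_{p,d}^{\text{approx}}$ after each single-edge deletion.
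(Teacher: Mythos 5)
Your proposal is correct and matches the paper's proof: the paper uses exactly your second construction (disjoint $(d+1)$-cliques, delete any one edge, $\lvert{\cal T}\rvert\geq \frac{pd}{4}$) and observes that each such deletion is simultaneously $1$-Hamming away from $G_0$ \emph{and} a clique-minus-one-edge pair, so a single ensemble with $\rho=\min\left(\frac{2\lambda d e^{\lambda}}{e^{\lambda d}},\lambda\tanh\lambda\right)$ yields both terms of the $\max$ at once, rather than your two separate ensembles. The only blemishes are cosmetic: a set of $\Theta(pd)$ deletable edges is not a matching (though deleting any single edge of a $d$-regular base graph stays in the class, so your first ensemble works anyway), and your $\rho = 2\lambda\cdot\frac{\lambda d e^{\lambda}}{e^{\lambda d}}$ carries an extra factor of $\lambda$ relative to the $\frac{2\lambda d e^{\lambda}}{e^{\lambda d}}$ that the paper cites directly from \cite{santhanam2012information}.
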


Note that the second term in the bound above is from \cite{santhanam2012information}. Now, restricting $\lambda$ to prevent exponential growth in the number of samples, we get a sample requirement of $n=\Omega(d^2\log p)$. This matches the lower bound for degree $d$ bounded graphs in \cite{santhanam2012information}. However, note that Theorem \ref{corr:dreg} is stronger in the sense that the bound holds for a smaller class of graphs \emph{i.e.} only approximately $d$-regular, and not $d$-bounded.
\vspace*{-0.45cm}	
\subsection{Approximate Edge Bounded Graphs}
Let ${\cal G}_{p,k}^{\text{approx}}$ be the set of all graphs with number of edges $\in\left[\frac{k}{2},k\right]$. This class is a subset of the class of graphs with edges at most $k$. Here, we have:

	\begin{theorem}\label{corr:edgebdd}
	Consider the class ${\cal G}_{p,k}^{\text{approx}}$, and let $k\geq 9$. If we have number of samples $n\leq (1-\delta)\max\left\{\frac{\log \left(\frac{k}{2}\right)}{\lambda\tanh \lambda},\frac{e^{\lambda (\sqrt{2k}-1)}}{2\lambda e^{\lambda}(\sqrt{2k}+1)}\log \left(\frac{k}{2}\right) \right\}$, then $p_{max}\geq \delta$.
	\end{theorem}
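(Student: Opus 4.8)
The plan is to apply Corollary \ref{corr:pmax_cover} (equivalently, Remark \ref{rem:pmax_cover} with the single-ball specialization) with two different choices of the auxiliary subset ${\cal T}\subseteq {\cal G}_{p,k}^{\text{approx}}$, yielding the two terms inside the maximum. For both constructions I first fix a ``base'' graph $G_0\in{\cal G}_{p,k}^{\text{approx}}$ and then form ${\cal T}$ by deleting a single edge from $G_0$ in many different ways; since $G_0$ has between $k/2$ and $k$ edges, deleting one edge keeps us in the class (the edge count stays in $[k/2-1,k]\subseteq[\lceil k/2\rceil -1,k]$, and the constraint $k\ge 9$ is what makes this containment and the later counting clean). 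Each such deletion gives a distinct graph, so $\lvert{\cal T}\rvert$ equals the number of edges we are allowed to delete, and I will arrange $\lvert{\cal T}\rvert \ge k/2$.

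For the \textbf{first term}, I take ${\cal T}$ to consist of $G_0$ together with all graphs obtained by removing one edge, and I use Lemma \ref{lem:1hamcorr}: any two graphs in ${\cal T}$ differ in at most two edges — or, choosing $G_0$ itself as the common center, each differs from $G_0$ in exactly one edge — so by Lemma \ref{lem:1hamcorr} the KL-divergence from $G_0$ to any member is at most $\lambda\tanh\lambda$. Hence a single KL-ball of radius $\rho = \lambda\tanh\lambda$ centered at $f_{G_0}$ covers ${\cal T}$, the covering number is $1$, and the single-ball form of Remark \ref{rem:pmax_cover} gives $p_{\max}\ge\delta$ whenever $n\le(1-\delta)\frac{\log\lvert{\cal T}\rvert}{\lambda\tanh\lambda}$ up to the negligible $\log 2/\log\lvert{\cal T}\rvert$ correction, which is absorbed since $\lvert{\cal T}\rvert\ge k/2\ge 9/2$ makes $\log\lvert{\cal T}\rvert$ bounded away from small values — actually to be safe one bounds $\lvert{\cal T}\rvert \ge k/2$ so $\log(k/2)$ appears. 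This yields the $\frac{\log(k/2)}{\lambda\tanh\lambda}$ term.

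For the \textbf{second term}, I want a base graph in which many edges lie between pairs of vertices that are joined by many short node-disjoint paths, so that deleting such an edge barely changes the correlation. The natural choice, following \cite{santhanam2012information}, is to pack a clique of size $m$ with $\binom{m}{2}\le k$, i.e. $m\approx\sqrt{2k}$, on a subset of the vertices (leaving the rest isolated) — this uses $k\ge 9$ to guarantee $m\ge$ a usable size. In an $m$-clique every pair of endpoints of an edge is $(2,m-2)$-connected even after that edge is removed; applying Corollary \ref{KLbetgraphs} (or the sharper clique bound from \cite{santhanam2012information}, $\sim 1-\frac{\lambda m e^{\lambda}}{e^{\lambda m}}$, which is what produces the stated $\frac{e^{\lambda(\sqrt{2k}-1)}}{2\lambda e^{\lambda}(\sqrt{2k}+1)}$ form) shows that deleting any one clique edge changes the relevant correlation by at most $\rho \sim \frac{2\lambda m e^{\lambda}}{e^{\lambda m}}$, so again one KL-ball of this radius covers the set ${\cal T}$ of all single-clique-edge-deletions, whose size is $\binom{m}{2}\approx k$, or at least $k/2$ after rounding. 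Plugging $\rho$ and $\lvert{\cal T}\rvert\ge k/2$ into the single-ball bound of Remark \ref{rem:pmax_cover} gives the second term; the $m=\sqrt{2k}$ substitution into $\frac{e^{\lambda(m-1)}}{2\lambda m e^{\lambda}}$ gives exactly $\frac{e^{\lambda(\sqrt{2k}-1)}}{2\lambda e^{\lambda}(\sqrt{2k}+1)}$ after writing $m = \sqrt{2k}$ and $m-1 \le \sqrt{2k}-1$, $m \ge$ something comparable to $\sqrt{2k}$; I will do this bookkeeping carefully so the constants land. Taking the better of the two bounds gives the $\max$.

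The main obstacle I anticipate is not conceptual but a matter of getting the clique size and the edge-count rounding to interact cleanly: I need $\binom{m}{2}\le k$ with $m$ as large as possible, $\binom{m}{2}\ge k/2$ so that $\lvert{\cal T}\rvert\ge k/2$, and simultaneously need the deleted-edge graph to still have at least $k/2$ edges — reconciling these is exactly where the hypothesis $k\ge 9$ is consumed, and it requires choosing $m = \lfloor\cdot\rfloor$ or $\lceil\cdot\rceil$ and checking the inequalities by hand (for small $k$ near $9$ the floor/ceiling slack is tightest). A secondary, lesser obstacle is justifying the precise correlation-drop bound $\sim\frac{\lambda m e^{\lambda}}{e^{\lambda m}}$ used for the clique, which we import wholesale from \cite{santhanam2012information} rather than re-deriving; using instead the weaker Corollary \ref{KLbetgraphs} bound $1-\frac{2}{1+(\cosh\lambda)^{m-2}}$ would give a correct but slightly looser constant, so I will cite the sharper estimate to match the stated theorem.
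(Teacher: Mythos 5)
Your proposal is correct and follows essentially the same route as the paper: a clique $G_0$ on the largest $m$ with $\binom{m}{2}\leq k$, the family ${\cal T}$ of all single-edge deletions (of size $\binom{m}{2}\geq k/2$), and a single-center KL-ball whose radius is the minimum of the $1$-Hamming bound $\lambda\tanh\lambda$ from Lemma \ref{lem:1hamcorr} and the clique correlation bound imported from \cite{santhanam2012information}, which together yield the two terms of the max. The only cosmetic difference is that you present this as two separate ensembles while the paper uses one ensemble with $\rho=\min\left(\frac{2\lambda e^{\lambda}(\sqrt{2k}+1)}{e^{\lambda(\sqrt{2k}-1)}},\lambda\tanh\lambda\right)$; these are equivalent.
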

Note that the second term in the bound above is from \cite{santhanam2012information}. If we restrict $\lambda$ to prevent exponential growth in the number of samples, we get a sample requirement of $n=\Omega(k\log k)$. Again, we match the lower bound for the edge bounded class in \cite{santhanam2012information}, but through a smaller class.
\vspace*{-0.4cm} 
\section{Erd\H{o}s-R\'{e}nyi graphs $G(p,c/p)$}
\label{sec:lowerbound_random}
 In this section, we relate the number of samples required to learn $G \sim G(p,c/p)$ for the dense case, for guaranteeing a constant average probability of error $p_{\mathrm{avg}}$.  We have the following main result whose proof can be found in the Appendix.
 \begin{theorem}\label{erdosrenyithm}
    Let $G \sim G(p,c/p)$, $c= \Omega (p^{3/4}+\epsilon'), \epsilon'>0$. For this class of random graphs, if $p_{avg} \leq 1/90$, then $n \geq \max \left( n_1,n_2\right)$ where:
       \begin{align}
          n_1 =  \frac{H(c/p)(3/80)\left(1  - 80p_{\mathrm{avg}}- O(1/p)\right)}{\left( \frac{4\lambda p}{3} \exp (- \frac{p}{36} )+4\exp ( - \frac{p^{\frac{3}{2}}}{144} )  +     \frac{4 \lambda }{9 \left(1+ (\cosh(2\lambda))^{\frac{c^2}{6p}} \right)}  \right)} , n_2= \frac{p}{4} H(c/p) (1-3p_{\mathrm{avg}}) - O(1/p) 
       \end{align}
 \end{theorem}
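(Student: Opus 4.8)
The plan is to instantiate the Conditional Fano machinery of Section~\ref{main_res} with a high-probability ``good'' subset ${\cal T}$ of the support of $G(p,c/p)$. The conditional version (Lemma~\ref{lem:fano_con}) is essential here because for Erd\H{o}s--R\'enyi graphs $H(G) = {p\choose 2}H(c/p)$ sits far below $\log\lvert{\cal G}\rvert = {p\choose 2}\log 2$, so Lemma~\ref{lem:fano_orig} is vacuous. The bound $n_2$ will come from feeding the trivial estimate $I(G;X^n\mid G\in{\cal T})\le H(X^n)\le np$ into Lemma~\ref{lem:fano_con}; the bound $n_1$ will come from feeding in the KL-covering estimate of Corollary~\ref{corr:pmax_cover} (in its $p_{avg}$ analogue), using the $\ell=2$ case of the structural correlation bound of Lemma~\ref{boundcorrel}.

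For $n_2$ I would take ${\cal T}$ to be the graphs whose edge count (and, if convenient, maximum degree) lies in a narrow concentration window about its mean. Since $c=\Omega(p^{3/4+\epsilon'})$, Chernoff bounds give $\mu({\cal T}) = 1 - o(1/p)$; conditioned on ${\cal T}$ the law of $G$ is nearly uniform, so the standard entropy estimate for a short sum of binomial coefficients gives $H(G\mid G\in{\cal T}) = {p\choose 2}H(c/p)(1-o(1))$ and $\log\lvert{\cal T}\rvert = {p\choose 2}H(c/p)(1+o(1))$. Substituting $I\le np$ into Lemma~\ref{lem:fano_con} and solving for $n$, while tracking the $\mu({\cal T})$ factor and the $o(\cdot)$ corrections, yields $n\ge \frac{p}{4}H(c/p)(1-3p_{avg})-O(1/p)=n_2$.

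For $n_1$ the structural input is: in this dense regime, with high probability \emph{every} pair of vertices $(a,b)$ has at least $c^2/(6p)$ common neighbours, and is therefore $(2,c^2/(6p))$ connected, since each common neighbour supplies an internally node-disjoint length-$2$ path. Indeed the number of common neighbours is $\mathrm{Bin}(p-2,(c/p)^2)$ with mean $\approx c^2/p$; a Chernoff bound gives per-pair failure probability $e^{-\Omega(c^2/p)}$, and a union bound over the ${p\choose 2}$ pairs is affordable precisely because $c^2/p=\omega(\log p)$ -- this is exactly where the density hypothesis $c=\Omega(p^{3/4+\epsilon'})$ enters. Let ${\cal T}$ be the graphs avoiding this and a few companion concentration events (degree/edge-count control); their total probability, weighted by the at-most-$\lambda$-per-edge KL contribution they can cause, is what surfaces as the additive terms $\frac{4\lambda p}{3}\exp(-p/36)$ and $4\exp(-p^{3/2}/144)$ in the denominator. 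For any $G,G'\in{\cal T}$, every pair in $E\Delta E'$ is $(2,c^2/(6p))$ connected in both graphs, so Corollary~\ref{KLbetgraphs} with $\ell=2$ -- invoking the identity $\frac{1+\tanh^2\lambda}{1-\tanh^2\lambda}=\cosh 2\lambda$ -- bounds $D(f_G\lVert f_{G'})$ by $\frac{2\lambda\lvert E\Delta E'\rvert}{1+(\cosh 2\lambda)^{c^2/(6p)}}$, while on the complement of the good event one bounds this crudely by $\le\lambda$ per differing edge. Thus ${\cal T}$ lies inside a single KL-ball of radius equal to the displayed $\epsilon$, so $\log\lvert C_{\cal T}(\epsilon)\rvert=O(1)$. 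To keep the coefficient of $p_{avg}$ an absolute constant (rather than blowing up as $\Theta(1/H(c/p))$), ${\cal T}$ is further restricted to a sub-collection on which $G$ is essentially uniform, so that $H(G\mid G\in{\cal T})$ and $\log\lvert{\cal T}\rvert$ are comparable. Plugging $I(G;X^n\mid G\in{\cal T})\le\log\lvert C_{\cal T}(\epsilon)\rvert+n\epsilon$ into Lemma~\ref{lem:fano_con} with these estimates and solving for $n$ gives $n\ge n_1$, hence $n\ge\max(n_1,n_2)$.

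The main obstacle is the structural step in the third paragraph: establishing the common-neighbour lower bound \emph{uniformly over all vertex pairs} -- which is what dictates the precise threshold $c=\Omega(p^{3/4+\epsilon'})$ -- and then the bookkeeping that converts ``bad events are exponentially rare'' into the exact additive error terms of $\epsilon$, together with verifying that the genuine term $\frac{4\lambda}{9(1+(\cosh 2\lambda)^{c^2/(6p)})}$ dominates them in the constant-$\lambda$ regime, so that $n_1$ is genuinely exponential in $c^2/p$ (and hence super-polynomial, in contrast to the polynomial upper bound of~\cite{anandkumar2012high}). Pinning down the particular numerical constants ($3/80$, $1/90$, $6$, $36$, $144$, and so on) that fall out of the concentration inequalities and the entropy/cardinality estimates for ${\cal T}$ is tedious but routine once this structure is in place.
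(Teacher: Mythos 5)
Your overall architecture matches the paper's: conditional Fano on the typical set, the counting bound of \cite{anandkumar2012high} for $n_2$, and for $n_1$ a Yang--Barron covering bound on $I(G;X^n\mid G\in{\cal T}^p_\epsilon)$ driven by $(2,c^2/6p)$-connectivity through common neighbours and Corollary \ref{KLbetgraphs} with $\ell=2$ (whence $\cosh 2\lambda$). But the covering step is where you genuinely diverge, and where your sketch is both different from and weaker than what you would need. The paper does \emph{not} place ${\cal T}$ inside a single KL-ball. It partitions $V$ into three blocks $A,B,C$ of size $p/3$, declares a pair $(a,c)\in A\times C$ well-connected only via common neighbours \emph{in $B$}, and covers $G$ by $G_{{\cal D}=\emptyset}$, the graph with all edges inside the well-connected pairs ${\cal D}(G)$ deleted. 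The tripartition is not cosmetic: it makes ${\cal D}(G)$ a function of the edges \emph{outside} $A\times C$, so that ${\cal D}(G_{{\cal D}=\emptyset})={\cal D}(G)$, the covering map is well-defined, and one can count exactly how many typical graphs each covering element covers. That count shows $\log|S|\le\frac{9}{10}(1+\tfrac{11}{9}\epsilon)/(1+\epsilon)\cdot\log|{\cal T}^p_\epsilon|$ --- an \emph{exponentially large} covering set whose log-cardinality falls short of $\log|{\cal T}^p_\epsilon|$ by a constant factor --- and it is precisely this $\tfrac{1}{10}$ deficit that produces the constants $3/80$, $80p_{\mathrm{avg}}$ and the threshold $p_{\mathrm{avg}}\le 1/90$ in the statement. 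Your single-ball picture with $\log|C_{\cal T}(\epsilon)|=O(1)$ cannot produce these constants, and it is also internally inconsistent with your claim that the bad-event probabilities surface \emph{additively} in the denominator: those additive terms $\frac{4\lambda p}{3}e^{-p/36}$ and $4e^{-p^{3/2}/144}$ arise in the paper because one conditions on the full typical set and splits the mutual-information sum over ${\cal R}^p_\epsilon$ and its complement (bounding the KL crudely by $2\lambda\binom{p}{2}$ on the complement), not because bad graphs are excised from ${\cal T}$.

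A second, smaller point: the paper's per-pair estimate (Lemma \ref{2gammaconnect}) only gives failure probability $1/p^2$, so a union bound over all $\binom{p}{2}$ pairs fails with the paper's own lemma; instead the paper proves that \emph{most} pairs in $A\times C$ are well-connected, using a decomposition of $K_{p/3,p/3}$ into $p/3$ perfect matchings to restore independence for Chernoff --- this is where $e^{-p/36}$ comes from. Your sharper per-pair bound $e^{-\Omega(c^2/p)}$ is correct and does make the all-pairs union bound affordable in this regime, so a single-centre covering of the all-pairs-good typical graphs is plausibly salvageable and would even give a lower bound of the same exponential order $(\cosh 2\lambda)^{c^2/6p}$ (indeed larger by a factor $\sim p/c$, since $|E\Delta E'|\lesssim cp$ rather than $(p/3)^2$). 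But to prove the theorem \emph{as stated}, with its particular constants, you would still have to carry out the paper's tripartition-and-partial-erasure construction; the constants are not generic fallout of ``routine'' concentration but are tied to that specific covering.
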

 
\begin{remark}In the denominator of the first expression, the dominating term is $\frac{4 \lambda }{9 \left(1+ (\cosh(2\lambda))^{\frac{c^2}{6p}} \right)} $. Therefore, we have the following corollary.\end{remark}
 
 \begin{corollary}
   Let $G \sim G(p,c/p)$, $c =\Omega(p^{3/4+\epsilon'})$ for any $\epsilon'>0$. Let $p_{\mathrm{avg}} \leq 1/90$,  then 
    \begin{enumerate} 
       \item $\lambda = \Omega(\sqrt{p}/c): \Omega \left( \lambda H(c/p)(\cosh(2\lambda))^{\frac{c^2}{6p}} \right)$ samples are needed.
        \item $\lambda < O(\sqrt{p}/c):\Omega(c \log p)$ samples are needed. (This bound is from  \cite{anandkumar2012high} )
     \end{enumerate}   
 \end{corollary}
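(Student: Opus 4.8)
The corollary follows directly from Theorem~\ref{erdosrenyithm} by specializing the bound $n\ge\max(n_1,n_2)$ to the two ranges of $\lambda$ and simplifying. The hypothesis $p_{\mathrm{avg}}\le 1/90$ is used only to keep the numerator factors $1-80p_{\mathrm{avg}}-O(1/p)$ and $1-3p_{\mathrm{avg}}-O(1/p)$ bounded away from $0$ (indeed $1-80p_{\mathrm{avg}}\ge 1/9$), so asymptotically these may be treated as positive constants.

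For the first part, $\lambda=\Omega(\sqrt{p}/c)$, the plan is to use $n\ge n_1$ and first establish the claim of the preceding remark, namely that among the three summands of the denominator of $n_1$ the term $\frac{4\lambda}{9\left(1+(\cosh(2\lambda))^{c^2/6p}\right)}$ dominates. The other two summands $\frac{4\lambda p}{3}e^{-p/36}$ and $4e^{-p^{3/2}/144}$ carry the exponentially small factors $e^{-p/36}$ and $e^{-p^{3/2}/144}$, whereas $\frac{4\lambda}{9\left(1+(\cosh(2\lambda))^{c^2/6p}\right)}=\lambda\cdot\Theta\!\left((\cosh(2\lambda))^{-c^2/6p}\right)$; under the stated scaling of $c$, and assuming $\lambda$ is not so large that $(\cosh(2\lambda))^{c^2/6p}$ itself reaches order $e^{p/36}/p$, the first two summands are $o(1)$ times the third, so the denominator is $\Theta\!\left(\lambda\,(\cosh(2\lambda))^{-c^2/6p}\right)$. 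Since $(\cosh(2\lambda))^{c^2/6p}\ge 1$, we also have $1+(\cosh(2\lambda))^{c^2/6p}=\Theta\!\left((\cosh(2\lambda))^{c^2/6p}\right)$. Substituting these estimates into the expression for $n_1$ and collecting constants yields the bound stated in the first part.

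For the second part, $\lambda<O(\sqrt{p}/c)$, the plan is instead to use $n\ge n_2=\frac{p}{4}H(c/p)(1-3p_{\mathrm{avg}})-O(1/p)$. Under the stated scaling one has $c/p\to 0$, so the binary entropy satisfies $H(c/p)=\Theta\!\left(\tfrac{c}{p}\log\tfrac{p}{c}\right)$ with $\log(p/c)=\Theta(\log p)$; hence $\frac{p}{4}H(c/p)=\Theta(c\log p)$, the $O(1/p)$ correction is negligible, and $n=\Omega(c\log p)$, recovering the bound reported in \cite{anandkumar2012high}. The reason the statement splits at $\lambda\sim\sqrt{p}/c$ is simply that $n_1$ is the larger of the two bounds once $\lambda$ is sufficiently above this threshold, while $n_2$ is the binding one below it.

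\textbf{Main obstacle.} The only step that is not pure substitution is the verification, in the first part, that the two $p$-exponentially small summands in the denominator of $n_1$ are negligible next to $\frac{4\lambda}{9\left(1+(\cosh(2\lambda))^{c^2/6p}\right)}$, i.e.\ that $(\cosh(2\lambda))^{c^2/6p}=o\!\left(e^{p/36}/p\right)$; this is precisely where the scaling $c=\Omega(p^{3/4+\epsilon'})$ (together with an implicit mild upper bound on $\lambda$) is used. The remaining ingredients — the binary-entropy asymptotics, the bookkeeping of the constants coming from $p_{\mathrm{avg}}\le 1/90$, and the comparison between $n_1$ and $n_2$ — are routine.
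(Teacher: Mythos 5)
Your proposal follows essentially the same route as the paper, which proves this corollary only via the remark preceding it: the third summand $\frac{4\lambda}{9\left(1+(\cosh(2\lambda))^{c^2/6p}\right)}$ dominates the denominator of $n_1$ (giving part 1), while part 2 is read off from $n_2$ together with the binary-entropy asymptotics $\frac{p}{4}H(c/p)=\Theta(c\log p)$, exactly as you argue, and your flagged obstacle (an implicit upper bound on $\lambda$ so that $(\cosh(2\lambda))^{c^2/6p}$ does not overwhelm $e^{p/36}/p$) is indeed left implicit in the paper as well. One caveat: direct substitution actually yields $n_1=\Omega\left(H(c/p)\,(\cosh(2\lambda))^{c^2/6p}/\lambda\right)$, with $1/\lambda$ rather than the factor $\lambda$ appearing in the corollary's statement, so your assertion that substitution ``yields the bound stated'' glosses over a $\lambda^2$ discrepancy (the stated bound follows a fortiori only when $\lambda\le 1$, and is most plausibly a typo in the paper).
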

 
\begin{remark}This means that when $\lambda = \Omega (\sqrt{p}/c)$, a huge number (exponential for constant $\lambda$) of samples are required. Hence, for any efficient algorithm, we require $\lambda = O \left( \sqrt{p}/c\right)$ and in this regime $O\left( c \log p \right)$ samples are required to learn. \end{remark}
 
  \subsection{Proof Outline}
     The proof skeleton is based on Lemma \ref{lem:fano_con}. The essence of the proof is to cover a set of graphs ${\cal T}$, with large measure, by an exponentially small set where the KL-divergence between any covered and the covering graph is also very small. For this we use Corollary \ref{KLbetgraphs}. The key steps in the proof are outlined below:
     \begin{enumerate}
       \item We identify a subclass of graphs ${\cal T}$, as in Lemma \ref{lem:fano_con}, whose measure is close to $1$, i.e. $\mu({\cal T})=1-o(1)$.
       A natural candidate is the 'typical' set ${\cal T}^p_{\epsilon}$ which is defined to be a set of graphs each with $(\frac{cp}{2}-\frac{cp\epsilon}{2},\frac{cp}{2}+\frac{cp \epsilon}{2})$ edges in the graph. 
       \item (Path property) We show that most graphs in ${\cal T}$ have property ${\cal R}$: there are $O(p^2)$ pairs of nodes such that every pair is \textit{well connected} by $O(\frac{c^2}{p})$ node disjoint paths of length $2$ with high probability.  The measure $\mu({\cal R} \left\lvert {\cal T} \right. )=1-\delta_1$.
       \item \label{item3}(Covering with low diameter)  \emph{ Every graph $G$ in ${\cal R} \bigcap {\cal T} $ is covered by a graph $G'$ from a covering set $C_{{\cal R}} (\delta_2)$ such that their edge set differs \textbf{only} in the $O(p^2)$ nodes that are well connected.} Therefore, by Corollary \ref{KLbetgraphs}, KL-divergence between $G$ and $G'$ is very small ($\delta_2=O(\lambda p^2 \cosh (\lambda)^{-c^2/p})$). 
      \item  \label{item4}(Efficient covering in Size) Further, the covering set $C_{{\cal R}}$ is exponentially smaller than ${\cal T}$. 
      \item (Uncovered graphs have exponentially low measure) Then we show that the uncovered graphs have large KL-divergence $\left(O(p^2 \lambda)\right)$ but their measure $\mu({\cal R}^c \left\lvert {\cal T} \right. )$ is exponentially small.
      \item Using a similar (but more involved) expression for probability of error as in Corollary \ref{corr:pmax_cover}, roughly we need $ O (\frac{ \log \lvert T \rvert}{\delta_1+\delta_2} )$ samples. 
     \end{enumerate}
     
   The above technique is very general. Potentially this could be applied to other random graph classes.

\section{Summary}
\label{sec:summary}
In this paper, we have explored new approaches for computing sample complexity lower bounds for Ising models. By explicitly bringing out the dependence on the weights of the model, we have shown that unless the weights are restricted, the model may be hard to learn. For example, it is hard to learn a graph which has many paths between many pairs of vertices, unless $\lambda$ is controlled. For the random graph setting, ${\cal G}_{p,c/p}$, while achievability is possible in the $c = \text{poly} \log p$ case\cite{anandkumar2012high}, we have shown lower bounds for $c > p^{0.75}$.  Closing this gap remains a problem for future consideration. The application of our approaches to other deterministic/random graph classes such as the Chung-Lu model\cite{ChungLu06} (a generalization of Erd\H{o}s-R\'{e}nyi graphs), or small-world graphs\cite{Watt98} would also be interesting.

\subsubsection*{Acknowledgments} R.T. and P.R. acknowledge the support of ARO via W911NF-12-1-0390 and NSF via IIS-1149803, IIS-1320894, IIS-1447574, and DMS-1264033. K.S. and A.D. acknowledge the support of NSF via CCF 1422549, 1344364, 1344179 and DARPA STTR and a ARO YIP award.

\newpage
\bibliographystyle{plain}
\bibliography{sml,ProbGraph}

\newpage
\section{Appendix A - Proofs for Section \ref{main_res} and Section \ref{struct_corr}}
\label{app:a}
\subsection{Proof of Lemma \ref{lem:fano_orig}}
\begin{proof}
	Starting with the original statement of Fano's lemma (see \cite[Theorem 2.10.1]{thomascover2006}), we get:
	\begin{align}
		p_{avg} &\geq \frac{H(G) - I(G;\phi(X^n)) - \log 2}{\log \lvert \mathcal{G}\rvert}\nonumber\\
		\hfill &\overset{a}\geq \frac{H(G) - I(G;X^n) - \log 2}{\log \lvert \mathcal{G}\rvert}
	\end{align}
	Here we have: (a) by the Data Processing Inequality (see \cite[Theorem 2.8.1]{thomascover2006})
	
	Now, note that:
	\begin{align}
		p_{avg} = \sum_{G\in {\cal G}} \mathrm{Pr}_{\mu}(G). \mathrm{Pr}\left(\hat{G}\ne G\right) \leq \max_{G\in {\cal G}} \mathrm{Pr}\left(\hat{G}\ne G\right) = p_{\max}
	\end{align}
\end{proof}

\subsection{Proof of Corollary \ref{corr:pmax_all}}
\begin{proof}
We get the stated bound by picking $\mu$ to be a uniform measure on ${\cal G}$ in Lemma \ref{lem:fano_orig}, and then using: $H(G) = \log\lvert{\cal G}\rvert$ and $I(G;X^n)\leq H(X^n)\leq np $.
\end{proof}

\subsection{Proof of Lemma \ref{lem:fano_con}}
\begin{proof}
	The conditional version of Fano's lemma (see \cite[Lemma 9]{anandkumar2012high}) yields:
	\begin{align}
	\expc_{\mu}\left[\mathrm{Pr}\left(\hat{G}\ne G\right)\Big\vert G\in {\cal T} \right] \geq \frac{H(G \vert G\in {\cal T}) - I(G;X^n \vert G\in {\cal T}) - \log 2}{\log \lvert{\cal T}\rvert}
	\end{align}
	Now,
	\begin{align}
	p_{avg} & = \expc_{\mu}\left[\mathrm{Pr}\left(\hat{G}\ne G\right)\right]\nonumber\\
	\hfill & = \mathrm{Pr}_{\mu}\left(G\in {\cal T}\right)\expc_{\mu}\left[\mathrm{Pr}\left(\hat{G}\ne G\right)\Big\vert G\in {\cal T} \right] + \mathrm{Pr}_{\mu}\left(G\notin {\cal T}\right)\expc_{\mu}\left[\mathrm{Pr}\left(\hat{G}\ne G\right)\Big\vert G\notin {\cal T} \right]\nonumber\\
	\hfill &\overset{a}\geq \mathrm{Pr}_{\mu}\left(G\in {\cal T}\right)\expc_{\mu}\left[\mathrm{Pr}\left(\hat{G}\ne G\right)\Big\vert G\in {\cal T} \right]\nonumber\\
	\hfill & \overset{b}\geq \mu\left({\cal T}\right)\frac{H(G \vert G\in {\cal T}) - I(G;X^n \vert G\in {\cal T}) - \log 2}{\log \lvert{\cal T}\rvert}
	\end{align}
	Here we have: (a) since both terms in the equation before are positive. (b) by using the conditional Fano's lemma.
	
	Also, note that:
	\begin{align}
	\expc_{\mu}\left[\mathrm{Pr}\left(\hat{G}\ne G\right)\Big\vert G\in {\cal T} \right] &= \sum_{G\in\mathcal{T}} \mathrm{Pr}_{\mu}\left(G \vert G\in \mathcal{T}\right).\mathrm{Pr}\left(\hat{G}\ne G\right)\nonumber\\
	\hfill & \leq \max_{G\in \mathcal{T}} \mathrm{Pr}\left(\hat{G}\ne G\right)\nonumber\\
	\hfill & \leq \max_{G\in \mathcal{G}} \mathrm{Pr}\left(\hat{G}\ne G\right) = p_{\max}
	\end{align}
\end{proof}

\subsection{Proof of Corollary \ref{corr:pmax_cover}}
\begin{proof}
We pick $\mu$ to be a uniform measure and use $H(G) = \log\lvert{\cal G}\rvert$. In addition, we upper bound the mutual information through an approach in \cite{yang1999information} which relates it to coverings in terms of the KL-divergence as follows:
\begin{align}
	I(G;X^n \vert G\in {\cal T}) & \overset{a}= \sum \limits_{G\in {\cal T}} P_{\mu}(G \vert G \in {\cal T}) \kldist{f_G(\mathbf{x}^n)}{f_X (\mathbf{x}^n)} \nonumber \\
	\hfill &\overset{b}\leq  \sum \limits_{G\in {\cal T}} P_{\mu}(G \vert G \in {\cal T}) \kldist{f_G(\mathbf{x}^n)}{ Q(\mathbf{x}^n))} \nonumber \\
	\hfill &\overset{c}= \sum \limits_{G\in {\cal T}} P_{\mu}(G \vert G \in {\cal T}) D\left(f_G(\mathbf{x}^n)\Bigg\Vert \sum \limits_{G'\in C_{{\cal T}}(\epsilon)}\frac{1}{\lvert C_{{\cal T}}(\epsilon)\rvert}f_{G'}(\mathbf{x}^n)\right)\nonumber \\	
	\hfill & = \sum \limits_{G\in {\cal T}} P_{\mu}(G \vert G \in {\cal T}) \sum_{\xvec^n} f_G(\mathbf{x}^n)\log \left(\frac{f_G(\mathbf{x}^n)}{\sum \limits_{G'\in C_{{\cal T}}(\epsilon)} \frac{1}{\lvert C_{{\cal T}}(\epsilon)\rvert}f_{G'}(\mathbf{x}^n))}\right) \nonumber \\
	\hfill &\overset{d}\leq \log\lvert C_{{\cal T}}(\epsilon)\rvert + n\epsilon
\end{align}

Here we have: (a) $f_X(\cdot) = \sum_{G\in {\cal T}} P_{\mu}(G\vert G\in{\cal T}) f_G(\cdot)$ . (b) $Q(\cdot)$ is any distribution on $\{-1,1\}^{np}$ (see \cite[Section 2.1]{yang1999information}). (c) by picking $Q(\cdot)$ to be the average of the set of distributions $\{f_{G}(\cdot), G \in C_{{\cal T}}(\epsilon)\}$. (d) by lower bounding the denominator sum inside the $\log$ by only the covering element term for each $G\in {\cal T}$. Also using $\kldist{f_G(\xvec^n)}{f_{G'}(\xvec^n)} = n\kldist{f_G}{f_G'} (\leq n\epsilon)$, since the samples are drawn i.i.d.

Plugging these estimates in Lemma \ref{lem:fano_con} gives the corollary.
\end{proof}

\subsection{Proof of Lemma \ref{boundcorrel}}
\begin{proof}
        Consider a graph $G(V,E)$ with two nodes $a$ and $b$ such that there are at least $d$ node disjoint paths of length at most $\ell$ between $a$ and $b$. Consider another graph $G'(V,E')$ with edge set $E' \subseteq E$  such that $E'$ contains only edges belonging to the $d$ node disjoint paths of length $\ell$ between $a$ and $b$. All other edges are absent in $E'$. Let ${\cal P}$ denote the set of node disjoint paths. By Griffith's inequality (see \cite[Theorem 3.1]{dembo2010} ),        
         \begin{align}\label{correlexp1}
              \mathbb{E}_{f_G} \left[ x_a x_b\right] & \geq \mathbb{E}_{f_{G'}} \left[ x_a x_b\right]  \nonumber \\
                            \hfill                                       & = 2P_{G'} \left( x_a x_b=+1 \right) - 1 
         \end{align}         
         Here, $P_{G'}(.)$ denotes the probability of an event under the distribution $f_{G'}$.
          
       We will calculate the ratio $P_{G'} \left( x_a x_b = +1 \right)/ P_{G'} \left( x_a x_b= -1\right)$. Since we have a \emph{zero-field} ising model (\emph{i.e.} no weight on the nodes), $f_{G'} (\mathbf{x}) = f_{G'}(-\mathbf{x}) $. Therefore, we have: 
        \begin{align}\label{correlexp2}     
             \frac{P_{G'} \left( x_a x_b = +1 \right)}{P_{G'} \left( x_a x_b= -1\right)} & = \frac{2 P_{G'} \left( x_a=+1, x_b = +1 \right)}{2 P_{G'} \left( x_a=-1, x_b = +1 \right)} 
        \end{align}      
        Now consider a path $p \in {\cal P}$ of length $\ell_p$ whose end points are $a$ and $b$. Consider an edge $(i,j)$ in the path $p$. We say $i,j$ disagree if $x_i$ and $x_j$ are of opposite signs. Otherwise, we say they agree.  When $x_b=+1$, $x_a$ is $+1$  iff there are even number of disagreements in the path $p$. Odd number of disagreements would correspond to $x_a=-1$, when $x_b=+1$. The location of the disagreements exactly specifies the signs on the remaining variables, when $x_b=+1$. Let $d(p)$ denote the number of disagreements in path $p$. Every agreement contributes a term $\exp(\lambda)$ and every disagreement contributes a term $\exp (-\lambda)$. Now, we use this to bound (\ref{correlexp2}) as follows:
         \begin{align}
                  \frac{P_{G'} \left( x_a x_b = +1 \right)}{P_{G'} \left( x_a x_b= -1\right)}  & \overset{a}= \frac{ \prod \limits_{p \in \cal P} \left( \sum \limits_{d(p)~\mathrm{even} } e^{\lambda \ell_p} e^{-2\lambda d(p) } \right) }  {  \prod \limits_{p \in \cal P} \left( \sum \limits_{d(p)~\mathrm{odd} } e^{\lambda \ell_p} e^{-2\lambda d(p) } \right) }  \nonumber \\ 
                   \hfill & \overset{b}= \frac{ \prod \limits_{p \in \cal P} \left( (1+e^{-2\lambda})^{\ell_p} + (1-e^{-2\lambda})^{\ell_p}   \right) }  {  \prod \limits_{p \in \cal P} \left( (1+e^{-2\lambda})^{\ell_p} - (1-e^{-2\lambda})^{\ell_p} \right) }  
                   \hfill & \overset{c} = \frac{\prod \limits_{p \in \cal P} \left( 1 + (\tanh (\lambda))^{\ell_p}   \right) }  { \prod \limits_{p \in \cal P} \left( 1-(\tanh(\lambda))^{\ell_p} \right)} \\
                   \hfill & \overset{d} \geq \frac{ \left( 1 + (\tanh (\lambda))^{\ell}   \right)^{d} }  { \left( 1-(\tanh(\lambda))^{\ell} \right)^d} 
         \end{align}   
      Here we have: (a) by the discussion above regarding even and odd disagreements. Further, the partition function $Z$ (of $f_{G'}$) cancels in the ratio and since the paths are disjoint, the marginal splits as a product of marginals over each path. (b) using the binomial theorem to add up the even and odd terms separately. (c) $\ell_p \leq \ell,~ \forall p \in {\cal P}$. (d) there are $d$ paths in ${\cal P}$.             
          
         Substituting in (\ref{correlexp1}), we get:
              \begin{equation}
                   \mathbb{E}_{f_G} \left[ x_a x_b\right]  \geq 1 - \frac{2}{1+ \frac{ \left( 1 + (\tanh (\lambda))^{\ell}   \right)^{d} }  { \left( 1-(\tanh(\lambda))^{\ell} \right)^d} }.
              \end{equation} 
   \end{proof} 

\subsection{Proof of Corollary \ref{KLbetgraphs}}   
   \begin{proof}
        From Eq. \eqref{KLexp}, we get:
          \begin{align}
              D \left(f_G \lVert f_{G'} \right) & \leq \sum \limits_{(s,t) \in E-E'} \lambda  \left( \mathbb{E}_{G} \left[x_s x_t \right] - \mathbb{E}_{G'}\left[x_s x_t \right] \right) +  \sum \limits_{(s,t) \in E'- E} \lambda \left( \mathbb{E}_{G'} \left[x_s x_t \right] - \mathbb{E}_{G}\left[x_s x_t \right] \right) \nonumber \\   
                                                              & \overset{a} \leq \sum \limits_{(s,t) \in E-E'} \lambda  \left( 1- \mathbb{E}_{G'}\left[x_s x_t \right] \right) +  \sum \limits_{(s,t) \in E'- E} \lambda \left( 1 - \mathbb{E}_{G}\left[x_s x_t \right] \right) \nonumber \\    
                                                              & \overset{b} \leq  \frac{2 \lambda \lvert E- E' \rvert}{1+ \frac{ \left( 1 + (\tanh (\lambda))^{\ell}   \right)^{d} }  { \left( 1-(\tanh(\lambda))^{\ell} \right)^d} }  + \frac{2 \lambda \lvert E'- E \rvert}{1+ \frac{ \left( 1 + (\tanh (\lambda))^{\ell}   \right)^{d} }  { \left( 1-(\tanh(\lambda))^{\ell} \right)^d} } 
          \end{align}
       Here we have: (a)  $\mathbb{E}_{G} \left[x_s x_t \right] \leq 1$ and $\mathbb{E}_{G'} \left[x_s x_t \right] \leq 1$ (b) for any $(s,t) \in E-E'$, the pair of nodes are $(\ell,d)$ connected. Therefore, bound on $\mathbb{E}_{G'} \left[x_s x_t \right]$ from Lemma \ref{boundcorrel} applies. Similar bound holds for $\mathbb{E}_{G} \left[x_s x_t \right]$ for $(s,t) \in E'-E$.
      \end{proof}
      
\subsection{Proof of Lemma \ref{lem:1hamcorr}}
	\begin{proof}
		Since the graphs $G(V,E)$ and $G'(V,E')$ differ by only the edge $(a,b)\in E$, we have:
		\begin{align}\label{eq:onehamrat}
			\frac{P_{G}(x_a x_b = +1)}{P_{G}(x_a x_b = -1)} & = e^{2\lambda}\frac{P_{G'}(x_a x_b = +1)}{P_{G'}(x_a x_b = -1)}
		\end{align}
		Here, $P_G (\cdot)$ corresponds to the probability of an event under $f_G$. Let $q = P_{G'}(x_a x_b = +1)$. Now, writing the difference of the correlations,
		\begin{align}\label{eq:onehamdiff}
			\expc_{G}\left[x_a x_b \right] - \expc_{G'}\left[x_a x_b\right] & = 2\left(P_{G}(x_a x_b = +1) - P_{G'}(x_a x_b = -1)\right)\nonumber\\
			\hfill & \overset{a}= 2\left(\frac{e^{2\lambda}q}{1-q + e^{2\lambda}q} - q\right)\nonumber\\
			\hfill & = 2\left(e^{2\lambda}-1\right)\left(\frac{q-q^2}{1-q + e^{2\lambda}q}\right)
		\end{align}
		Here we have: (a) by substituting from \eqref{eq:onehamrat}
		
		Let $h(q) = \left(\frac{q-q^2}{1-q + e^{2\lambda}q}\right)$. Since we have $\lambda >0$ \emph{i.e. ferromagnetic} ising model, we know that $q\in[\frac{1}{2},1]$. Also, differentiating $h(q)$, we get:
		\begin{align}
			h'(q) = \frac{1 - 2q - \left(e^{2\lambda}-1\right)q^2}{(1-q+e^{2\lambda}q)^2}
		\end{align}
		It is easy to check that $h'(q)\leq 0$ for $q\in[\frac{1}{2},1]$. Thus, $h(q)$ is a decreasing function, and so, substituting $q=1/2$ in \eqref{eq:onehamdiff}, 
		\begin{align}
			\expc_{G}\left[x_a x_b \right] - \expc_{G'}\left[x_a x_b\right] &\leq \frac{e^{2\lambda}-1}{e^{2\lambda}+1} = \tanh(\lambda)
		\end{align}
	Also, from Eq. \eqref{KLexp},
	\begin{align}
		\kldist{f_{G}}{f_{G'}} & \leq \lambda\left(\expc_{G}\left[x_a x_b \right] - \expc_{G'}\left[x_a x_b\right]\right)\leq \lambda \tanh(\lambda)
	\end{align}
		\end{proof}

\section{Appendix B - Proofs for Section \ref{app_graph}}
\label{app:b}
For the proofs in this section, we will be using the estimate of the number of samples presented in Remark \ref{rem:pmax_cover}. To recapitulate, we had the following generic statement:

For any graph class ${\cal G}$ and its subset ${\cal T} \subset {\cal G}$, suppose we can cover ${\cal T}$ with a single point (denoted by $G_0$) with KL-radius $\rho$, \emph{i.e.} for any other $G\in {\cal T}$, $\kldist{f_G}{f_{G_0}}\leq \rho$. Now, if
\begin{align}\label{eq:nbnd}
n \leq \frac{\log \lvert{\cal T}\rvert}{\rho}(1-\delta)
\end{align}
then $p_{max}\geq\delta$. Note that, assuming ${\cal T}$ is growing with $p$, we have ignored the lower order term.

So, for each of the graph classes under consideration, we shall show how to construct $G_0$, ${\cal T}$ and compute $\rho$.

\subsection{Proof of Theorem \ref{corr:pathres}}\label{subsec:pathres}
\begin{proof}
The graph class is ${\cal G}_{p,\eta}$, the set of all graphs on $p$ vertices with at most $\eta$ ($\eta = o(p)$) paths between any two vertices.

{\bf Constructing $G_0$}: We consider the following basic building block. Take two vertices $(s,t)$ and connect them. In addition, take $\eta -1$ more vertices, and connect them to both $s$ and $t$. Now, there are exactly $\eta$ paths between $(s,t)$. There are $(\eta+1)$ total nodes and $(2\eta-1)$ total edges.

Now, take $\alpha$ disjoint copies of these blocks. We note that we must have $\alpha (\eta+1)\leq p$. We choose $\alpha = \left\lfloor\frac{p}{\eta+1}\right\rfloor\geq \frac{p}{2(\eta+1)}$ suffices.

{\bf Constructing ${\cal T}$ - Ensemble 1}: Starting with $G_0$, we consider the family of graphs ${\cal T}$ obtained by removing the main $(s,t)$ edge from one of the blocks. So, we get $\alpha$ different graphs. Let $G_i$, $i\in[\alpha]$, be the graph obtained by removing this edge from the $i^{th}$ block. Then, note that $G_0$ and $G_i$ only differ by a single pair $(s_i,t_i)$, which is $(2,\eta)$ connected in $G_i$. From Corollary \ref{KLbetgraphs} we have, $\kldist{f_{G_0}}{f_{G_i}}\leq \frac{2\lambda}{1 + \cosh(2\lambda)^{\eta-1}} = \rho$. Plugging $\lvert{\cal T}\rvert = \alpha$, and $\rho$ into Eq. \eqref{eq:nbnd} gives us the second term for the bound in the theorem.

{\bf Constructing ${\cal T}$ - Ensemble 2}: Starting with $G_0$, we consider the family of graphs ${\cal T}$ obtained by removing any edge from one of the blocks. So, we get $\alpha(2\eta-1) \geq \frac{p}{2}$ different graphs. Let $G_i$ be any such graph. Then, note that $G_0$ and $G_i$ only differ by a single edge. From Lemma \ref{lem:1hamcorr} we have, $\kldist{f_{G_0}}{f_{G_i}}\leq \lambda\tanh(\lambda) = \rho$. Plugging $\lvert{\cal T}\rvert\geq p/2$, and $\rho$ into Eq. \eqref{eq:nbnd} gives us the first term for the bound in the theorem.
\end{proof}
\subsection{Proof of Theorem \ref{corr:pathdegres}}
\begin{proof}
The graph class is ${\cal G}_{p,\eta,\gamma}$, the set of all graphs on $p$ vertices with at most $\eta$ paths of length at most $\gamma$ between any two vertices.

{\bf Constructing $G_0$}: We consider the following basic building block. Take two vertices $(s,t)$ and connect them. In addition, take $\eta -1$ more vertices, and connect them to both $s$ and $t$. Also, take another $k$ vertex disjoint paths, each of length $\gamma + 1$, between $(s,t)$. Now, there are exactly $\eta + k$ paths between $(s,t)$, but at most $\eta$ paths of length at most $\gamma$. There are $(k\gamma + \eta + 1)$ total nodes and $(k(\gamma+1)+2\eta-1)$ total edges.

Now, take $\alpha$ disjoint copies of these blocks. Note that we must choose $\alpha$ and $k$ such that $\alpha (k\gamma + \eta + 1)\leq p$. For some $\nu\in (0,1)$, we choose $\alpha = p^{\nu}$. In this case, $k = t_{\nu} = \frac{p^{1-\nu} - (\eta+1)}{\gamma}$ suffices.

{\bf Constructing ${\cal T}$ - Ensemble 1}: Starting with $G_0$, we consider the family of graphs ${\cal T}$ obtained by removing the main $(s,t)$ edge from one of the blocks. So, we get $\alpha$ different graphs. Let $G_i$, $i\in[\alpha]$, be the graph obtained by removing this edge from the $i^{th}$ block. Then, note that $G_0$ and $G_i$ only differ by a single pair $(s_i,t_i)$, which is $(2,\eta-1)$ connected and also $(t_{\nu},\gamma+1)$ connected, in $G_i$. Based on the proof of Lemma \ref{boundcorrel}, the estimate of $\kldist{f_{G_i}}{f_{G_0}}$ can be recomputed by handling the two different sets of correlation contributions from the two sets of  node disjoint paths, and then combining them based on the probabilities. We get, $\kldist{f_{G_0}}{f_{G_i}}\leq \frac{2\lambda}{1 + \left[\cosh(2\lambda)^{\eta-1}\left(\frac{1+\tanh(\lambda)^{\gamma+1}}{1-\tanh(\lambda)^{\gamma+1}}\right)^{t_{\nu}}\right]} = \rho$. Plugging $\lvert{\cal T}\rvert = \alpha$, and $\rho$ into Eq. \eqref{eq:nbnd} gives us the second term for the bound in the theorem.

{\bf Constructing ${\cal T}$ - Ensemble 2}: Starting with $G_0$, we consider the family of graphs ${\cal T}$ obtained by removing any edge from one of the blocks. So, we get $\alpha(k(\gamma+1) + 2\eta -1) \geq \frac{p}{2}$ different graphs. Let $G_i$ be any such graph. Then, note that $G_0$ and $G_i$ only differ by a single edge. From Lemma \ref{lem:1hamcorr} we have, $\kldist{f_{G_0}}{f_{G_i}}\leq \lambda\tanh(\lambda) = \rho$. Plugging $\lvert{\cal T}\rvert$ and $\rho$ into Eq. \eqref{eq:nbnd} gives us the second term for the bound in the theorem.
\end{proof}
\subsection{Proof of Theorem \ref{corr:girth}}
\begin{proof}
The graph class is ${\cal G}_{p,g,d}$, the set of all graphs on $p$ vertices with girth atleast $g$ and degree at most $d$.

{\bf Constructing $G_0$}: We consider the following basic building block. Take two vertices $(s,t)$ and connect them. In addition, take $k$ vertex disjoint paths, each of length $g-1$ between $(s,t)$. Now, there are exactly $k$ paths between $(s,t)$. There are $(k(g-2)+2)$ total nodes and $(k(g-1)+1)$ total edges.

Now, take $\alpha$ disjoint copies of these blocks. Note that we must choose $\alpha$ and $k$ such that $\alpha (k(g-2)+2)\leq p$. For some $\nu\in (0,1)$, we choose $\alpha = p^{\nu}$. In this case, $k = d_{\nu} = \min\left(d,\frac{p^{1-\nu}}{g}\right)$ suffices.

{\bf Constructing ${\cal T}$ - Ensemble 1}: Starting with $G_0$, we consider the family of graphs ${\cal T}$ obtained by removing the main $(s,t)$ edge from one of the blocks. So, we get $\alpha$ different graphs. Let $G_i$, $i\in[\alpha]$, be the graph obtained by removing this edge from the $i^{th}$ block. Then, note that $G_0$ and $G_i$ only differ by a single pair $(s_i,t_i)$, which is $(d_{\nu},g-1)$ connected in $G_i$. From Corollary \ref{KLbetgraphs} we have, $\kldist{f_{G_0}}{f_{G_i}}\leq \frac{2\lambda}{1 + \left(\frac{1+\tanh(\lambda)^{g-1}}{1-\tanh(\lambda)^{g-1}}\right)^{d_{\nu}}} = \rho$. Plugging $\lvert{\cal T}\rvert = \alpha$, and $\rho$ into Eq. \eqref{eq:nbnd} gives us the second term for the bound in the theorem.

{\bf Constructing ${\cal T}$ - Ensemble 2}: Starting with $G_0$, we consider the family of graphs ${\cal T}$ obtained by removing any edge from one of the blocks. So, we get $\alpha(k(g-1)+1) \geq \frac{p}{2}$ different graphs. Let $G_i$ be any such graph. Then, note that $G_0$ and $G_i$ only differ by a single edge. From Lemma \ref{lem:1hamcorr} we have, $\kldist{f_{G_0}}{f_{G_i}}\leq \lambda\tanh(\lambda) = \rho$. Plugging $\lvert{\cal T}\rvert$ and $\rho$ into Eq. \eqref{eq:nbnd} gives us the second term for the bound in the theorem.
\end{proof}
\subsection{Proof of Theorem \ref{corr:dreg}}
\begin{proof}
The graph class is ${\cal G}_{p,d}^{\text{approx}}$, the set of all graphs on $p$ vertices with degree either $d$ or $d-1$ (we assume that $p$ is a multiple of $d+1$ - if not, we can instead look at a smaller class by ignoring at most $d$ vertices). The construction here is the same as in \cite{santhanam2012information}. 

{\bf Constructing $G_0$}: We divide the vertices into $p/(d+1)$ groups, each of size $d+1$, and then form cliques in each group.

{\bf Constructing ${\cal T}$}: Starting with $G_0$, we consider the family of graphs ${\cal T}$ obtained by removing any one edge. Thus, we get $\frac{p}{d+1}\binom{d+1}{2}\geq \frac{pd}{4}$ such graphs. Also, any such graph, $G_i$, differs from $G_0$ by a single edge, and also, differs only in a pair that is part of a clique minus one edge. So, combining the estimates from \cite{santhanam2012information} and Lemma \ref{lem:1hamcorr}, we have, $\kldist{f_{G_0}}{f_{G_i}}\leq \min\left(\frac{2\lambda d e^{\lambda}}{e^{\lambda d}},\lambda\tanh(\lambda)\right) = \rho$. Plugging $\lvert{\cal T}\rvert$ and $\rho$ into Eq. \eqref{eq:nbnd} gives us the theorem.
\end{proof}

\subsection{Proof of Theorem \ref{corr:edgebdd}}
\begin{proof}
The graph class is ${\cal G}_{p,k}^{\text{approx}}$, the set of all graphs on $p$ vertices with at most $k$ edges. The construction here is the same as in \cite{santhanam2012information}

{\bf Constructing $G_0$}: We choose a largest possible number of vertices $m$ such that we can have a clique on them \emph{i.e.} $\binom{m}{2}\leq k$. Then, $\sqrt{2k}+1\geq m\geq\sqrt{2k}-1$. We ignore any unused vertices.

{\bf Constructing ${\cal T}$}: Starting with $G_0$, we consider the family of graphs ${\cal T}$ obtained by removing any one edge. Thus, we get $\binom{m}{2}\geq \frac{k}{2}$ such graphs. Also, any such graph, $G_i$, differs from $G_0$ by a single edge, and also, differs only in a pair that is part of a clique minus one edge. So, combining the estimates from \cite{santhanam2012information} and Lemma \ref{lem:1hamcorr}, we have, $\kldist{f_{G_0}}{f_{G_i}}\leq \min\left(\frac{2\lambda e^{\lambda}(\sqrt{2k}+1)}{e^{\lambda (\sqrt{2k}-1)}},\lambda\tanh(\lambda)\right) = \rho$. Plugging $\lvert{\cal T}\rvert$ and $\rho$ into Eq. \eqref{eq:nbnd} gives us the theorem.
\end{proof}

\section{Appendix C: Proof of Theorem \ref{erdosrenyithm} }
\label{app:c} 
In this section, we outline the covering arguments in detail along with a Fano's Lemma variant to prove Theorem \ref{erdosrenyithm}.
  
 We recall some definitions and results from \cite{anandkumar2012high}.
 \begin{definition}
    Let ${\cal T}_{\epsilon}^n = \{ G: \lvert \bar{d}(G) - c \rvert \leq c \epsilon \}$ denote the $\epsilon$-typical set of graphs where $\bar{d}(G)$ is the ratio of sum of degree of nodes to the total number of nodes.
 \end{definition}
 
 A graph $G$ on $p$ nodes is drawn according to the distribution characterizing the Erd\H{o}s-R\'{e}nyi ensemble $G(p,c/p)$ (also denoted ${\cal G}_{\mathrm{ER}}$ without the parameter $c$). Then $n$ i.i.d samples $\mathbf{X}^n = \mathbf{X}^{(1)},\ldots \mathbf{X}^{(n)}$ are drawn according to $f_{G}(\mathbf{x})$ with the scalar weight $\lambda>0$. Let $H(\cdot)$ denote the binary entropy function.
 
 \begin{lemma}\label{typicalprop}
  (Lemma $8,9$ and Proof of Theorem $4$ in \cite{anandkumar2012high} ) The $\epsilon$- typical set satisfies:
   \begin{enumerate}
       \item $P_{G \sim G(p,c/p)} \left( G \in {\cal T}^{p}_{\epsilon} \right)= 1 - a_p$ where $a_p \rightarrow 0$ as $p \rightarrow \infty$.
       \item $2^{-{p \choose 2} H(c/p) (1+\epsilon) } \leq P_{G \sim G(p,c/p)} \left( G \right) \leq 2^{-{p \choose 2} H(c/p) } $.
       \item $(1-\epsilon) 2^{ {p \choose 2} H(c/p)} \leq \lvert {\cal T}^p_{\epsilon} \rvert \leq  2^{ {p \choose 2} H(c/p) (1+\epsilon)}$ for sufficiently large $p$.
       \item $H(G \lvert G \in {\cal T}^p_\epsilon) \geq {p \choose 2} H(c/p)$.
       \item (Conditional Fano's Inequality:)
                  \begin{equation}\label{eqn:condfano}
                  P (\hat{G}(\mathbf{X}^n) \neq G \lvert G \in {\cal T}^p_\epsilon ) \geq \frac{H(G \lvert G \in {\cal T}^p_\epsilon) - I(G; \mathbf{X}^n \lvert G \in {\cal T}^p_\epsilon )-1}{\log_2 \lvert {\cal T}^p_\epsilon\rvert}
                  \end{equation}
   \end{enumerate}
 \end{lemma}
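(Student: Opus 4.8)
The plan is to read this off as a bundle of standard AEP-style facts for the Erd\H{o}s-R\'{e}nyi measure, each obtained by an elementary computation, with the only genuinely probabilistic input being a concentration bound for the number of edges and the last item being nothing but the conditional form of Fano's inequality. Write $q=c/p$, and note that a labelled graph $G$ on $p$ vertices with $m$ edges has $P_{G\sim G(p,q)}(G)=q^m(1-q)^{\binom{p}{2}-m}$, while $\bar d(G)=\tfrac{2m}{p}$; hence $G\in{\cal T}^p_\epsilon$ is exactly the statement that $m$ lies within a $(1\pm\epsilon)$ multiplicative window of its mean $\binom{p}{2}q=\tfrac{(p-1)c}{2}$.

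For item~1, $m\sim\mathrm{Binomial}\!\left(\binom{p}{2},q\right)$ has mean $\mu_m=\binom{p}{2}q=\Theta(cp)$ and variance $\binom{p}{2}q(1-q)\le\mu_m$, and $G\in{\cal T}^p_\epsilon$ amounts to a deviation of $m$ from $\mu_m$ by a $\bigl(\epsilon-o(1)\bigr)$-fraction, so Chebyshev bounds $a_p$ by $O\!\left(\tfrac{1}{\epsilon^2 cp}\right)\to 0$ (a Chernoff bound sharpens the rate but is not needed). For item~2, take logarithms: $-\log_2 P(G)=m\log_2\tfrac1q+\bigl(\binom{p}{2}-m\bigr)\log_2\tfrac1{1-q}$ is an increasing linear function of $m$, with slope $\log_2\tfrac{1-q}{q}>0$ and value exactly $\binom{p}{2}H(q)$ at $m=\binom{p}{2}q$, so on the $(1\pm\epsilon)$ window it equals $\binom{p}{2}H(q)\bigl(1\pm O(\epsilon)\bigr)$, which after relabelling $\epsilon$ is the two-sided bound stated. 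Items~3 and~4 then follow with no further randomness: summing $1\ge\sum_{G\in{\cal T}^p_\epsilon}P(G)\ge\lvert{\cal T}^p_\epsilon\rvert\,2^{-\binom{p}{2}H(q)(1+\epsilon)}$ and $1-a_p=\sum_{G\in{\cal T}^p_\epsilon}P(G)\le\lvert{\cal T}^p_\epsilon\rvert\,2^{-\binom{p}{2}H(q)}$ yields the cardinality sandwich of item~3 (for $p$ large enough that $1-a_p\ge 1-\epsilon$), and since $P(G\mid G\in{\cal T}^p_\epsilon)=P(G)/(1-a_p)$, every atom satisfies $-\log_2 P(G\mid G\in{\cal T}^p_\epsilon)\ge\binom{p}{2}H(q)-o(1)$, so averaging gives item~4.

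Item~5 is Fano's inequality for the conditional law of $G$ given $G\in{\cal T}^p_\epsilon$: from the standard bound $P(\hat{G}\ne G\mid G\in{\cal T}^p_\epsilon)\ge\frac{H(G\mid G\in{\cal T}^p_\epsilon)-I(G;\hat{G}\mid G\in{\cal T}^p_\epsilon)-1}{\log_2\lvert{\cal T}^p_\epsilon\rvert}$ (the ``$-1$'' being ``$-\log 2$'' measured in bits), the data-processing inequality $I(G;\hat{G}\mid\cdot)\le I(G;\mathbf{X}^n\mid\cdot)$ along the chain $G\to\mathbf{X}^n\to\hat{G}$ gives \eqref{eqn:condfano}; this is exactly \cite[Lemma 9]{anandkumar2012high} and parallels the argument in the proof of Lemma~\ref{lem:fano_con}. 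The only step that needs genuine care is item~2 — pinning the constant multiplying $\binom{p}{2}H(c/p)$ and verifying that the $(1\pm\epsilon)$ edge-window collapses cleanly into a two-sided estimate once the $O(1/p)$ and lower-order corrections are folded into $\epsilon$ — and that is the part I would write out in full; everything else is routine bookkeeping together with a citation to textbook Fano.
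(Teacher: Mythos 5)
The paper itself gives no proof of this lemma: it is imported wholesale from \cite{anandkumar2012high} (Lemmas 8, 9 and the proof of Theorem 4 there), so there is nothing in-paper to compare your argument against line by line. Your self-contained AEP-style derivation is the right one and matches what the cited source does: Chebyshev (or Chernoff) on the binomial edge count for item 1, the exact formula $-\log_2 P(G)=m\log_2\tfrac1q+\bigl(\binom{p}{2}-m\bigr)\log_2\tfrac1{1-q}$ evaluated on the typical window for item 2, the standard summation sandwich for item 3, averaging for item 4, and conditional Fano plus data processing for item 5 (which indeed parallels the paper's own proof of Lemma \ref{lem:fano_con}). The one place where your write-up is slightly looser than the stated constants is exactly the place you flag: the window $\lvert \bar d(G)-c\rvert\le c\epsilon$ is centered at $m=cp/2$, not at the true mean $\binom{p}{2}(c/p)=c(p-1)/2$, and it contains graphs with $m$ strictly below the mean; for those, $-\log_2 P(G)<\binom{p}{2}H(c/p)$, so the clean one-sided bound $P(G)\le 2^{-\binom{p}{2}H(c/p)}$ in item 2 --- and hence the exact prefactors in items 3 and 4 --- only hold after absorbing an $O(\epsilon)$ (or $O(1/p)$) correction into the exponent, which ``relabelling $\epsilon$'' does not quite accomplish for the upper bound as literally stated. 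This is a defect of the quoted statement rather than of your argument, and it is harmless downstream: Theorem \ref{erdosrenyithm} already carries $O(1/p)$ slack that swallows these corrections. If you do write item 2 out in full as you propose, state it with the two-sided $(1\pm\epsilon)$ exponents your computation actually yields.
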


 \subsection{Covering Argument through Fano's Inequality}\label{coveringstart}
      Now, we consider the random graph class $G(p,c/p)$. Consider a learning algorithm $\phi$. Given a graph $G \sim G(p,c/p)$, and $n$ samples $\mathbf{X}^n$ drawn according to distribution $f_{G}(\mathbf{x})$ (with weight $\lambda>0$), let $\hat{G}= \phi \left(\mathbf{X}^n \right)$ be the output of the learning algorithm. Let $f_X(.)$ be the marginal distribution of $\mathbf{X}^n$ sampled as described above. Then the following holds for $p_{\mathrm{avg}}:$
       \begin{align}
         p_{\mathrm{avg}} &= \mathbb{E}_{G(p,c/p)} \left[  \mathbb{E}_{\mathbf{X}^n \sim f_G} \left[ \mathbf{1}_{\hat{G} \neq G } \right] \right] \nonumber \\
                                      & \geq  \mathrm{Pr}_{G(p,c/p)} \left( G \in {\cal T}^p_{\epsilon} \right)  \mathbb{E} \left[  \mathbb{E}_{\mathbf{X}^n \sim f_G} \left[ \mathbf{1}_{\hat{G} \neq G } \right]  \lvert G  \in {\cal T}^p_{\epsilon} \right] \nonumber \\
                                     & \overset{a}= (1-a_p) \mathbb{E} \left[  \mathbb{E}_{\mathbf{X}^n \sim f_G} \left[ \mathbf{1}_{\hat{G} \neq G } \right]  \lvert G  \in {\cal T}^p_{\epsilon} \right]  \nonumber \\
                                     & = (1-a_p) p'_{avg}  
       \end{align}
     Here, (a) is due to Lemma \ref{typicalprop}. Here, $p'_{avg}$ is the average probability of error under the conditional distribution obtained by conditioning $G(p,c/p)$ on the event $G \in {\cal T}^p_\epsilon$. 
     
     Now, consider $G$ sampled according to the conditional distribution $G(p,c/p)|G \in {\cal T}^p_\epsilon$. Then, $n$ samples $\mathbf{X}^n$ are drawn i.i.d according to $f_{G}(\mathbf{x})$. $\hat{G}=\phi(\mathbf{x}^n)$ is the output of the learning algorithm. Applying conditional Fano's inequality from (\ref{eqn:condfano}) and using estimates from Lemma \ref{typicalprop}, we have:
      \begin{align} \label{avgerrorexp}
         p'_{\mathrm{avg}} &= P_{G \sim G(p,c/p)|G \in {\cal T}^p_\epsilon, \mathbf{X^n} \sim f_G(\mathbf{x}) } \left( \hat{G} \neq G  \right) \nonumber \\
                     & \overset{a} \geq \frac{{p \choose 2} H(c/p) - I(G; \mathbf{X}^n \lvert G \in {\cal T}^p_\epsilon )-1}{\log_2 \lvert {\cal T}^p_\epsilon\rvert}  \nonumber\\
                     & \overset{b} \geq \frac{{p \choose 2} H(c/p) - I(G; \mathbf{X}^n \lvert G \in {\cal T}^p_\epsilon )-1}{{p \choose 2} H(c/p)(1+\epsilon)}  \nonumber\\
                     & = \frac{1}{1+\epsilon} - \frac{ I(G; \mathbf{X}^n \lvert G \in {\cal T}^p_\epsilon )}{{p \choose 2} H(c/p)(1+\epsilon)} -\frac{1}{{p \choose 2} H(c/p)(1+\epsilon)}  
      \end{align}
      
      Now, we upper bound $I(G; \mathbf{X}^n \lvert G \in {\cal T}^p_\epsilon )$. Now, use a result by Yang and Barron \cite{yang1999information} to bound this term.
      \begin{align}\label{yangbarronupper}
          I (G;\mathbf{X}^n \lvert G \in {\cal T}^p_\epsilon) = \sum \limits_{G} P_{G(p,c/p)| G \in {\cal T}^p_\epsilon} (G) D \left( f_G(\mathbf{x}^n) \lVert f_X (\mathbf{x}^n)  \right)  \nonumber \\
                   \leq  \sum \limits_G P_{G(p,c/p)| G \in {\cal T}^p_\epsilon}(G) D \left( f_G(\mathbf{x}^n) \lVert Q(\mathbf{x}^n)  \right)  
      \end{align}
      where $Q(\cdot)$ is any distribution on $\{-1,1\}^{np}$. Now, we choose this distribution to be the average of $\{f_{G}(.), G \in S\}$ where the set $S \subseteq {\cal T}^p_{\epsilon}$ is a set of graphs that is used to 'cover' all the graphs in ${\cal T}^p_\epsilon$. Now, we describe the set $S$ together with the covering rules when $c =\Omega(p^{3/4}+\epsilon'),~\epsilon'>0$.
       
         \subsection{The covering set $S$: dense case} \label{densecovering}
                  First, we discuss certain properties that most graphs in ${\cal T}^p_\epsilon$ possess building on Lemma \ref{boundcorrel}. Using these properties, we describe the covering set $S$. 
                  
                  Consider a graph $G$ on $p$ nodes. Divide the node set into three equal parts $A$, $B$ and $C$ of equal size ($p/3$). Two nodes $a \in A$ and $c \in C$ are  $(2,\gamma)$ connected through $B$ if there are at least $\gamma$ nodes in $B$ which are connected to both $a$ and $c$ (with parameter $\gamma$ as defined in Section \ref{erdosrenyi}). Let ${\cal D}(G) \subseteq {A \times C}$ be the set of all pairs $(a,c):~ a \in A,~ c\in C$ such that nodes $a$ and $c$ are $(2,\gamma)$ connected. Let $\lvert {\cal D} (G) \rvert = m_{A,C}$. Let $E(G)$ denote the edge in graph $G$.
                  
         \subsubsection{Technical results on ${\cal D}(G)$}

 Nodes $a\in A$ and $c \in C$ are clearly $(2,d)$-connected if there are $d$ nodes in $B$ which are connected to both $a$ and $b$ as it will mean $d$ disjoint paths connecting $a$ and $b$ through the partition $B$. Now if $G \sim G(p,c/p)$, then expected number of disjoint paths between $a$ and $c$ through $B$ is $ \frac{p}{3} \frac{c^2}{p^2}$ since the probability of a path existing through a node $b \in B$ is $\frac{c^2}{p^2}$. Let $n_{a,c}$ be the number of such paths between $a$ and $c$. The event that there is a path through $b_1 \in B$ is independent of the event that there is a path through $b_2 \in B$, applying chernoff bounds (see \cite{jukna2001extremal}) for $p/3$ independent bernoulli variables we have:
         \begin{lemma}\label{2gammaconnect}
             $\mathrm{Pr} \left( n_{a,c} \leq \frac{c^2}{3p} - \sqrt{4p \log p} \right) \leq \frac{1}{p^2} $ for any two nodes $a \in A$ and $c \in C$ when $G \sim G(p,c/p)$. The bound is useful for $c = \Omega(p^{\frac{3}{4}+\epsilon'}),~\epsilon'>0$.
         \end{lemma}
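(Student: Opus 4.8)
The plan is to recognize $n_{a,c}$ as a sum of i.i.d.\ Bernoulli variables and invoke a standard Chernoff bound. Fix $a\in A$ and $c\in C$. For each internal node $b\in B$, let $Y_b$ be the indicator of the event that both edges $(a,b)$ and $(b,c)$ are present in $G$, i.e.\ that the two-hop path $a\to b\to c$ exists. Distinct choices of $b$ give paths that share only the endpoints $a,c$, so they are internally node disjoint, and hence $n_{a,c}=\sum_{b\in B}Y_b$ is precisely the count appearing in the definition of $(2,\gamma)$-connectedness through $B$.

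First I would establish independence: for different $b$ the pairs $\{(a,b),(b,c)\}$ are disjoint subsets of the $\binom{p}{2}$ potential edges, and in $G(p,c/p)$ all potential edges appear independently; therefore the $Y_b$, $b\in B$, are mutually independent. Each $Y_b$ is Bernoulli with parameter $q=(c/p)^2=c^2/p^2$, and since $\lvert B\rvert=p/3$ we get $\mathbb{E}[n_{a,c}]=\frac{p}{3}\cdot\frac{c^2}{p^2}=\frac{c^2}{3p}=:\mu$. Then I would apply the multiplicative lower-tail Chernoff bound $\mathrm{Pr}(n_{a,c}\le\mu-t)\le\exp\!\left(-t^2/(2\mu)\right)$ with $t=\sqrt{4p\log p}$ (legitimate since in the regime of interest $t<\mu$, so $\delta=t/\mu\in(0,1)$). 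The exponent is $t^2/(2\mu)=\frac{4p\log p}{2c^2/(3p)}=\frac{6p^2\log p}{c^2}$, which is at least $6\log p$ because $c/p\le 1$ forces $c^2\le p^2$; hence the tail probability is at most $\exp(-6\log p)\le 1/p^2$. (An additive Hoeffding bound $\exp(-2t^2/\lvert B\rvert)=\exp(-24\log p)$ would do equally well, with even more slack.)

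Finally I would record the usefulness caveat: the statement is only informative when the threshold $\mu-\sqrt{4p\log p}$ is positive, indeed a $(1-o(1))$ fraction of $\mu$, equivalently $\sqrt{4p\log p}=o(c^2/p)$, i.e.\ $c^2=\omega(p^{3/2}\sqrt{\log p})$, which is guaranteed by $c=\Omega(p^{3/4+\epsilon'})$. There is no genuine obstacle in this lemma — it is a routine concentration estimate. The only points needing care are justifying the independence of the $Y_b$ despite the common endpoints $a,c$, and choosing constants so that the tail lands at $1/p^2$ while correctly pinning down the range of $c$ for which the bound has content; both are bookkeeping rather than substantive.
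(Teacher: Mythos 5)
Your proof is correct and takes essentially the same route as the paper, which likewise writes $n_{a,c}$ as a sum of $p/3$ independent Bernoulli$(c^2/p^2)$ indicators over the intermediate nodes $b\in B$ (with mean $c^2/(3p)$) and invokes a Chernoff lower-tail bound with deviation $\sqrt{4p\log p}$. Your explicit checks --- independence via disjointness of the edge pairs $\{(a,b),(b,c)\}$, the exponent $6p^2\log p/c^2\geq 6\log p$, and the regime $c=\Omega(p^{3/4+\epsilon'})$ in which the threshold is positive --- simply make precise what the paper leaves as a one-line citation.
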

Therefore, in this regime of dense graphs, any two nodes in partitions $A$ and $C$ are $(2, \gamma=c^2/6p)$ connected with probability $1- \frac{1}{p^2}$.
     
     Given $a \in A$ and $c \in C$, the probability that $a$ and $c$ are $(2,\gamma)$ connected is $1-\frac{1}{p^2}$. The expected number of pairs in $A \times C$ that are $(2,\gamma)$ connected is $ \left( p/3 \right)^2 (1-\frac{1}{p^2})$. Let ${\cal D}(G) \subseteq {A \times C}$ be the set of all pairs $(a,c):~ a \in A,~ c\in C$ such that nodes $a$ and $c$ are $(2,\gamma)$ connected. Let $m_{A,C}= \lvert {\cal D} \rvert$. Then we have the following concentration result on $m_{A,C}$:
     
     \begin{lemma}\label{disjointlemma}
         $\mathrm{Pr} \left( m_{A,C} \leq \frac{1}{2} { \left( p/3 \right)^2 } \right) \leq b_p = p/3 \exp (- (p/36))$  when $G \sim G(p,c/p),~c = \Omega(p^{\frac{3}{4}+\epsilon'}),~\epsilon'>0$.  
     \end{lemma}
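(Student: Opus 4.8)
The plan is to prove the statement by controlling the number of \emph{un}connected pairs. For $a\in A$ write $Y_a = \lvert\{c\in C : (a,c)\text{ is not }(2,\gamma)\text{-connected through }B\}\rvert$, so that $(p/3)^2 - m_{A,C} = \sum_{a\in A} Y_a$. Since $\lvert A\rvert = p/3$, the event $\{m_{A,C}\le \tfrac12(p/3)^2\}$ is the event $\{\sum_{a\in A} Y_a \ge \tfrac12 (p/3)^2\}$, and this forces $Y_a \ge p/6$ for at least one $a$; hence $\mathrm{Pr}(m_{A,C}\le \tfrac12(p/3)^2) \le \sum_{a\in A}\mathrm{Pr}(Y_a \ge p/6) \le \tfrac{p}{3}\max_{a\in A}\mathrm{Pr}(Y_a\ge p/6)$, so it suffices to show $\mathrm{Pr}(Y_a\ge p/6)\le \exp(-p/36)$ for a fixed $a$, and then multiply by $p/3$.

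The crucial step is a conditioning that restores independence. Fix $a$ and condition on $N_a := N(a)\cap B$, the set of edges from $a$ into $B$. Whether $(a,c)$ is $(2,\gamma)$-connected through $B$ is determined by $N_a$ together with the edges from $c$ into $B$, and the latter edge-sets are \emph{disjoint} over $c\in C$; therefore, conditionally on $N_a$, the indicators $\mathbf 1[(a,c)\text{ not connected}]$, $c\in C$, are mutually independent, and $Y_a\mid N_a$ is a sum of $p/3$ i.i.d.\ Bernoulli$(q_a)$ variables, where $q_a = \mathrm{Pr}\big(\lvert N_a\cap N(c)\cap B\rvert < \gamma \,\big|\, N_a\big)$ depends on $N_a$ only through $\lvert N_a\rvert$.

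Next I would show $q_a$ is well below $1/2$ on a high-probability event. Given $\lvert N_a\rvert = k$, the overlap $\lvert N_a\cap N(c)\cap B\rvert$ is $\mathrm{Bin}(k,c/p)$ with mean $kc/p$; recalling $\gamma = c^2/(6p)$, if $k\ge c/4$ then this mean is at least $\tfrac32\gamma$, so the Chernoff estimate underlying Lemma~\ref{2gammaconnect} gives $q_a = \exp(-\Omega(c^2/p)) = o(1)$ when $c=\Omega(p^{3/4+\epsilon'})$, in particular $q_a\le \tfrac14$ for large $p$. On this event, a Chernoff/Hoeffding bound applied to $Y_a\mid N_a\sim\mathrm{Bin}(p/3,q_a)$ with $q_a\le\tfrac14$ gives $\mathrm{Pr}(Y_a\ge p/6\mid N_a)\le \exp\big(-2\cdot\tfrac p3\cdot(\tfrac14)^2\big) = \exp(-p/24)\le \exp(-p/36)$. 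The complementary event $\{\lvert N_a\rvert < c/4\}$ is negligible, since $\lvert N_a\rvert\sim\mathrm{Bin}(p/3,c/p)$ has mean $c/3$, so $\mathrm{Pr}(\lvert N_a\rvert < c/4)\le \exp(-\Omega(c))$. Combining the two pieces, taking the union bound over the $p/3$ choices of $a$, and absorbing the lower-order degree term yields the stated bound $b_p$; the explicit constant $1/36$ is merely an artifact of the crude threshold $c/4$ and the Hoeffding constants used above.

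The main obstacle is precisely the dependence structure of the $(p/3)^2$ connectivity indicators $\mathbf 1[(a,c)\text{ connected}]$: they are strongly coupled through edges incident to a common endpoint, so one cannot apply a concentration inequality to $m_{A,C}$ directly, and the polynomial $1/p^2$ per-pair estimate of Lemma~\ref{2gammaconnect} is far too weak to survive a naive union bound over all $(p/3)^2$ pairs (it would only give a constant). The fix is the two-layer scheme above --- union-bound over the $p/3$ ``rows'' $a\in A$, and within each row use the conditioning on $N_a$ to recover genuine independence over the $p/3$ ``columns'' $c\in C$ --- and the one quantitative point that needs care is verifying that the conditional per-column failure probability $q_a$ stays bounded below $1/2$, which is exactly what forces the auxiliary lower-bound event on $\deg_B(a)$.
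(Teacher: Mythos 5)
Your reduction to the row events $\{Y_a\ge p/6\}$ is a valid containment (pigeonhole over the $p/3$ rows), and the conditioning on $N_a$ does restore independence across $c\in C$. The genuine gap is the claim that the auxiliary degree event $\{\lvert N_a\rvert<c/4\}$ is a negligible, ``absorbable'' lower-order term. Its probability is $e^{-\Theta(c)}$, and since the lemma only assumes $c=\Omega(p^{3/4+\epsilon'})$, for $\epsilon'<1/4$ one has $e^{-\Theta(c)}=e^{-\Theta(p^{3/4+\epsilon'})}\gg e^{-p/36}$. So what your argument actually yields is $\frac{p}{3}\left(e^{-p/24}+e^{-\Omega(c)}\right)$, which does not imply the stated $b_p=(p/3)e^{-p/36}$. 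Moreover this is not a matter of sharpening constants: the event $\{Y_a\ge p/6\}$ genuinely contains $\{N_a=\emptyset\}$ (if $a$ has no neighbour in $B$, no pair $(a,c)$ can be $(2,\gamma)$-connected through $B$, so $Y_a=p/3$), hence $\mathrm{Pr}(Y_a\ge p/6)\ge(1-c/p)^{p/3}\ge e^{-2c/3}$, and your union of row events therefore has probability at least $e^{-2c/3}$, which for $c=p^{3/4+\epsilon'}$ with small $\epsilon'$ strictly exceeds the claimed bound for large $p$. The row decomposition is thus intrinsically blocked by this hub-vertex bottleneck: a single low-degree vertex $a$ corrupts an entire row. (The weaker bound $p\,e^{-\Omega(c)}$ would still be harmless downstream in Theorem \ref{erdosrenyithm}, since it remains dominated by the $\cosh(2\lambda)^{-c^2/6p}$ term, but it does not prove the lemma as stated.)

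The paper avoids exactly this issue by decomposing $A\times C$ into $p/3$ perfect matchings $\mathcal{M}_i$ of the complete bipartite graph $K_{p/3,p/3}$. Within a matching no two pairs share a vertex, so the $p/3$ connectivity indicators are mutually independent, each failing with probability at most $1/p^2$ by Lemma \ref{2gammaconnect}; a Chernoff bound then gives $\mathrm{Pr}(E_i^c)\le e^{-p/36}$ for each matching, and the union bound over the $p/3$ matchings produces $b_p$. If you wish to keep the row-based scheme, you must restate the lemma with $b_p$ of order $p\,e^{-\Omega(c)}$; otherwise, replace the rows by vertex-disjoint matchings.
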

      \begin{proof}
      
            The event that the pair $(a_1,c_1) \in A \times C $ is $(2,\gamma)$ connected and the event that the pair $(a_2,c_2) \in A \times C$ are dependent if $a_1=a_2$ or $c_1=c_2$. Therefore, we need to obtain a concentration result for the case when you have $\left(p/3 \right)^2$ Bernoulli variables (each corresponding to a pair in $A \times C$ being $(2,\gamma)$ connected ) which are dependent. 
            
            Consider a complete bipartite graph between $A$ and $C$. Since, $\lvert A \rvert=\lvert C \rvert= p/3$. Edges of every complete bipartite graph $K_{p/3,p/3}$ can be decomposed into a disjoint union of $p/3$ perfect matchings between the partitions (this is due to Hall's Theorem repeatedly applied on graphs obtained by removing perfect matchings. See \cite{goel2013perfect} ). Therefore, the set of pairs $A \times C = \bigcup \limits_{1=1}^{p/3} {\cal M }_i$  where ${\cal M}_i= \{(a_{i_1},c_{i_1}), \ldots (a_{i_{p/3}},c_{i_{p/3}})\}$ where all for any $ j \neq k$, $a_{i_k} \neq a_{i_j}$ and $c_{i_k} \neq c_{i_j}$.

           Let us focus on the number of pairs which are $(2,\gamma)$ connected between $A$ and $C$ in a random graph $G \sim G(p,c/p)$. If $m_{A,C} \leq \frac{1}{2} { \left( p/3 \right)^2 }  $, then at least for one $i$, the number of pairs in $G$ among the pairs in ${\cal M}_i$ that are $(2,\gamma)$ is at most $\frac{1}{2} { \left( p/3 \right) } $. This is because $(p/3)^2 = \sum \limits_i \lvert {\cal M}_i \rvert$. Let $E_i^c$ denote the event that number of edges in $G$ among pairs in ${\cal M}_i$ is at most $\frac{1}{2} { \left( p/3 \right) } $.
           \begin{equation} \label{perfunion}
               \mathrm{Pr} \left( m_{A,C} \leq \frac{1}{2} { \left( p/3 \right)^2 } \right) \leq \mathrm{Pr} \left( \bigcup \limits_{i} E_i^c \right) \leq \sum \limits_i \mathrm{Pr} \left( E_i^c \right). 
           \end{equation}
        The last inequality is due to union bound. A pair in ${\cal M}_i$  being $(2,\gamma)$ connected happens with probability $1-1/p^2$ from Lemma \ref{2gammaconnect}. Since it is a perfect matching, all these events are independent. Let $c_G({\cal M}_i)$ be the number of pairs in ${\cal M}_i$ which are $(2,\gamma)$ connected. Therefore, applying a  chernoff bound (see \cite{jukna2001extremal} Theorem $18.22$) for independent Bernoulli variables, we have:
           \begin{align}
                   \mathrm{Pr}(E_i^c) & = \mathrm{Pr} \left( c_G({\cal M}_i) \leq \mathbb{E}[ (p/3)(1/2)\right) \nonumber \\
                                                   &=  \mathrm{Pr} \left( c_G({\cal M}_i) \leq \mathbb{E}[ c_G({\cal M}_i)] - (p/3) (1/2-1/p^2) \right) \nonumber \\
                                                   &\overset{(\tiny{\mathrm{chernoff}})} \leq \exp \left( - (p/3)^2 (1/2- 1/p^2)^2 /2 (p/3)  \right) \nonumber \\
                                                   & \overset {a} \leq \exp \left( - (p/36)  \right) \nonumber 
           \end{align}
           (a) holds for large $p$, i.e. for $p \geq p_0$ such that $(1/2- 1/p_o^2)^2 \geq 1/6$. Simple calculation shows that $p_0$ can be taken to be greater than or equal to $10$.
           
       Now, applying this to (\ref{perfunion}), we have $\forall ~p \geq 10$:
           \begin{equation}
                  \mathrm{Pr} \left( m_{A,C} \leq \frac{1}{2} { \left( p/3 \right)^2 } \right) \leq b_p = p/3 \exp (- (p/36)).
           \end{equation}
    \end{proof}
       
    Let $E(G)$ be the set of edges in $G$.    
     \begin{lemma}
       $Pr \left(  \left \lvert \frac{\lvert \lvert E(G) \bigcap {\cal D}(G) \rvert}{\lvert {\cal D}(G) \rvert} -  \frac{c}{p} \right\rvert \geq \frac{c}{p}\epsilon ~ \left\lvert  ~m_{A,C} \geq \frac{1}{2} { \left( p/3 \right)^2 }  \right.\right) \leq  2\exp ( - \frac{c^2 \epsilon^2}{36} ) = r_c$ when $G \sim G(p,c/p), ~c = \Omega(p^{\frac{3}{4}+\epsilon'}),~\epsilon'>0$.
     \end{lemma}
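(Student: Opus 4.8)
The plan is to exploit the independence structure of $G(p,c/p)$ to reduce the claim to a standard concentration inequality for a sum of independent indicators. The key observation is that the edges relevant here split into two mutually independent groups: the \emph{path edges}, with one endpoint in $A\cup C$ and the other in $B$, and the \emph{cross edges}, between $A$ and $C$. By the definition in Section~\ref{densecovering}, whether a pair $(a,c)\in A\times C$ lies in ${\cal D}(G)$ depends only on whether $a$ and $c$ have at least $\gamma$ common neighbours in $B$, hence only on path edges; so the entire random set ${\cal D}(G)$, and with it the conditioning event $\{m_{A,C}\ge\tfrac12(p/3)^2\}$ of Lemma~\ref{disjointlemma}, is measurable with respect to the path edges alone, and is therefore independent of every cross edge.

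First I would condition on a realized value ${\cal D}(G)=D$ for an arbitrary deterministic $D\subseteq A\times C$ with $|D|=m\ge\tfrac12(p/3)^2$. Because this is a path-edge event, under the conditioning the indicators $\{\mathbf 1[(a,c)\in E(G)]:(a,c)\in D\}$ remain i.i.d.\ $\mathrm{Bernoulli}(c/p)$ (they are cross edges), so $|E(G)\cap{\cal D}(G)|=\sum_{(a,c)\in D}\mathbf 1[(a,c)\in E(G)]$ is distributed as $\mathrm{Binomial}(m,c/p)$.

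Next I would invoke Hoeffding's inequality (or the Chernoff bound of \cite{jukna2001extremal}): for $X\sim\mathrm{Binomial}(m,c/p)$ and any $t>0$, $\mathrm{Pr}(|X-mc/p|\ge t)\le 2\exp(-2t^2/m)$. Taking $t=\epsilon\, mc/p$ and dividing through by $m$ shows that the event $\bigl\lvert\,|E(G)\cap D|/|D|-c/p\,\bigr\rvert\ge (c/p)\,\epsilon$ has probability at most $2\exp(-2\epsilon^2 m c^2/p^2)$. Since $m\ge\tfrac12(p/3)^2=p^2/18$, the exponent is at least $2\epsilon^2 c^2/18=\epsilon^2 c^2/9\ge \epsilon^2 c^2/36$, which gives the bound $2\exp(-c^2\epsilon^2/36)=r_c$. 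This estimate uses only the lower bound on $|D|$, so it is uniform over all admissible $D$; averaging over ${\cal D}(G)=D$ therefore keeps it valid conditioned on $\{m_{A,C}\ge\tfrac12(p/3)^2\}$, which is the claim. (The hypothesis $c=\Omega(p^{3/4+\epsilon'})$ is needed here only to place us in the regime where Lemma~\ref{disjointlemma} supplies the conditioning event, not in this step.)

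The one genuinely non-mechanical point, and hence the main obstacle, is the separation of the edge variables into the two independent groups and the verification that conditioning on the path-edge-measurable object ${\cal D}(G)$ leaves the cross edges i.i.d.\ $\mathrm{Bernoulli}(c/p)$; once that is set up, the rest is a one-line Hoeffding computation. One should also check that ${\cal D}(G)$ is defined via length-$2$ paths through $B$ only and not via the direct edge $(a,c)$ --- which is indeed how it is phrased --- so that membership in ${\cal D}(G)$ is genuinely independent of the cross edge whose presence we seek to concentrate.
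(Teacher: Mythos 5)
Your proposal is correct and follows essentially the same route as the paper: observe that membership in ${\cal D}(G)$ is determined by the $A\cup C$--to--$B$ edges alone, so conditioned on ${\cal D}(G)=D$ the cross edges in $D$ remain i.i.d.\ $\mathrm{Bernoulli}(c/p)$, and then apply a Chernoff/Hoeffding bound to the resulting binomial using only $\lvert D\rvert\ge\tfrac12(p/3)^2$. Your write-up is in fact more explicit than the paper's about the independence decomposition, and your Hoeffding constant yields a slightly stronger exponent that you then relax to the stated $c^2\epsilon^2/36$.
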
   
     \begin{proof}
      The presence of an edge between a pair on nodes in $A \times C$ is independent of the value of $m_{A,C}$ or whether the pair belongs to ${\cal D}$. This is because a pair of nodes being $(2,\gamma)$ connected depends on the rest of the graph and not on the edges in ${\cal D}(G)$. Given $\lvert {\cal D} \rvert \geq \frac{1}{2} (p/3)^2$, $\lvert {\cal E(G)} \bigcap {\cal D}(G) \rvert = \sum \limits_{(i,j) \in {\cal D}} \mathbf{1}_{(i,j) \in E(G)} $ is the sum of least $ \frac{1}{2} (p/3)^2 $ bernoulli variables each with success probability $c/p$. Therefore, applying chernoff bounds we have:
       \begin{align}
          Pr \left(  \left\lvert \frac{\lvert \lvert E(G) \bigcap {\cal D}(G) \rvert}{\lvert {\cal D}(G) \rvert} -  \frac{c}{p} \right \rvert \leq \frac{c}{p}\epsilon ~ \left\lvert  ~m_{A,C} \geq \frac{1}{2} { \left( p/3 \right)^2 }  \right. \right) & \leq  2\exp \left( - \frac{c^2 \epsilon^2}{p^2 2 \lvert {\cal D} \rvert} \lvert {\cal D} \rvert^2 \right) \nonumber \\
          & \overset{a}\leq 2\exp \left( - \frac{c^2 \epsilon^2}{4 } (1/3)^2 \right) \\
       \end{align}
       (a)- This is because $\lvert {\cal D} \rvert \geq \frac{1}{2} (p/3)^2$.
     \end{proof}
     
     \begin{lemma} \label{uncondRexp}
        $Pr \left( \left( \left \lvert \frac{\lvert \lvert E(G) \bigcap {\cal D}(G) \rvert}{\lvert {\cal D}(G) \rvert} -  \frac{c}{p}\right\rvert \geq \frac{c}{p} \epsilon  \right) \bigcup  \left( m_{A,C} \leq \frac{1}{2} { \left( p/3 \right)^2 } \right) \right) \leq b_p +r_c,~c = \Omega(p^{\frac{3}{4}+\epsilon'}),~\epsilon'>0 $.
     \end{lemma}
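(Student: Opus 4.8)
The plan is to obtain Lemma~\ref{uncondRexp} directly from Lemma~\ref{disjointlemma} together with the conditional deviation bound established in the immediately preceding lemma, via the law of total probability and a union bound; no new concentration estimate is required. Write $\mathcal{A}$ for the density-deviation event $\lvert \lvert E(G)\cap\mathcal{D}(G)\rvert/\lvert\mathcal{D}(G)\rvert - c/p\rvert \geq (c/p)\epsilon$, and $\mathcal{B}$ for the event $m_{A,C}\leq \tfrac12(p/3)^2$. The target is an upper bound on $\mathrm{Pr}(\mathcal{A}\cup\mathcal{B})$.

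First I would use the exact decomposition $\mathrm{Pr}(\mathcal{A}\cup\mathcal{B}) = \mathrm{Pr}(\mathcal{B}) + \mathrm{Pr}(\mathcal{A}\cap\mathcal{B}^c)$. By Lemma~\ref{disjointlemma}, $\mathrm{Pr}(\mathcal{B})\leq b_p$. For the second term, write $\mathrm{Pr}(\mathcal{A}\cap\mathcal{B}^c) = \mathrm{Pr}(\mathcal{A}\mid\mathcal{B}^c)\,\mathrm{Pr}(\mathcal{B}^c)\leq \mathrm{Pr}(\mathcal{A}\mid\mathcal{B}^c)$, and the latter is exactly the conditional bound $r_c$ from the preceding lemma, which conditions precisely on $m_{A,C}\geq \tfrac12(p/3)^2$, i.e.\ on $\mathcal{B}^c$. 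Adding the two estimates gives $\mathrm{Pr}(\mathcal{A}\cup\mathcal{B})\leq b_p + r_c$, as claimed. The hypothesis $c=\Omega(p^{3/4+\epsilon'})$ is inherited verbatim from the two invoked lemmas and need not be re-examined.

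There is essentially no obstacle here: the only point that requires a moment's care is that the preceding lemma controls the conditional probability $\mathrm{Pr}(\mathcal{A}\mid m_{A,C}\geq \tfrac12(p/3)^2)$ rather than the joint probability $\mathrm{Pr}(\mathcal{A}\cap\{m_{A,C}\geq \tfrac12(p/3)^2\})$, so one must insert the trivial factor $\mathrm{Pr}(\mathcal{B}^c)\leq 1$ before combining. Everything else is bookkeeping, and the resulting estimate matches the statement of Lemma~\ref{uncondRexp} exactly.
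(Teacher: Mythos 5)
Your proposal is correct and matches the paper's own argument essentially verbatim: the paper likewise splits $\mathrm{Pr}(A\cup B)\leq \mathrm{Pr}(B)+\mathrm{Pr}(B^c)\,\mathrm{Pr}(A\mid B^c)\leq b_p+r_c$, invoking Lemma~\ref{disjointlemma} for the first term and the preceding conditional deviation lemma for the second. No differences worth noting.
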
   
     \begin{proof}
        \begin{align}
           \mathrm{Pr} \left( \left( \left \lvert \frac{\lvert \lvert E(G) \bigcap {\cal D}(G) \rvert}{\lvert {\cal D}(G) \rvert} -  \frac{c}{p} \right\rvert \geq \frac{c}{p} \epsilon \right) \bigcup \left( m_{A,C}\leq \frac{1}{2} { \left( p/3 \right)^2 } \right) \right)   \overset{a}\leq \mathrm{Pr} \left( m_{A,C} \leq \frac{1}{2} { \left( p/3 \right)^2 }  \right) \nonumber \\
             +\mathrm{Pr} \left(  \left \lvert \frac{\lvert \lvert E(G) \bigcap {\cal D}(G) \rvert}{\lvert {\cal D}(G) \rvert} -  \frac{c}{p} \right\rvert \geq \frac{c}{p}\epsilon ~ \left\lvert  ~m_{A,C} \geq \frac{1}{2} { \left( p/3 \right)^2 }  \right.\right) \leq b_p+r_c
        \end{align}
        (a)- is because $\mathrm{Pr}(A \bigcup B) \leq \mathrm{Pr}(A)+ \mathrm{Pr}(A^c) \mathrm{Pr}\left(B \lvert A^c \right) \leq \mathrm{Pr}(A)+ \mathrm{Pr}\left(B \lvert A^c \right).  $
     \end{proof}

	\subsubsection{Covering set $S$ and its properties}
                         
     For any graph $G$, let $G_{{\cal D}=\emptyset}$ be the graph obtained by removing any edge (if present) between the pairs of nodes in ${\cal D}(G)$. Let ${\cal V}$ be the set of graphs on $p$ nodes such that $\lvert {\cal D} \rvert= m_{A,C} \geq \frac{1}{2} { \left( p/3 \right)^2 } $ and $ \lvert \frac{\lvert E(G) \bigcap {\cal D}(G) \rvert}{\lvert {\cal D}(G) \rvert } - \frac{c}{p} \rvert \leq \frac{c \epsilon}{p}$. Define ${\cal R}^p_{\epsilon} = {\cal T}^p_{\epsilon} \bigcap {\cal V}$ to be the set of graphs that are in the $\epsilon$ typical set and also belongs to ${\cal V}$. We have seen high probability estimates on $m_{A,C}$  when $G \sim G(p,c/p)$. Now, we state an estimate for $\mathrm{Pr}({\cal R}^p_\epsilon)$ when $G \sim G(p,c/p) \lvert {\cal T}^p_\epsilon$.
                                                     
       \begin{lemma}\label{esttypical}
            $\mathrm{Pr}_{G(p,c/p)} \left( \left( {\cal R}^p_\epsilon \right)^c \lvert G \in {\cal T}^p_\epsilon \right) \leq \frac{b_p+r_c}{1-a_p} \leq 2 (b_p+r_c) $ for large $p,~c = \Omega(p^{\frac{3}{4}+\epsilon'}),~\epsilon'>0$.
       \end{lemma}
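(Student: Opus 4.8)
\textbf{Proof proposal for Lemma \ref{esttypical}.}

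The plan is to reduce the conditional probability to the unconditional tail bound already established in Lemma \ref{uncondRexp}. First I would unpack the definition ${\cal R}^p_\epsilon = {\cal T}^p_\epsilon \cap {\cal V}$, so that $\left({\cal R}^p_\epsilon\right)^c = \left({\cal T}^p_\epsilon\right)^c \cup {\cal V}^c$. Intersecting with ${\cal T}^p_\epsilon$ kills the first term, giving the clean identity $\left({\cal R}^p_\epsilon\right)^c \cap {\cal T}^p_\epsilon = {\cal V}^c \cap {\cal T}^p_\epsilon$. Consequently, writing the conditional probability as a ratio,
\[
\mathrm{Pr}_{G(p,c/p)}\left(\left({\cal R}^p_\epsilon\right)^c \,\middle|\, G \in {\cal T}^p_\epsilon\right) = \frac{\mathrm{Pr}\left({\cal V}^c \cap {\cal T}^p_\epsilon\right)}{\mathrm{Pr}\left({\cal T}^p_\epsilon\right)} \leq \frac{\mathrm{Pr}\left({\cal V}^c\right)}{\mathrm{Pr}\left({\cal T}^p_\epsilon\right)}.
\]

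Next I would identify ${\cal V}^c$ with the event appearing in Lemma \ref{uncondRexp}: by the definition of ${\cal V}$, a graph fails to lie in ${\cal V}$ exactly when $m_{A,C} \leq \frac{1}{2}(p/3)^2$ or $\bigl\lvert \frac{\lvert E(G)\cap {\cal D}(G)\rvert}{\lvert {\cal D}(G)\rvert} - \frac{c}{p}\bigr\rvert \geq \frac{c\epsilon}{p}$, which is precisely the union event bounded in Lemma \ref{uncondRexp} by $b_p + r_c$ in the dense regime $c = \Omega(p^{3/4+\epsilon'})$. Substituting, and using part 1 of Lemma \ref{typicalprop} (i.e.\ $\mathrm{Pr}({\cal T}^p_\epsilon) = 1 - a_p$ with $a_p \to 0$), yields $\mathrm{Pr}\left(\left({\cal R}^p_\epsilon\right)^c \mid G \in {\cal T}^p_\epsilon\right) \leq \frac{b_p + r_c}{1 - a_p}$.

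Finally, for the second inequality I would note that $a_p \to 0$ as $p \to \infty$, so for $p$ large enough that $a_p \leq \tfrac12$ we have $\frac{1}{1-a_p} \leq 2$, giving the stated bound $2(b_p + r_c)$. I do not expect any genuine obstacle here: the argument is a routine conditional-probability manipulation, and all the probabilistic content (the concentration of $m_{A,C}$ and of the edge density on ${\cal D}(G)$) has already been done in Lemmas \ref{2gammaconnect}, \ref{disjointlemma}, and \ref{uncondRexp}. The only point requiring a little care is the set identity $\left({\cal R}^p_\epsilon\right)^c \cap {\cal T}^p_\epsilon = {\cal V}^c \cap {\cal T}^p_\epsilon$, which is what lets the unconditional bound be applied after conditioning on the typical set.
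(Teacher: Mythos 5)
Your proposal is correct and follows essentially the same route as the paper: both reduce the conditional probability to the unconditional bound of Lemma \ref{uncondRexp} via $\mathrm{Pr}(\cdot \mid G\in {\cal T}^p_\epsilon)\,\mathrm{Pr}(G\in{\cal T}^p_\epsilon)\leq \mathrm{Pr}(\cdot)$, then invoke $\mathrm{Pr}({\cal T}^p_\epsilon)=1-a_p$ and $a_p\leq\tfrac12$ for large $p$. Your explicit set identity $\left({\cal R}^p_\epsilon\right)^c\cap{\cal T}^p_\epsilon={\cal V}^c\cap{\cal T}^p_\epsilon$ is a minor clarification the paper leaves implicit, not a different argument.
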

       \begin{proof}
                Expanding the probability expression in Lemma \ref{uncondRexp} through conditioning on the events $G \in {\cal T}^p_\epsilon$ and $G \in \left( {\cal T}^p_\epsilon \right)^c$, we have: 
             \begin{align}
                          \mathrm{Pr}\left( \left( \left\lvert \frac{ \lvert E(G) \bigcap {\cal D}(G) \rvert}{\lvert {\cal D}(G) \rvert} -  \frac{c}{p} \right\rvert \geq \frac{c}{p} \epsilon \right) \bigcup \left( m_{A,C} \leq \frac{1}{2}  \left( p/3 \right)^2 \right) \left\lvert G \in {\cal T}^p_\epsilon \right.\right) \mathrm{Pr} (G \in {\cal T}^p_\epsilon) \leq b_p+r_c \nonumber \\                 
             \end{align}    
           This implies:
           \begin{align}
              \mathrm{Pr}\left(\left( \left \lvert \frac{\lvert \lvert E(G) \bigcap {\cal D}(G) \rvert}{\lvert {\cal D}(G) \rvert} -  \frac{c}{p} \right\rvert \geq \frac{c}{p} \epsilon \right) \bigcup \left( m_{A,C} \leq \frac{1}{2} \left( p/3 \right)^2 \right)  \left\lvert G \in {\cal T}^p_\epsilon \right.\right)   &\overset{a}\leq \frac{b_p+r_c}{1-a_p} \nonumber \\
              \hfill &\overset{b} \leq 2(b_p+r_c)             
           \end{align}  
          (a) is because of estimate $1$ in Lemma \ref{typicalprop}. (b)- For large $p$, $a_p$ can be made smaller than $1/2$.               
       \end{proof}
     
     \begin{lemma}\label{typicallemma}
        \cite{el2011network}(Size of a Typical set)  For any $0 \leq p\leq 1,~ m \in \mathbb{Z}^{+}$ and a small $\epsilon >0$, let ${\cal N}^{m,p}_\epsilon =\{ \mathbf{x} \in \{0,1\}^m: \left\lvert \frac{\lvert \{i: x_i =1 \} \rvert}{m} - p \right\rvert \leq p \epsilon \}$. Then, $\lvert {\cal N}^{m,p}_\epsilon \rvert =\sum \limits_{mp(1-\epsilon) \leq q \leq mp(1+\epsilon)} {m \choose q}$. Further, $\lvert {\cal N}^{m,p}_\epsilon \rvert \geq (1-\epsilon)2^{m H(p)(1-\epsilon)}$.
     \end{lemma}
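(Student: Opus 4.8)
The plan is to establish the two claims separately. The first, the exact count, is a routine bijection: partition $\mathcal{N}^{m,p}_\epsilon$ according to the Hamming weight $q = \lvert\{i : x_i = 1\}\rvert$ of the string. The defining constraint $\lvert q/m - p\rvert \le p\epsilon$ is exactly $mp(1-\epsilon) \le q \le mp(1+\epsilon)$, and for each admissible integer $q$ there are precisely $\binom{m}{q}$ strings of that weight; summing over $q$ (ranging over the integers in $[mp(1-\epsilon), mp(1+\epsilon)]\cap\{0,\dots,m\}$) yields the stated identity.

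For the lower bound I would run the standard AEP / probabilistic-method argument. First reduce to $p \le 1/2$: since $H(p) = H(1-p)$ and bit-complementation $\mathbf{x}\mapsto\bar{\mathbf{x}}$ is a bijection carrying $\mathcal{N}^{m,p}_\epsilon$ onto the set of strings of weight in $[m(1-p) - mp\epsilon,\ m(1-p) + mp\epsilon]$, which for $p\ge 1/2$ contains $\mathcal{N}^{m,1-p}_\epsilon$, the bound for parameter $p$ follows from the bound for parameter $1-p$ (the degenerate cases $p\in\{0,1\}$ give $\lvert\mathcal{N}\rvert = 1 \ge (1-\epsilon)2^{0}$). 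So assume $p\le 1/2$, and let $\mathbf{X} = (X_1,\dots,X_m)$ have i.i.d.\ $\mathrm{Bernoulli}(p)$ coordinates.

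Two ingredients then suffice: (i) a per-string upper bound $\Pr[\mathbf{X} = \mathbf{x}] \le 2^{-m(1-\epsilon)H(p)}$ for every $\mathbf{x}\in\mathcal{N}^{m,p}_\epsilon$; and (ii) a total-mass lower bound $\Pr[\mathbf{X}\in\mathcal{N}^{m,p}_\epsilon] \ge 1-\epsilon$. For (i), writing $q$ for the weight of $\mathbf{x}$, $-\tfrac1m\log_2\Pr[\mathbf{X}=\mathbf{x}] = \tfrac qm\log_2\tfrac1p + (1-\tfrac qm)\log_2\tfrac1{1-p}$, which is affine in $q$ with nonnegative slope (here $p\le 1/2$ is used), hence minimized at the left endpoint $q = mp(1-\epsilon)$; a short calculation gives the minimum value $H(p) - p\epsilon\log_2\tfrac{1-p}{p}$, and this is $\ge (1-\epsilon)H(p)$ because $H(p) - p\log_2\tfrac{1-p}{p} = \log_2\tfrac1{1-p}\ge 0$. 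For (ii), the weight $Q = \sum_i X_i$ has mean $mp$ and variance $mp(1-p)$, so Chebyshev gives $\Pr[\lvert Q/m - p\rvert > p\epsilon] \le \tfrac{1-p}{mp\epsilon^2}$, which drops below $\epsilon$ once $m$ is large enough (this is the implicit ``$m$ sufficiently large'' caveat inherited from the cited source). Combining, $1-\epsilon \le \Pr[\mathbf{X}\in\mathcal{N}^{m,p}_\epsilon] = \sum_{\mathbf{x}\in\mathcal{N}^{m,p}_\epsilon}\Pr[\mathbf{X}=\mathbf{x}] \le \lvert\mathcal{N}^{m,p}_\epsilon\rvert\, 2^{-m(1-\epsilon)H(p)}$, which rearranges to the claim.

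The one genuine subtlety, and the step I would treat most carefully, is the per-string bound (i): for $p$ near $1/2$ the slack is tight, and for $p > 1/2$ the naive endpoint estimate fails outright, which is precisely why the complementation reduction to $p\le 1/2$ must come first. Everything else is bookkeeping.
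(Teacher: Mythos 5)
Your proof is correct, but there is nothing in the paper to compare it against: the lemma is quoted as a black box from \cite{el2011network} and never proved internally, so what you have produced is a reconstruction of the textbook argument. Your reconstruction does match the standard AEP proof in the cited source (per-sequence probability at most $2^{-m(1-\epsilon)H(p)}$ on the typical set, total probability mass at least $1-\epsilon$ by concentration, then divide), with one genuine adaptation that the source does not need: because the paper defines ${\cal N}^{m,p}_\epsilon$ by a \emph{one-sided} relative-deviation condition (only the count of ones is constrained to within $mp\epsilon$), the endpoint estimate for the per-string bound fails outright for $p>1/2$, and your complementation reduction to $p\le 1/2$ is exactly the right fix. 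In \cite{el2011network} the typical set constrains the empirical frequency of \emph{every} symbol, i.e.\ the deviation is at most $\epsilon\min(p,1-p)$, which makes the per-sequence bound hold for all $p$ directly; since that set is contained in the paper's ${\cal N}^{m,p}_\epsilon$, the cardinality lower bound transfers, which is an alternative one-line route you could have taken. You are also right to flag the ``$m$ sufficiently large'' caveat: as literally stated for all $m\in\mathbb{Z}^{+}$ the lower bound is false (e.g.\ $m=1$, $p=1/2$ and small $\epsilon$ give an empty set), the hypothesis is present in the cited source and was dropped in transcription, and it is harmless where the lemma is invoked in Appendix C, since there $m=\lvert{\cal D}(G_c)\rvert\ge\frac{1}{2}(p/3)^2$ in the number of vertices while the Bernoulli parameter is $c/p$ with $c=\Omega(p^{3/4+\epsilon'})$, so $m\cdot(c/p)\to\infty$ and your Chebyshev step goes through with room to spare.
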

     \begin{definition}
       (Covering set) $S = \{G_{{\cal D}=\emptyset} \lvert G \in {\cal R}^p_\epsilon \}$.   
    \end{definition}
                  
       Now, we describe the covering rule for the set ${\cal R}^p_\epsilon$. For any $G \in {\cal R}^p_\epsilon$, we cover $G$ by $G_{{\cal D}=\emptyset}$. Note that, given $G$, by definition, $G_{{\cal D}=\emptyset}$ is unique. Therefore, there is no ambiguity and no necessity to break ties. Since, the set ${\cal D} (G)$ is dependent only on the edges outside the set of pairs $A \times C$, ${\cal D} \left( G_{\cal D=\emptyset} \right)$ = {\cal D}(G). Therefore, from a given $G' \in {\cal R}^p_\epsilon$, by adding different sets of edges in ${\cal D}(G')$, it is possible to obtain elements in ${\cal R}^p$ covered by $G'$.  We now estimate the size of the covering set $S$ relative to the size of ${\cal T}^p_\epsilon$. We show that it is small.
       
       \begin{lemma}
           $\frac{\log \lvert S \rvert}{\log \lvert {\cal T}^p_\epsilon \rvert} \leq \frac{9}{10}\left(\frac{1+\frac{11}{9} \epsilon }{1+\epsilon} \right) - O(1/p)$ for large $p$.
       \end{lemma}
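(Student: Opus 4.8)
The plan is to bound $\lvert S\rvert$ from above by a direct counting argument, bound $\lvert{\cal T}^p_\epsilon\rvert$ from below via Lemma~\ref{typicalprop}, and take the ratio. Fix any $G'\in S$, so $G'=G_{{\cal D}=\emptyset}$ for some $G\in{\cal R}^p_\epsilon={\cal T}^p_\epsilon\cap{\cal V}$. The first point is that such a $G'$ has relatively few edges. Since $G'$ is precisely $G$ with the edges of $G$ lying inside ${\cal D}(G)$ deleted, $\lvert E(G')\rvert=\lvert E(G)\rvert-\lvert E(G)\cap{\cal D}(G)\rvert$. Now $G\in{\cal T}^p_\epsilon$ forces $\lvert E(G)\rvert=\bar d(G)\,p/2\le\tfrac{c(1+\epsilon)p}{2}$, while $G\in{\cal V}$ forces both $\lvert{\cal D}(G)\rvert=m_{A,C}\ge\tfrac12(p/3)^2$ and $\lvert E(G)\cap{\cal D}(G)\rvert\ge\tfrac cp(1-\epsilon)\lvert{\cal D}(G)\rvert\ge\tfrac{c(1-\epsilon)p}{18}$. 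Hence every $G'\in S$ has at most $K:=\tfrac{c(1+\epsilon)p}{2}-\tfrac{c(1-\epsilon)p}{18}$ edges, a constant factor below $\tfrac{cp}{2}$, the number of edges of a typical graph in ${\cal T}^p_\epsilon$.

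The second point is that these few edges are confined to a large, \emph{predetermined} set of pairs. The edges of $G'$ between $A$ and $C$ all lie outside ${\cal D}(G')$, and ${\cal D}(G')={\cal D}(G)$ since ${\cal D}$ is a function of only the edges between $A$ and $B$ and between $B$ and $C$, which $G'$ retains unchanged. So the edge set of $G'$ is contained in a collection of at most $\binom p2-\lvert{\cal D}(G')\rvert\le\binom p2-\tfrac12(p/3)^2$ pairs, and this collection is itself read off from $G'$ — so there is no union over choices of ${\cal D}$ and no attendant overcounting. Combining the two points, $\lvert S\rvert\le\binom{\binom p2-\frac12(p/3)^2}{\le K}$, where $\binom N{\le K}:=\sum_{j\le K}\binom Nj$.

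It remains to convert this into the ratio bound. Apply $\binom N{\le K}\le(K+1)\,2^{NH(K/N)}$ (valid for large $p$ since $K=O(cp)=o(N)$), then replace $H(K/N)$ by a constant multiple of $H(c/p)$ using concavity of the binary entropy in the form $H(\beta x)\le\beta H(x)$ for $\beta\ge1$, applied with $x=c/p$ and $\beta=(K/N)/(c/p)$, which is a fixed function of $\epsilon$ up to a $1+O(1/p)$ factor and for which $\beta x=K/N\le1$ in the dense regime. Against this we use $\log\lvert{\cal T}^p_\epsilon\rvert\ge\binom p2 H(c/p)+\log(1-\epsilon)$ from Lemma~\ref{typicalprop}. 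Since $c=\Omega(p^{3/4+\epsilon'})$, the denominator $\binom p2 H(c/p)$ is $\Omega(p^{7/4+\epsilon'}\log p)$, so the additive terms $\log(K+1)=O(\log p)$ and $\log(1-\epsilon)=O(1)$ wash out to $O(1/p)$ after dividing; collecting the surviving leading terms yields the claimed inequality $\frac{\log\lvert S\rvert}{\log\lvert{\cal T}^p_\epsilon\rvert}\le\frac{9}{10}\big(\frac{1+\tfrac{11}{9}\epsilon}{1+\epsilon}\big)-O(1/p)$.

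The bulk of this is routine bookkeeping; the one genuinely delicate step is making the edge bound $K$ in the first paragraph small enough. It is essential to use all three constraints simultaneously — the typical-set bound on $\lvert E(G)\rvert$, the density lower bound on ${\cal D}(G)$ from ${\cal V}$, and $\lvert{\cal D}(G)\rvert\ge\tfrac12(p/3)^2$ — since merely noting that $G'$ is sparse and avoids ${\cal D}(G')$, without also \emph{subtracting} the deleted mass $\lvert E(G)\cap{\cal D}(G)\rvert$, would only give $\log\lvert S\rvert\lesssim(1+\epsilon)\log\lvert{\cal T}^p_\epsilon\rvert$, which is vacuous.
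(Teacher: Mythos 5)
Your overall strategy --- upper-bounding $\lvert S\rvert$ directly by counting sparse graphs --- is genuinely different from the paper's, and it does not prove the stated bound. The paper never counts $S$ head-on: it observes that each covering graph $G_c\in S$ covers at least $\sum_{q}\binom{\lvert{\cal D}(G_c)\rvert}{q}\geq(1-\epsilon)\,2^{\frac12(p/3)^2H(c/p)(1-\epsilon)}$ distinct members of ${\cal R}^p_\epsilon$ (obtained by inserting any typical number of edges back into ${\cal D}(G_c)$), and that these covered families are disjoint because the covering map $G\mapsto G_{{\cal D}=\emptyset}$ is a well-defined function. Summing over $G_c\in S$ gives $\log\lvert{\cal T}^p_\epsilon\rvert\geq\log\lvert S\rvert+\frac12(p/3)^2H(c/p)(1-\epsilon)+\log(1-\epsilon)$, and the claim follows after dividing by $\log\lvert{\cal T}^p_\epsilon\rvert$ and using the upper bound $\log\lvert{\cal T}^p_\epsilon\rvert\leq\binom{p}{2}H(c/p)(1+\epsilon)$. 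The subtracted term there carries only a single factor of $(1-\epsilon)$, which is why the paper's final constant degrades mildly in $\epsilon$.

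Your bound, by contrast, fails quantitatively for the $\epsilon$ the paper actually needs. Carrying your own computation to the end: $K=\frac{c(1+\epsilon)p}{2}-\frac{c(1-\epsilon)p}{18}=\frac{(4+5\epsilon)cp}{9}$ and $N'=\binom{p}{2}-\frac12(p/3)^2\approx\frac{4p^2}{9}$, so $\beta=(K/N')/(c/p)=\frac{4+5\epsilon}{4}$ and $N'H(K/N')\leq\frac{(4+5\epsilon)p^2}{9}H(c/p)$; dividing by $\log\lvert{\cal T}^p_\epsilon\rvert\geq\binom{p}{2}H(c/p)+\log(1-\epsilon)$ yields a ratio of $\frac{8+10\epsilon}{9}+O(1/p)$. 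This exceeds the target $\frac{9+11\epsilon}{10(1+\epsilon)}$ for every $\epsilon$ larger than roughly $0.012$, and at $\epsilon=1/2$ (the value used in the proof of Theorem \ref{erdosrenyithm}) it equals $13/9>1$, which is vacuous and collapses the downstream sample-complexity bound. The root cause is that a direct entropy count of $S$ pays the full $(1+\epsilon)$ from the typical-set upper bound on $\lvert E(G)\rvert$ while recovering only a $(1-\epsilon)$ fraction of a $1/9$ fraction of the edges; the paper's multiplicity argument never converts $\log\lvert{\cal T}^p_\epsilon\rvert$ into an edge count at all, so no such loss occurs. A secondary flaw: the step $\lvert S\rvert\leq\binom{N'}{\leq K}$ is not justified, because the forbidden set ${\cal D}(G')$ varies with $G'$; ``determined by $G'$'' does not license counting as if all edge sets lived inside one fixed collection of $N'$ allowed pairs (you would need an explicit injection into the subsets of a single fixed $N'$-set, which you do not provide). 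That second issue only costs lower-order terms here, but the first one is fatal.
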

       \begin{proof}
             By definition of ${\cal R}^p_\epsilon$, for every $G \in {\cal R}^p_{\epsilon}$, $\lvert{\cal D} \rvert \geq \frac{1}{2} (p/3)^2$ and the number of edges is in ${\cal D}$ is at least $\frac{1}{2} (p/3)^2 (c/p)(1+\epsilon) $. And the graph that covers $G$ is $G_{{\cal D}=\emptyset}$ where all edges from ${\cal D}$ are removed if present in $G$. Let any set of $q$ edges be added to $G_{{\cal D}=\emptyset}$ among the pairs of nodes in ${\cal D}$ to form $G'$ such that $\lvert {\cal D}\rvert (c/p)(1-\epsilon) \leq q \leq \lvert {\cal D}\rvert (c/p)(1+\epsilon)$. Then, any such $G'$ belongs to ${\cal R}^p_\epsilon$. This follows from the definition of the ${\cal R}^p_\epsilon$. And $G'$ is still uniquely covered by $G_{{\cal D}=\emptyset}$. Uniqueness follows from the discussion that precedes this Lemma. For every covering graph $G_c \in S$, there are at least $ \sum \limits_{\lvert {\cal D}(G_c)\rvert (c/p)(1-\epsilon) \leq q \leq \lvert {\cal D}(G_c)\rvert (c/p)(1+\epsilon)}{ \lvert {\cal D}(G_c) \rvert \choose q}$ distinct graphs $G \in {\cal R}^p_\epsilon$ uniquely covered by $G_c$. Using these observations, we upper bound $\lvert S \rvert $ as follows:           
               \begin{align}   \label{upperS}
                                                    \log \lvert {\cal T}^{p}_\epsilon \rvert  & \geq  \log \left( \sum \limits_{G_c \in S} \lvert \{G \in {\cal R}^p_\epsilon: G~\mathrm{is~covered~by~} G_c \} \rvert \right)\nonumber \\
                                                        \hfill   & \geq \log \left( \sum \limits_{G_c \in S} \sum \limits_{\lvert {\cal D}(G_c)\rvert (c/p)(1-\epsilon) \leq q \leq \lvert {\cal D}(G_c)\rvert (c/p)(1+\epsilon)}{\lvert {\cal D}(G_c) \rvert \choose q}    \right)   \nonumber \\
                                                        \hfill   &\overset{a} \geq   \log \left( \sum \limits_{G_c \in S} \lvert {\cal N}^{\lvert {\cal D}(G_c) \rvert,(c/p)}_{\epsilon}  \rvert    \right) \nonumber \\
                                                        \hfill  &\overset{a} \geq   \log \lvert S \rvert + \log \left((1-\epsilon) 2^{\frac{1}{2} (p/3)^2 H(c/p) (1-\epsilon)}\right)
             \end{align}
      (a)- This is due to Lemma \ref{typicallemma} and the fact that $\lvert {\cal D} \rvert \geq \frac{1}{2}(p/3)^2$. Using (\ref{upperS}), we have the following chain of inequalities:
             \begin{align}
                     \frac{\log \lvert S \rvert}{ \log \lvert {\cal T}^p_\epsilon \rvert} &\overset{a}\leq 1- \frac{\log (1-\epsilon) + \frac{1}{2}(1-\epsilon)(p/3)^2 H(c/p) }{ {p \choose 2} H(c/p)(1+\epsilon) } \nonumber \\
                                                                                                                     &= 1 - O(1/p)  - \frac{(1- \epsilon)}{9(1+\epsilon)} (p/p-1) \nonumber \\
                                                                                                                     &\overset{b} \leq 1- O(1/p) - \frac{(1-\epsilon)}{10(1+\epsilon)}  \nonumber \\
                                                                                                                     & = \frac{9}{10}\left(\frac{1+\frac{11}{9} \epsilon }{1+\epsilon}\right) - O(1/p)                                                                                                                 
             \end{align}       
             (a)- Upper bound is used from Lemma \ref{typicalprop}. (b)- This is valid for $p \geq 10$.
       \end{proof}
       
       \subsection{Completing the covering argument:dense case}
       We now resume the covering argument from Section \ref{coveringstart}. Having specified the covering set $S$, let the distribution $Q(\mathbf{x}^n) = \frac{1}{\lvert S \rvert} \sum \limits_{G \in S} f_G (\mathbf{x}^n) $. Let $G_1 \in S$ be some arbitrary graph. Recalling the upper bound on $I (G;\mathbf{X}^n \lvert G \in {\cal T}^p_\epsilon)$ from  (\ref{yangbarronupper}), we have:
             \begin{align}
               I (G;\mathbf{X}^n \lvert G \in {\cal T}^p_\epsilon) &\leq  \sum \limits_{G \in {\cal T}^p_\epsilon} P_{G(p,c/p)| G \in {\cal T}^p_\epsilon}(G) D \left( f_G(\mathbf{x}^n) \lVert Q(\mathbf{x}^n)  \right)  \nonumber \\
                                                                                          & =  \sum \limits_{G \in {\cal T}^p_\epsilon} P_{G(p,c/p)| G \in {\cal T}^p_\epsilon}(G) \sum \limits_{\mathbf{x}^n} f_G(\mathbf{x}^n) \log \frac{ f_G(\mathbf{x}^n}{ \frac{1}{\lvert S \rvert} \sum \limits_{G \in S} f_G (\mathbf{x}^n) }  \nonumber \\ 
                                                                                          &   \leq \log \lvert S \rvert +  \sum \limits_{G \in ({\cal R}^p_\epsilon)^c} P_{G(p,c/p)| G \in {\cal T}^p_\epsilon}(G)   D \left( f_G(\mathbf{x}^n) \lVert f_{G_1}(\mathbf{x}^n)  \right)   \nonumber \\
                                                                                           & + \sum \limits_{G \in {\cal R}^p_\epsilon}  P_{G(p,c/p)| G \in {\cal T}^p_\epsilon}(G)   D \left( f_G(\mathbf{x}^n) \lVert f_{G_{{\cal D}= \emptyset}}(\mathbf{x}^n)  \right) \nonumber \\ \label{yangbound1}
                                                                                          & \overset{a} \leq \log \lvert S \rvert +    2 n \lambda { p \choose 2} (2 b_p+2r_c) +   n 2 \lambda  (p/3)^2 \frac{1}{1+ \frac{ \left( 1 + (\tanh (\lambda))^{2}   \right)^{\gamma} }  { \left( 1-(\tanh(\lambda))^{2} \right)^{\gamma}} }  \\ \label{yangbound2}
                                                                                          & \leq \log \lvert S \rvert +    n \left(2  \lambda { p \choose 2} (2 b_p+2r_c) +     \frac{2 \lambda  (p/3)^2}{1+ (\cosh(2\lambda))^\gamma }  \right)  \\             
                                                                                         &                                                                         
      \end{align}           
    Justifications are: 
    \begin{enumerate}[label=(\alph*)]
          	\item $ D \left( f_G(\mathbf{x}^n) \lVert f_{G_1}(\mathbf{x}^n)  \right) = n  D \left( f_G(\mathbf{x}) \lVert f_{G_1}(\mathbf{x})  \right) $ (due to independence of the $n$ samples) and $D \left( f_G(\mathbf{x}) \lVert f_{G_1}(\mathbf{x})  \right) \leq \sum \limits_{s,t \in V,s \neq t} \left(\theta_{s,t} -\theta'_{s,t} \right) \left( \mathbb{E}_{G} \left[x_s x_t \right] - \mathbb{E}_{G'}\left[x_s x_t \right] \right) \leq  \lambda (2) { p \choose 2} $. This is because there are ${ p \choose 2}$ edges and correlation is at most $1$.  Upper bound for $P(({\cal R}^p\epsilon)^c)$ is  from Lemma \ref{esttypical}. $G$ and $G_{{\cal D}=\emptyset}$ differ only in the edges present in ${\cal D}$ and irrespective of the edges in ${\cal D}$, all node pairs in ${\cal D}$ are $(2,\gamma)$ connected by definition of ${\cal D}$. Therefore, the second set of terms in (\ref{yangbound1}) is bounded using Lemma \ref{KLbetgraphs}.
    \end{enumerate}
                 
    Substituting the upper bound (\ref{yangbound2}) in (\ref{avgerrorexp}) and rearranging terms, we have the following lower bound for the number of samples needed when $c=\Omega(p^{3/4+\epsilon'}),~\epsilon'>0$:
     \begin{align}\label{samplesbound}
          n & \geq  \frac{{p \choose 2} H(c/p)(1+\epsilon)}{\left(2  \lambda { p \choose 2} (2 b_p+2r_c) +     \frac{2 \lambda  (p/3)^2}{1+ (\cosh(2\lambda))^\gamma }  \right)} \left(\frac{1}{1+\epsilon} - \frac{p_{\mathrm{avg}}}{1-a_p} - \frac{\log \lvert S \rvert}{{p \choose 2} H(c/p)(1+\epsilon)} - \frac{1}{{p \choose 2} H(c/p)(1+\epsilon)}\right) \nonumber \\
          &  \overset{a} \geq  \frac{H(c/p)(1+\epsilon)}{\left( (4\lambda p/3) \exp (- (p/36)+4\exp ( - \frac{c^2 \epsilon^2}{36} ) ) +     \frac{(4/9) \lambda  }{1+ (\cosh(2\lambda))^\gamma }  \right)} \left(\frac{1}{10} \left(\frac{1-\frac{11}{9} \epsilon}{1+\epsilon} \right)- \frac{p_{\mathrm{avg}}}{1-a_p}  - O(1/p)\right) \nonumber \\
          &  \overset{\epsilon=1/2} \geq \frac{H(c/p)(3/2)}{\left( (4\lambda p/3) \exp (- (p/36)+4\exp ( - \frac{c^2}{144} ) ) +     \frac{(4/9) \lambda  }{1+ (\cosh(2\lambda))^\gamma }  \right)} \left(\frac{1}{40} - \frac{p_{\mathrm{avg}}}{1-a_p}  - O(1/p)\right) \nonumber \\
          & \overset{\mathrm{large~}p} \geq \frac{H(c/p)(3/2)}{\left( (4\lambda p/3) \exp (- (p/36)+4\exp ( - \frac{c^2}{144} ) ) +     \frac{(4/9) \lambda  }{1+ (\cosh(2\lambda))^\gamma }  \right)} \left(\frac{1}{40} - 2p_{\mathrm{avg}}  - O(1/p)\right) \nonumber \\
           & \overset{c= \Omega(p^{3/4}), \gamma = \frac{c^2}{6p}} \geq \frac{H(c/p)(3/2)}{\left( (4\lambda p/3) \exp (- (p/36) )+4\exp ( - \frac{p^{\frac{3}{2}}}{144} )  +     \frac{(4/9) \lambda  }{1+ (\cosh(2\lambda))^{\frac{c^2}{6p}} }  \right)} \frac{1}{40}\left(1 - 80p_{\mathrm{avg}}  - O(1/p)\right) \\
     \end{align}              
   (a)- This is obtained by substituting all the bounds for $b_p$ and $r_c$ and $\log \lvert S \rvert$ from Section \ref{densecovering}.
   
   From counting arguments in \cite{anandkumar2012high}, we have the following lower bound for $G(p,c/p)$.
   \begin{lemma} \label{lem:countingarg}
      \cite{anandkumar2012high} Let $G \sim G(p,c/p)$. Then the average error $p_{\mathrm{avg}}$ and the number of samples for this random graph class must satisfy: 
        \begin{equation}
            n \geq \frac{{p \choose 2}}{p} H(c/p) \left(1-\epsilon - p_{\mathrm{avg}} (1+\epsilon)\right) - O(1/p)
        \end{equation}
        for any constant $\epsilon >0$.
   \end{lemma}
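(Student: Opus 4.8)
The plan is to apply the conditional Fano inequality of Lemma \ref{typicalprop} together with the trivial mutual-information bound $I(G;\mathbf{X}^n)\le np$, mirroring exactly the reduction already carried out in Section \ref{coveringstart}. Since $G\sim G(p,c/p)$ has each of the $\binom{p}{2}$ possible edges present independently with probability $c/p$, the raw log-cardinality $\log_2\lvert{\cal G}_{\mathrm{ER}}\rvert=\binom{p}{2}$ is much larger than $H(G)=\binom{p}{2}H(c/p)$, so a direct appeal to Lemma \ref{lem:fano_orig} gives nothing useful; instead one restricts attention to the $\epsilon$-typical set ${\cal T}^p_\epsilon$, on which the conditional entropy and the log-cardinality are comparable (items 3 and 4 of Lemma \ref{typicalprop}).

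First I would recall from Section \ref{coveringstart} that $p_{\mathrm{avg}}\ge(1-a_p)\,p'_{\mathrm{avg}}$, where $p'_{\mathrm{avg}}=\mathrm{Pr}(\hat G\ne G\mid G\in{\cal T}^p_\epsilon)$ and $a_p\to 0$ by item 1 of Lemma \ref{typicalprop}. Next I would bound the mutual information crudely: since each of the $n$ samples is a $p$-dimensional binary vector, $I(G;\mathbf{X}^n\mid G\in{\cal T}^p_\epsilon)\le H(\mathbf{X}^n\mid G\in{\cal T}^p_\epsilon)\le np$ (in bits). Substituting this, $H(G\mid G\in{\cal T}^p_\epsilon)\ge\binom{p}{2}H(c/p)$, and $\log_2\lvert{\cal T}^p_\epsilon\rvert\le\binom{p}{2}H(c/p)(1+\epsilon)$ into the conditional Fano inequality \eqref{eqn:condfano} yields
\[
p'_{\mathrm{avg}}\;\ge\;\frac{\binom{p}{2}H(c/p)-np-1}{\binom{p}{2}H(c/p)(1+\epsilon)}\;=\;\frac{1}{1+\epsilon}-\frac{np+1}{\binom{p}{2}H(c/p)(1+\epsilon)}.
\]

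Finally I would combine the two displays and solve for $n$: from $p_{\mathrm{avg}}\ge(1-a_p)p'_{\mathrm{avg}}$ one obtains $\frac{np+1}{\binom{p}{2}H(c/p)}\ge 1-\frac{(1+\epsilon)p_{\mathrm{avg}}}{1-a_p}$, hence $n\ge\frac{\binom{p}{2}}{p}H(c/p)\left(1-\frac{(1+\epsilon)p_{\mathrm{avg}}}{1-a_p}\right)-\frac1p$. Since $a_p=o(1)$, for $p$ large enough $\tfrac{1}{1-a_p}=1+O(a_p)$, so the excess over $1-\epsilon-(1+\epsilon)p_{\mathrm{avg}}$, together with the additive $\tfrac1p$, is absorbed into the $-O(1/p)$ slack (and, for the exact stated form, one may simply run the argument with a slightly smaller typical-set parameter so that the extra term is swallowed by $\epsilon$). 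I do not expect any genuine obstacle here: the only nontrivial ingredients — the size and conditional entropy of the typical set, and the conditional Fano bound itself — are quoted from \cite{anandkumar2012high} as Lemma \ref{typicalprop}, and the remainder is the routine bookkeeping of passing between conditional and unconditional error probabilities and checking that the lower-order terms collapse as claimed.
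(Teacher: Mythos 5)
Your proof is correct, and it is exactly the ``counting argument'' the paper is importing: the paper itself gives no proof of Lemma \ref{lem:countingarg}, citing it directly from \cite{anandkumar2012high}, and your reconstruction (conditional Fano on ${\cal T}^p_\epsilon$ from item 5 of Lemma \ref{typicalprop}, the trivial bound $I(G;\mathbf{X}^n\mid G\in{\cal T}^p_\epsilon)\le np$ bits, the entropy and cardinality estimates of items 3--4, and the reduction $p_{\mathrm{avg}}\ge(1-a_p)p'_{\mathrm{avg}}$ from Section \ref{coveringstart}) is precisely that argument. The only bookkeeping point is the one you already flag: your bound has $1-\tfrac{(1+\epsilon)p_{\mathrm{avg}}}{1-a_p}$ in place of $1-\epsilon-(1+\epsilon)p_{\mathrm{avg}}$, and since $\epsilon$ is a fixed constant while $(1+\epsilon)p_{\mathrm{avg}}\tfrac{a_p}{1-a_p}=o(1)$, the former dominates the latter for large $p$, so the stated form follows with the finite-$p$ discrepancy absorbed into the $O(1/p)$ term.
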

   
   Combining Lemma \ref{lem:countingarg} with $\epsilon =1/2$ and (\ref{samplesbound}), we have the result in Theorem \ref{erdosrenyithm}.

\end{document}